\theoremstyle{plain}
\DeclareMathOperator*{\argmin}{arg\,min}
\theoremstyle{plain}
\crefname{lemma}{lemma}{lemmas}
\Crefname{lemma}{Lemma}{Lemmas}
\crefname{theorem}{theorem}{theorems}
\Crefname{theorem}{Theorem}{Theorems}
\crefname{table}{table}{tables}
\Crefname{table}{Table}{Tables}
\theoremstyle{plain}
\newtheorem{theorem}{Theorem}[section]
\newtheorem{proposition}[theorem]{Proposition}
\newtheorem{lemma}[theorem]{Lemma}
\theoremstyle{definition}
\newtheorem{definition}[theorem]{Definition}
\newtheorem{assumption}[theorem]{Assumption}
\theoremstyle{remark}
\newtheorem{problem}{Problem}
\title{R-GTD: A Geometric Analysis of Gradient Temporal-Difference Learning in Singular Regimes}
\author{%
  Hyunjun Na \quad Donghwan Lee\thanks{Corresponding author.} \\
  School of Electrical Engineering, KAIST\\
  291 Daehak-ro, Yuseong-gu, Daejeon 34141, Korea\\
  \texttt{\{nhjun,donghwan\}@kaist.ac.kr} \\
}
\begin{document}

\maketitle

\begin{abstract}
Gradient temporal-difference (GTD) learning algorithms are widely used for off-policy policy evaluation with function approximation. 
However, existing convergence analyses rely on the restrictive assumption that the so-called feature interaction matrix (FIM) is nonsingular. In practice, the FIM can become singular and leads to instability or degraded performance. While some prior works have applied regularization to relax the nonsingularity assumption, their theoretical guarantees inevitably rely on other restrictive conditions. In this paper, we propose a regularized optimization objective by reformulating the mean-square projected Bellman error minimization. This formulation naturally yields a regularized GTD algorithms, referred to as R-GTD, which guarantees convergence to a unique solution even when the FIM is singular. We conduct a geometric analysis to establish theoretical convergence guarantees and explicit error bounds for the proposed method, and validate its effectiveness through empirical experiments.
\end{abstract}

\section{Introduction}
Temporal-difference (TD) learning~\citep{sutton1988learning} constitutes a foundational approach for policy evaluation in reinforcement learning (RL)~\citep{sutton1998reinforcement}. Despite its effectiveness, it is now well understood that instability and divergence may arise when TD learning is combined with function approximation and bootstrapping in an off-policy setting.
This combination, commonly referred to as the \emph{deadly triad}, has been identified as a fundamental source of instability in RL \citep{tsitsiklis1996analysis}.
To resolve the convergence issue, gradient temporal-difference (GTD) was introduced as an alternative to TD learning~\citep{sutton2008convergent}. Unlike TD learning, which relies on semi-gradient updates, GTD is derived from the true stochastic gradient of a well-defined objective function known as the mean-square Bellman error (MSBE), whose minimization is equivalent to solving the policy evaluation problem by enforcing the Bellman equation in a least-squares sense. Unlike the TD learning, GTD is known to guarantee convergence under off-policy training with linear function approximation. 
However, it still faces limitations with linear function approximation. 
Because the Bellman operator typically maps a value function outside the class of functions representable by the linear function approximation. 
As a result, even the optimal solution of the MSBE may exhibit a nonzero residual Bellman error. To address this issue, GTD2~\citep{sutton2009fast} considers the projected Bellman equation (PBE), where the Bellman update is projected onto the feature space. Specifically, GTD2 minimizes the mean-square projected Bellman error (MSPBE), which measures the squared residual of the PBE.
By enforcing the Bellman equation within the feature space, the projected Bellman error can be driven to zero, and it allows GTD2 to obtain a more accurate approximation of the value function that can be represented by the chosen linear features.

From an optimization perspective, the MSPBE objective admits an equivalent
saddle-point formulation, under which GTD2 can be interpreted as solving a
convex--concave saddle-point problem~\citep{macua2014distributed}.
This perspective allows the use of analytical tools such as Fenchel duality~\citep{boyd2004convex} and Lagrangian methods~\citep{dai2018sbeed}. Recent work in \citet{lee2022new} has extended this perspective by proposing new saddle-point formulations and further GTD2 variants that exploit these structural insights. 
Alongside these perspectives, \citet{ghiassian2020gradient} introduces regularization to improve robustness, and other general regularization techniques have been proposed to stabilize various GTD objectives~\citep{yu2017convergence} or to ensure stability of the critic in off-policy control settings~\citep{zhang2020provably}.
However, existing GTD methods in the literature typically require a nonsingularity assumption on a certain matrix which is associated with feature interaction. Under this assumption, the solution to the PBE is well defined and unique, which in turn ensures the existence of a unique minimizer of the MSPBE. In this paper, for notational convenience, 
we refer to this matrix as the
\emph{feature interaction matrix} (FIM). While some prior works have applied regularization to handle cases where FIM is singular, their theoretical guarantees inevitably rely on other highly restrictive conditions.
For instance, they either  restrict the optimal solution to the origin~\citep{yu2017convergence}, or resort to the global non-singularity assumption to derive error bounds~\citep{zhang2020provably}.
Moreover, mere convergence leaves the geometric link to the original GTD2 solution completely unexplored. To further elucidate this theoretical gap, a detailed mathematical comparison with these regularization-based methods is provided in \Cref{sec:comparison_tdrc}.

In this paper, we propose regularized gradient temporal-difference (R-GTD), which guarantees convergence to a unique solution even when the FIM is singular without introducing additional conditions. Furthermore, we provide a geometric analysis that explicitly characterizes the limiting behavior of the R-GTD solution within the GTD2 affine solution set, which deepens the theoretical understanding of GTD.

Finally, the main contributions are summarized as follows:
\begin{enumerate}
\item[(a)] We propose R-GTD, which removes the nonsingularity assumption on the FIM required for the convergence analysis. While prior regularized methods handle singular settings by introducing other restrictive conditions, R-GTD establishes rigorous convergence guarantees without requiring any additional assumptions.

\item[(b)] We study the theoretical properties of R-GTD from two complementary perspectives. First, we analyze the convergence of R-GTD toward the saddle point of the associated convex--concave formulation. Second, we provide a geometric analysis of the singular regime. We establish explicit error bounds between the R-GTD solution and the true projected solution in both nonsingular and singular settings, which precisely characterize the limiting behavior of the R-GTD solution within the null space of the FIM.

\item[(c)] We empirically validate the proposed algorithm in a representative setting where the FIM is strictly singular. R-GTD converges consistently to the unique saddle-point solution with more stable behavior. Additional experiments in \Cref{app:additional-exp} extend this evaluation to diverse scenarios.
\end{enumerate}

\section{Preliminaries}
\subsection{Markov decision process}
A Markov decision process (MDP) is defined as a tuple 
${\mathcal M} = ({\mathcal S}, {\mathcal A}, P, r, \gamma)$, 
where ${\mathcal S}$ and ${\mathcal A}$ denote the finite state and action spaces, respectively. 
The transition probability $P(s'|s,a)$ specifies the probability of moving from state $s$ to $s'$ 
when action $a$ is taken, and 
$r : {\mathcal S} \times {\mathcal A} \times {\mathcal S} \to {\mathbb R}$ 
is the reward function. 
The discount factor $\gamma \in (0,1)$ determines the relative importance of future rewards. When an agent in state $s$ selects action $a$, it transits to the next state $s'$ 
with probability $P(s'|s,a)$ and receives an immediate reward $r(s,a,s')$. 
A stochastic policy $\pi : {\mathcal S} \times {\mathcal A} \to [0,1]$ 
specifies the probability $\pi(a|s)$ of selecting action $a$ at state $s$. 
Let $P^\pi$ denote the state transition matrix induced by policy $\pi$ and
$D$ denote a diagonal matrix with positive diagonal elements
$d(s)$ for $s \in \mathcal{S}$. Here, $d$ can be any state distribution satisfying $d(s) > 0$ for all
$s \in \mathcal{S}$.
As a particular case, $d$ may be chosen as the stationary state distribution
$d^{\beta}$ induced by a behavior policy $\beta$.
The expected reward under policy $\pi$ at state $s$ is denoted by $R^\pi(s)$. The infinite-horizon discounted value function associated with $\pi$ is defined as
\[
J^\pi(s) := {\mathbb E}\!\left[
\sum_{k=0}^{\infty} \gamma^k r(s_k, a_k, s_{k+1})
\,\middle|\, s_0 = s,\; \pi
\right],
\]
where ${\mathbb E}[\cdot]$ denotes the expectation with respect to the 
state--action trajectories generated by $\pi$. Given a set of basis (feature) functions 
$\phi_1, \ldots, \phi_q : {\mathcal S} \to {\mathbb R}$, 
we define the \emph{feature matrix} 
$\Phi \in {\mathbb R}^{|{\mathcal S}| \times q}$ 
whose $s$-th row is given by 
$\phi(s) := [\phi_1(s)\; \cdots\; \phi_q(s)]$. Here, $q$ is the number of
feature functions and $|{\cal S}|$ denote the cardinality of the state. Throughout this paper, we assume that $\Phi$ has full column rank. Under this setting, the \emph{policy evaluation problem} aims to estimate the value function 
$J^\pi$ corresponding to a fixed policy $\pi$.

\subsection{Saddle-point problem}\label{sec:saddle-point-problem}

We briefly review the saddle-point problem
\citep{nedic2009subgradient,qu2018exponential}.
Let $L:{\mathbb R}^n \times {\mathbb R}^n \to {\mathbb R}$ be a convex--concave function.
We begin with the standard definition of a saddle point.
\begin{definition}[Saddle point]
A pair $(\theta^*, \lambda^*) \in {\mathbb R}^n \times {\mathbb R}^n$
is a saddle point of $L$ if
\[
L(\theta^*, \lambda) \le L(\theta^*, \lambda^*) \le L(\theta, \lambda^*)
\quad \forall (\theta, \lambda).
\]
\end{definition}

When $L$ is convex in $\theta$ and concave in $\lambda$, the saddle-point
condition is equivalent to the following min--max problem:
\begin{problem}[Min--max problem]\label{problem:saddle-point}
\[
\min_{\theta \in {\mathbb R}^n} \max_{\lambda \in {\mathbb R}^n} L(\theta, \lambda)
=
\max_{\lambda \in {\mathbb R}^n} \min_{\theta \in {\mathbb R}^n} L(\theta, \lambda).
\]
\end{problem}
This equivalence requires the convex--concave structure of $L$ together with mild regularity conditions such as coercivity
\citep{sion1958general,rockafellar2015convex}.
Under these conditions, a pair $(\theta^*, \lambda^*)$ is a saddle point if and
only if it satisfies the first-order stationarity conditions
\begin{equation}\label{eq:stationary-condition}
\nabla_\theta L(\theta^*, \lambda^*) = 0, \; 
\nabla_\lambda L(\theta^*, \lambda^*) = 0.
\end{equation}
This characterization motivates gradient-based methods for finding such saddle points. 
A standard approach is the primal--dual gradient dynamic (PDGD) method \citep{nedic2009subgradient}, whose discrete-time iteration is given as
\begin{align*}
\theta_{k+1} = \theta_k - \alpha_k \nabla_\theta L(\theta_k, \lambda_k),\quad
\lambda_{k+1} = \lambda_k + \alpha_k \nabla_\lambda L(\theta_k, \lambda_k).
\end{align*}
Moreover, we can also consider a continuous-time counterpart of PDGD given as
\begin{align}\label{continuous-pdgd}
\dot{\theta}_t = -\nabla_\theta L(\theta_t, \lambda_t),\;
\dot{\lambda}_t = \nabla_\lambda L(\theta_t, \lambda_t),
\end{align}
which converges to a saddle point under standard assumptions
\citep{nedic2009subgradient,qu2018exponential}.
When only stochastic approximations of the gradients are available, the corresponding stochastic formulation is given as follows:
\begin{align*}
\theta_{k+1} = \theta_k - \alpha_k \nabla_\theta \bigl(L(\theta_k, \lambda_k) + v_k\bigr),\quad
\lambda_{k+1} = \lambda_k + \alpha_k \nabla_\lambda \bigl(L(\theta_k, \lambda_k) + w_k\bigr),
\end{align*}
where $(v_k, w_k) \in {\mathbb R}^n \times {\mathbb R}^n$ are i.i.d.\ zero-mean noise terms. We refer to this version as stochastic
PDGD. The stochastic PDGD is also known to converge to a saddle point in a probabilistic sense~\citep{chen2016stochastic,wang2016online}.

\subsection{Review of GTD2}
In this section, we provide a brief overview of the GTD2 algorithm
introduced in~\citet{sutton2009fast} which tries to solve the policy evaluation problem.
In the policy evaluation problem with linear function approximation, the Bellman equation $\Phi \theta = R^{\pi} + \gamma P^{\pi} \Phi \theta$ generally does not admit a solution within the feature subspace.
To address this issue, GTD2 considers the projected Bellman equation (PBE):
\begin{equation}\label{PBE}
\Phi \theta = \Pi_{\mathcal{R}(\Phi)} \bigl(R^{\pi} + \gamma P^{\pi} \Phi \theta\bigr),
\end{equation}
which seeks a solution within the range space of $\Phi$. Here, $\Pi_{\mathcal{R}(\Phi)}$ denotes the projection onto the range space of
$\Phi$, denoted by $\mathcal{R}(\Phi)$: $\Pi_{\mathcal{R}(\Phi)}(x):=\argmin_{x'\in \mathcal{R}(\Phi)}
\|x-x'\|_{D^{\beta}}^2$. 
GTD2 formulates this problem by minimizing the mean-square projected Bellman error (MSPBE), whose minimizer coincides with the solution of the PBE. 
It then seeks this solution in an incremental and model-free manner via stochastic gradient updates:
\begin{problem}\label{problem:2}
Solve for $\theta \in {\mathbb R}^q$ the optimization
\begin{align*}
\min_{\theta\in {\mathbb R}^q} {\rm MSPBE}(\theta)
:= \frac{1}{2}\|
\Pi_{\operatorname{\cal{R}}(\Phi)} (R^{\pi} + \gamma P^{\pi} \Phi \theta)-\Phi \theta \|_{D^{\beta}}^2.
\end{align*}
\end{problem}

Here, $\|x\|_{D^\beta}$ is defined as $\sqrt{x^\top D^\beta x}$ for any positive-definite matrix $D^\beta$. Note that minimizing the objective means minimizing the error of the PBE in \eqref{PBE}
with respect to $ \|\cdot\|_{D^\beta}$. Moreover, in the objective of~\Cref{problem:2}, $d^{\beta}$ depends on the behavior policy, $\beta$, while $P^{\pi}$ and $R^{\pi}$ depend on the target policy, $\pi$, that we want to evaluate. This structure allows us to obtain an off-policy learning algorithm through the importance sampling~\citep{precup2001off} or sub-sampling techniques~\citep{sutton2008convergent}.

Although GTD2 is a promising algorithm with convergence guarantees
under off-policy training, its theoretical analysis relies on the following assumption:
\begin{assumption}\label{assumption:2}
$\Phi ^\top D^{\beta} (\gamma P^\pi   - I)\Phi $ is nonsingular, where $I$ denotes the identity matrix with an appropriate dimension.
\end{assumption}
Note that~\Cref{assumption:2} is common in the literature, and is adopted in~\citet{sutton2008convergent,sutton2009fast,ghiassian2020gradient} for convergence of GTD methods. 
In this paper, we refer to $\Phi^\top D^{\beta} (\gamma P^\pi - I)\Phi$ as the \emph{feature interaction matrix} (FIM), and we say the problem is \emph{ill-conditioned} when the FIM is singular or nearly singular. Under~\Cref{assumption:2}, the MSPBE minimization in~\Cref{problem:2}
has a unique solution. In particular, the GTD2 solution is given by 
\begin{equation}\label{eqn:gtd2_solution}
\theta_{\mathrm{GTD2}}
= -\bigl(\Phi^\top D^{\beta}(\gamma P^{\pi} - I)\Phi\bigr)^{-1}
\Phi^\top D^{\beta} R^{\pi},
\end{equation} 
as shown in~\citet{macua2014distributed}, which coincides with the unique solution of the PBE under \Cref{assumption:2}. 
More recently, to provide a deeper optimization-theoretic perspective,
\citet{lee2022new} showed that the MSPBE minimization 
in \Cref{problem:2} can be equivalently expressed as the following
constrained optimization problem: 
\begin{problem}\label{problem:3}
Solve for $\theta  \in {\mathbb R}^q$ the optimization
\begin{align*}
&\min_{\theta  \in {\mathbb R}^q } 0\quad {\rm s.t.}\quad 0 = \Phi^\top D^{\beta} (R^\pi   + \gamma P^\pi  \Phi \theta  - \Phi \theta ).\label{eq:2}
\end{align*}
\end{problem}
It can be easily proved that the equality constraint in \Cref{problem:3} is equivalent to the PBE in \eqref{PBE}. Specifically, this constraint can be rewritten as $\Phi^\top D^\beta (I - \gamma P^\pi) \Phi \theta = \Phi^\top D^\beta R^\pi$, which leads to the same solution as the GTD2 algorithm in \eqref{eqn:gtd2_solution}. This constrained formulation not only characterizes the PBE solution but also naturally admits an equivalent saddle-point representation.




\section{Regularized GTD}
Before diving into the derivation of our algorithm, we provide a brief roadmap of the optimization problems introduced in this section and the next. We first propose a regularized optimization objective (\Cref{problem:4}), which establishes the foundation for both the proposed algorithm and its saddle-point convergence analysis. To derive the closed-form solution of the proposed algorithm, we then recast this objective into a min-max saddle-point formulation (\Cref{problem:rgtd-lagrangian}). Finally, to facilitate the analysis of the error bounds, we present an equivalent unconstrained formulation (\Cref{problem:6}). 

Following this framework, we reformulate the policy evaluation problem by extending the constrained optimization in \Cref{problem:3}. First, we observe that adding a quadratic term $\frac{1}{2}\theta^\top \Phi^\top D^\beta \Phi \theta$ to the objective of \Cref{problem:3} does not shift the optimal solution $\theta_{\text{GTD2}}$ as long as the FIM is nonsingular (\Cref{assumption:2}). This term serves as a base for regularization without altering the solution. 
However, when the FIM is singular, the added quadratic term
selects a particular solution among them, thereby
altering the optimal solution.
Next, to ensure the problem remains well-defined even when FIM is singular, we introduce an auxiliary variable $w \in \mathbb{R}^q$ into the constraint. This $w$ acts as a slack variable that allows for a small error in the PBE.
Finally, to prevent this error from becoming large, we incorporate a penalty term $\frac{c}{2}w^\top \Phi^\top D^\beta \Phi w$ into the objective, where $c > 0$ is a regularization coefficient. This leads to the following regularized optimization problem:
\begin{problem}\label{problem:4}
Solve for $\theta,w  \in {\mathbb R}^q $ the optimization
\begin{equation}\label{eq:problem-5}
\begin{aligned}
&\min_{\theta,w  \in {\mathbb R}^q } \frac{c}{2}w^\top \Phi ^\top D^{\beta} \Phi w+\frac{1}{2}\theta ^\top \Phi ^\top D^{\beta} \Phi \theta\\
&{\rm s.t.}\quad 0 = \Phi ^\top D^{\beta} (R^\pi   + \gamma P^\pi  \Phi \theta  - \Phi \theta + \Phi w).
\end{aligned}
\end{equation}
\end{problem}
Here, we allow a weighted error term 
$\Phi w$ in the constraint compared to \Cref{problem:3}, which effectively introduces a small error in the PBE in \eqref{PBE}.
The variable $w$ represents an auxiliary variable capturing the error of the
PBE, and $c>0$ is a regularization parameter that
controls its penalty. Note that the regularization parameter $c$ controls the trade-off between the constraint satisfaction and the objective penalty. A detailed practical guidelines for selecting $c$ in practice are provided in \Cref{sec:sensitivity_guideline}.

Based on this formulation, we derive the R-GTD algorithm in this paper. The first step in this derivation is to recast~\Cref{problem:4} as a min–max saddle-point problem in \Cref{problem:saddle-point}, for which we now introduce the corresponding Lagrangian function 
\begin{align}\label{eqn:r-gtd-lagrangian}
L(\theta , w, \lambda ) &= \frac{c}{2}w ^\top \Phi ^\top D^{\beta} \Phi w  + \frac{1}{2}\theta ^\top \Phi ^\top D^{\beta} \Phi \theta  + \lambda ^\top \Phi ^\top D^{\beta} (R^\pi   + \gamma P^\pi  \Phi \theta  - \Phi \theta +\Phi w),
\end{align}
where we can prove that $L(\theta,w,\lambda)$ is strongly convex in the primal variable 
$\theta$ and $w$, concave in the dual variable $\lambda$, and coercive due to the positive definite weighting matrix. Therefore, by \Cref{lem:saddle-existence-simple} of Appendix, the problem satisfies the structural conditions of the min--max formulation in \Cref{problem:saddle-point} and admits a saddle point. 
Moreover, due to the strong convexity of $L(\theta,w,\lambda)$ in the primal
variables $(\theta,w)$ and its concavity in the dual variable $\lambda$,
the min--max problem admits a unique
saddle-point solution.
The corresponding min-max saddle-point problem is summarized below for convenience.
\begin{problem}\label{problem:rgtd-lagrangian}
Solve for $\theta, w, \lambda \in \mathbb{R}^q$ the optimization
\begin{align*}
\min_{\theta, w  \in {\mathbb R}^q } \max_{\lambda  \in {\mathbb R}^q } L(\theta , w, \lambda ): &=  \frac{c}{2}w ^\top \Phi ^\top D^{\beta} \Phi w  + \frac{1}{2}\theta ^\top \Phi ^\top D^{\beta} \Phi \theta  \nonumber+ \lambda ^\top \Phi ^\top D^{\beta} \\&\quad \times (R^\pi  + \gamma P^\pi  \Phi \theta  - \Phi \theta\nonumber+\Phi w).
\end{align*}
\end{problem}

Solving the stationary conditions in \eqref{eq:stationary-condition} yields a closed-form characterization
of the optimal solution $(\theta_{\mathrm{RGTD}}, \lambda_{\mathrm{RGTD}}, w_{\mathrm{RGTD}})$
to the min-max problem in \Cref{problem:rgtd-lagrangian}.
For clarity, the explicit expressions of the optimal solution are summarized below.
\begin{proposition}\label{prop:1}
Let $(\theta_{\mathrm{RGTD}}, \lambda_{\mathrm{RGTD}}, w_{\mathrm{RGTD}})$
denote the optimal solution to the min-max problem in \Cref{problem:rgtd-lagrangian}.
Then the optimal solution admits the following closed-form expressions:
\begin{align}
\theta_{\mathrm{RGTD}}
&=
\underbrace{
-\bigl(\Phi^\top D^{\beta}(\gamma P^\pi - I)\Phi\bigr)^{-1}
\Phi^\top D^{\beta} R^\pi
}_{\theta_{GTD2}}
\underbrace{
-\bigl(\Phi^\top D^{\beta}(\gamma P^\pi - I)\Phi\bigr)^{-1}
\Phi^\top D^{\beta}\Phi \, w_{\mathrm{RGTD}}
}_{\text{error}},
\label{eq:theta-rgtd}
\\
\lambda_{\mathrm{RGTD}}
&=
-\bigl(\Phi^\top D^{\beta}(\gamma P^\pi - I)\Phi\bigr)^{-\top}
\Phi^\top D^{\beta}\Phi \, \theta_{\mathrm{RGTD}},
\label{eq:lambda-rgtd}
\\
w_{\mathrm{RGTD}}
&=
-\frac{1}{c}\,\lambda_{\mathrm{RGTD}}.
\label{eq:w-rgtd}
\end{align}
\end{proposition}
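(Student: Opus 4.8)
The plan is to reduce the proposition to a routine linear-algebra computation. Since the Lagrangian $L(\theta,w,\lambda)$ has already been shown to be strongly convex in the primal pair $(\theta,w)$, concave in the dual $\lambda$, and coercive, the discussion preceding \Cref{problem:rgtd-lagrangian} guarantees that the min--max problem possesses a unique saddle point, and that this saddle point is characterized by the first-order stationarity conditions in \eqref{eq:stationary-condition}, extended to the three variables: $\nabla_\theta L = 0$, $\nabla_w L = 0$, and $\nabla_\lambda L = 0$. Thus it suffices to compute these three gradients and solve the resulting linear system. To streamline the algebra I would introduce the shorthand $C := \Phi^\top D^\beta \Phi$, $A := \Phi^\top D^\beta(\gamma P^\pi - I)\Phi$, and $b := \Phi^\top D^\beta R^\pi$, so that the constraint term reads $\lambda^\top(b + A\theta + Cw)$ and $L = \tfrac{c}{2}w^\top C w + \tfrac12 \theta^\top C\theta + \lambda^\top(b + A\theta + Cw)$.

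First I would differentiate. The key point to watch is that $A$ is \emph{not} symmetric, since $P^\pi$ is generally non-symmetric; hence the $\theta$-gradient of the bilinear coupling produces a transpose. Carrying this out gives
\begin{align*}
\nabla_\theta L &= C\theta + A^\top \lambda, \\
\nabla_w L &= cC w + C\lambda, \\
\nabla_\lambda L &= b + A\theta + Cw.
\end{align*}
Setting each to zero yields the three stationarity equations that I then solve in turn. From $\nabla_w L = 0$ I factor $C(cw+\lambda)=0$; because $\Phi$ has full column rank and $D^\beta$ is positive definite, $C$ is positive definite and hence invertible, so $cw + \lambda = 0$, giving $w_{\mathrm{RGTD}} = -\tfrac1c \lambda_{\mathrm{RGTD}}$, which is \eqref{eq:w-rgtd}. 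From $\nabla_\theta L = 0$ I rearrange to $A^\top \lambda = -C\theta$ and invert $A^\top$ (legitimate under \Cref{assumption:2}, i.e.\ when the FIM is nonsingular) to obtain $\lambda_{\mathrm{RGTD}} = -A^{-\top} C\,\theta_{\mathrm{RGTD}}$, which is \eqref{eq:lambda-rgtd}. Finally, from $\nabla_\lambda L = 0$ I solve $A\theta = -(b + Cw)$ and invert $A$ to get $\theta_{\mathrm{RGTD}} = -A^{-1}b - A^{-1}C\,w_{\mathrm{RGTD}}$; recognizing $-A^{-1}b = -(\Phi^\top D^\beta(\gamma P^\pi - I)\Phi)^{-1}\Phi^\top D^\beta R^\pi = \theta_{\mathrm{GTD2}}$ from \eqref{eqn:gtd2_solution} recovers exactly the decomposition in \eqref{eq:theta-rgtd} into the GTD2 term and the error term.

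The calculation itself is elementary, so I do not expect a genuine obstacle in the derivation. The two points requiring care are that the non-symmetry of $A$ must be tracked so that the transpose in \eqref{eq:lambda-rgtd} is not lost, and that the three closed forms are mutually coupled in a cyclic fashion ($\theta$ through $w$, $\lambda$ through $\theta$, and $w$ through $\lambda$). Here I would note that back-substitution closes this loop into the single equation $(I + \tfrac1c A^{-1}C A^{-\top}C)\,\theta_{\mathrm{RGTD}} = -A^{-1}b$, whose solvability and uniqueness are already guaranteed by the uniqueness of the saddle point established earlier, so no circularity arises. It is also worth flagging that these expressions invoke $A^{-1}$ and therefore describe the solution in the nonsingular (FIM-invertible) regime; the genuinely singular case, where the regularization is essential, is the one handled separately by the subsequent error-bound analysis.
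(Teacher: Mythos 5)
Your proposal is correct and follows essentially the same route as the paper: compute the three stationarity conditions of the Lagrangian, use positive definiteness of $\Phi^\top D^\beta \Phi$ to extract $w = -\tfrac{1}{c}\lambda$, use nonsingularity of the FIM (\Cref{assumption:2}) to solve the $\theta$-gradient equation for $\lambda$, and solve the constraint equation for $\theta$ to recover the GTD2-plus-error decomposition. Your added remarks on tracking the transpose of the non-symmetric FIM and on the cyclic coupling closing into a single well-posed linear system are sound refinements, not deviations, so there is nothing further to reconcile.
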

The proof of \Cref{prop:1} is given in \Cref{app:prop-1}. 
The characterization in \Cref{prop:1} reveals that the R-GTD solution
$\theta_{\mathrm{RGTD}}$ does not exactly coincide with the GTD2 solution
$\theta_{\mathrm{GTD2}}$ for finite values of the parameter $c$.
In particular, compared to $\theta_{\mathrm{GTD2}}$ in \eqref{eqn:gtd2_solution}, the $\theta_{\mathrm{RGTD}}$
contains an additional bias term
$-(\Phi^\top D^{\beta}(\gamma P^\pi - I)\Phi)^{-1}
\Phi^\top D^{\beta}\Phi\, w_{\mathrm{RGTD}},
$ which originates from the auxiliary variable $w_{\mathrm{RGTD}}$
introduced in \eqref{eq:problem-5}.
Moreover, by direct manipulations of \eqref{eq:theta-rgtd}--\eqref{eq:w-rgtd}, one can easily prove that $\theta_{\mathrm{RGTD}}$ is explicitly given by the following equation:
\begin{equation}\label{eq:rgtd-closed-form}
\begin{aligned}
\theta_{\mathrm{RGTD}}
&=
-\Bigl(
(\Phi^\top D^{\beta}(\gamma P^\pi - I)\Phi)^\top
(\Phi^\top D^{\beta}\Phi)^{-1}
(\Phi^\top D^{\beta}(\gamma P^\pi - I)\Phi)
+ \tfrac{1}{c}\,\Phi^\top D^{\beta}\Phi
\Bigr)^{-1}
\\
&\quad \times
(\Phi^\top D^{\beta}(\gamma P^\pi - I)\Phi)^\top
(\Phi^\top D^{\beta}\Phi)^{-1}
\Phi^\top D^{\beta} R^\pi .
\end{aligned}
\end{equation}
We can see that as $c \to \infty$, the solution \eqref{eq:rgtd-closed-form} converges to the GTD2 solution in \eqref{eqn:gtd2_solution} provided that the FIM, $\Phi^\top D^{\beta}(\gamma P^\pi - I)\Phi$, is nonsingular. 
When FIM is singular, then still the term inside $(\Phi^\top D^{\beta}(\gamma P^\pi - I)\Phi)
+ \tfrac{1}{c}\,\Phi^\top D^{\beta}\Phi
)^{-1}$ is nonsingular. Therefore, the R-GTD solutions is still well defined in this case. Consequently, under \Cref{assumption:2},
letting $c \to \infty$ eliminates the effect of the auxiliary variable,
yielding
$
\theta_{\mathrm{RGTD}} \rightarrow\theta_{\mathrm{GTD2}},
w_{\mathrm{RGTD}} \to 0,
$ and $
\lambda_{\mathrm{RGTD}} \to 0.
$
 
Having characterized the optimal solution and its properties, we now derive the practical update rules for R-GTD. As discussed earlier, a standard approach for solving such convex--concave
saddle-point problems is to apply the PDGD.
Accordingly, we first consider the continuous-time PDGD in \eqref{continuous-pdgd} associated with the
Lagrangian $L(\theta,w,\lambda)$ in \eqref{eqn:r-gtd-lagrangian}, given by
\begin{align}\label{eq:continuous-rgtd}
\dot{\theta}_t 
    &= -\nabla_\theta L(\theta_t, w_t, \lambda_t)\nonumber
     = -\Phi ^\top D^{\beta}\Phi\, \theta_t 
       - \big((\gamma P^\pi\Phi) ^\top D^{\beta}\Phi 
         - \Phi ^\top D^{\beta}\Phi\big)\lambda_t, \nonumber\\[2mm]
\dot{w}_t 
    &= -\nabla_w L(\theta_t, w_t, \lambda_t)
     = -c\, \Phi ^\top D^{\beta}\Phi\, w_t 
       - \Phi ^\top D^{\beta}\Phi\, \lambda_t, \nonumber\\[2mm]
\dot{\lambda}_t 
    &= \nabla_\lambda L(\theta_t, w_t, \lambda_t)
     = \Phi ^\top D^{\beta}
       (R^\pi + \gamma P^\pi \Phi \theta_t - \Phi \theta_t
       + \Phi w_t).
\end{align}
Its discrete-time counterpart (by Euler discretization) is 
\begin{align}\label{eq:discrete-rgtd}
\theta _{k+1} =& \theta _k +\alpha_k(-\Phi ^\top D^{\beta}\Phi \theta_{k}-((\gamma P^\pi\Phi) ^\top D^{\beta}\Phi\nonumber-\Phi ^\top D^{\beta}\Phi)\lambda _k),\nonumber\\
w_{k+1} =& w_k + \alpha_k(-c \Phi ^\top D^{\beta}\Phi w_k -\Phi ^\top D^{\beta}\Phi\lambda_k),\nonumber \\ \lambda_{k+1} =& \lambda_k + \alpha_k \Phi ^\top D^{\beta}(R^\pi+\gamma P^\pi  \Phi \theta _k- \Phi \theta _k +\Phi w_k),
\end{align}
where the scalar $\alpha_k>0$ denotes the step size at iteration $k$ and $e_s \in \mathbb{R}^{|\mathcal{S}|}$ denotes the standard basis vector associated with
state $s$, whose $s$-th entry is one and all other entries are zero. 
Considering $s_k \,\sim\, d^{\beta}
$, $a_k \,\sim\, \pi(\cdot\,\vert\, s_k)$, and
$s_k' \,\sim\, P(\cdot\,\vert\, s_k, a_k)$,
The corresponding stochastic PDGD can then be obtained as follows:
\begin{align*}
\theta _{k + 1}  =& \theta _k  + \alpha _k 
(-\Phi ^\top e_{s_k}e_{s_k}^\top\Phi \theta_{k}-((\gamma e_{s_k}e_{s_k'}\Phi)^\top e_{s_k}e_{s_k}^\top\Phi-\Phi ^\top e_{s_k}e_{s_k}^\top\Phi )\lambda_k),\\w_{k+1}=&w_k+\alpha_k(-c \Phi ^\top e_{s_k}e_{s_k}^\top\Phi w_k - \Phi ^\top e_{s_k}e_{s_k}^\top\Phi\lambda_k ),\\
\lambda _{k + 1}  =& \lambda _k  + \alpha _k \Phi ^\top e_{s_k } e_{s_k }^\top (e_{s_k } r(s_k ,a_k ,s_k ')+ \gamma e_{s_k } e_{s_k '}^\top \Phi \theta _k  - \Phi \theta _k + \Phi w _k ),
\end{align*}
where $s_k$ is the state sampled at iteration $k$, and $s_k'$ is the next state observed after taking an action $a_k$ according to the behavior policy $\beta$.
The resulting stochastic PDGD updates constitute the R-GTD algorithm,
summarized in \Cref{alg:rgtd-alg}, which incorporates importance sampling
for off-policy data~\citep{precup2001off}.
\begin{algorithm}[tbp]
\caption{R-GTD}
\label{alg:rgtd-alg}
\begin{algorithmic}[1]

\STATE Set the step-size $(\alpha_k)_{k=0}^\infty$ and regularization parameter $c$. Initialize $(\theta _0, w _0, \lambda_0 )$.

\FOR{$k \in \{0,\ldots\}$}

\STATE Observe $s_k \sim d^{\beta}$, $a_k \sim \beta(\cdot|s_k)$, and $s_k'\sim P(\cdot |s_k,a_k)$, $r_{k+1} :=r(s_k,a_k,s_k')$.
\STATE Update parameters according to
\begin{align*}
&\theta _{k + 1}  = \theta _k  + \alpha _k ((\phi_k  - \gamma \rho _k \phi_{k}')(\phi_k ^\top \lambda _k ) - \phi_k (\phi_k ^\top \theta_k )),\\&w_{k+1}=w_k+\alpha_k(-c \phi_k ^\top w_k - \phi_k ^\top \lambda_k)\phi_k,\\
&\lambda_{k+1}=\lambda_k +\alpha_k (\delta_k +\phi_k^\top w_k)\phi_k,
\end{align*}
where $\phi_k:=\phi(s_k),\phi_{k}':=\phi(s_{k}')$, $\rho _k : = \frac{{\pi (a_k |s_k )}}{{\beta (a_k |s_k )}}$, and $\delta_k =\rho _k r_{k+1} +\gamma \rho _k (\phi_{k}')^\top \theta_k -\phi_k^\top \theta_k$.
\ENDFOR
\end{algorithmic}
\label{algo:GTD2}
\end{algorithm}
In particular, it includes the ratio $\rho_k := \pi(a_k \,\vert\, s_k) / \beta(a_k \,\vert\, s_k)
$ to correct for the mismatch between the behavior policy $\beta$, which generates the data, and the target policy $\pi$, whose value function is
being evaluated.
Intuitively, the importance sampling ratio reweights each sample so that,
in expectation, the update direction corresponds to that of on-policy learning
under $\pi$, as commonly done in off-policy temporal-difference methods
\citep{precup2001off,sutton2008convergent}. 

Finally, to facilitate the convergence analysis in the next
section, we introduce an alternative but equivalent unconstrained optimization formulation of \Cref{problem:4} as follows:
\begin{problem}\label{problem:6}
Solve for $\theta \in \mathbb{R}^q$ the optimization
\begin{align}
\min_{\theta \in \mathbb{R}^q}\;
\frac{c}{2}\lVert \Pi_{\mathcal{R}(\Phi)}(R^\pi + \gamma P^\pi \Phi\theta - \Phi\theta)\rVert_{D^{\beta}}^2
+ \frac{1}{2}\lVert \Phi\theta\rVert_{D^{\beta}}^2.
\label{eq:problem-6}
\end{align}
\end{problem} 
An important feature of the above formulation is that it is an unconstrained optimization while  \Cref{problem:4} is a constrained optimization. This alternative formulation will serve as a convenient basis for analyzing the stability and convergence properties of R-GTD. Moreover, 
\Cref{problem:6} can be interpreted as a regularized MSPBE formulation,
where the first term 
is the MSPBE which is the objective of GTD2,
while the second term is a quadratic regularization term.
As $c \to \infty$, \Cref{problem:6} (and equivalently \Cref{problem:4})
yields the same solution as GTD2. The proof is given in \Cref{app:3}.



\section{Convergence of R-GTD}\label{sec:convergence}
In this section, we analyze the theoretical properties of R-GTD in two parts.
First, we establish convergence to a saddle point under the convex--concave
formulation. Second, we derive error bounds relating the R-GTD solution to
the true projected solution.

\subsection{Convergence of R-GTD towards saddle-point}
In this section, we analyze the convergence of R-GTD towards a saddle point by studying its
continuous-time dynamics and the associated discrete-time algorithm. Specifically, we leverage recent results on PDGD
from~\citet{qu2018exponential} to establish the convergence of the continuous-time
PDGD, and subsequently apply the ordinary differential equation (ODE) method of~\citet{borkar2000ode} to prove
the convergence of the stochastic R-GTD iterates. To analyze the convergence of R-GTD, we consider a constrained
optimization in \Cref{problem:4}.
Specifically, we express this constrained optimization in \Cref{problem:4} in the
following general form:
\begin{align}\label{eq:constrained-optimization}
\min_{x \in \mathbb{R}^n} f(x)
\quad \text{s.t.} \quad
Ax = b,
\end{align}
where $f$ is convex and continuously differentiable.
As discussed earlier, a standard approach to solving
\eqref{eq:constrained-optimization} is to introduce the associated
convex--concave Lagrangian function \citep{boyd2004convex},
defined as
\begin{align}
L(x,\lambda) = f(x) + \lambda^\top(Ax - b).
\label{eq:constrained-optimization-lagrangian}
\end{align}
By invoking a key lemma from \citet{qu2018exponential}, whose statement and proof
are provided in \Cref{lemma:1} of \Cref{app:math}, we establish the convergence of
the continuous-time PDGD associated with R-GTD.
When the objective function $f$ is strongly convex and the constraint matrix $A$
has full row rank, \citet{qu2018exponential} shows that the continuous-time PDGD of
\eqref{eq:constrained-optimization-lagrangian},
\[
\dot{x} = -\nabla f(x) - A^\top \lambda, \qquad
\dot{\lambda} = A x - b,
\]
globally converges to the unique saddle point of the associated Lagrangian. However, in the optimization form of GTD2 in \Cref{problem:3}, 
the constraint matrix is given by
$
A = \Phi^\top D^\beta (\gamma P^\pi - I)\Phi,
$
which refers to the FIM introduced earlier.
This matrix satisfies the full row rank condition only under \Cref{assumption:2}. If this assumption is violated, the constraint matrix becomes rank-deficient, and the convergence guarantee provided by \citet{qu2018exponential} no longer applies. When applying the R-GTD formulation, the constraint matrix $A$ in \Cref{problem:4} takes the block form $A=\bigl[\, \text{FIM} \;\; \Phi^\top D^\beta \Phi \,\bigr],
$ where $\Phi^\top D^\beta \Phi \succ 0$.
As a consequence of the additional $\Phi^\top D^\beta \Phi$ term, the resulting
constraint matrix $A$ is always full row rank, regardless of whether
\Cref{assumption:2} holds.
Consequently, the conditions required by \citet{qu2018exponential} are always
satisfied for R-GTD, which ensures global convergence of the associated
continuous-time PDGD.
For completeness, a detailed verification of this rank condition is provided
in \Cref{app-prop:4}. Based on this continuous-time convergence result and \Cref{lemma:Borkar} of Appendix, convergence of~\Cref{alg:rgtd-alg} using ODE approach can be proved as follows:
\begin{theorem}\label{thm:convergence2}
Let us consider~\Cref{alg:rgtd-alg}, and assume that the step-size satisfy~\eqref{eq:step-size-rule}.
Then, $(\theta_k, w_k, \lambda_k)\to ( \theta_{\mathrm{RGTD}}, w_{\mathrm{RGTD}}, \lambda_{\mathrm{RGTD}})$ as $k\to \infty$ with probability one (see \Cref{app:2} for the proof).
\end{theorem}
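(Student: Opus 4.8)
The plan is to cast \Cref{alg:rgtd-alg} as a stochastic approximation scheme and apply the ODE method of \citet{borkar2000ode} (\Cref{lemma:Borkar}), using the global convergence of the continuous-time R-GTD dynamics in \eqref{eq:continuous-rgtd} as the limiting deterministic behavior. First I would stack the iterates into $z_k := (\theta_k, w_k, \lambda_k) \in \mathbb{R}^{3q}$ and rewrite the three stochastic update rules in the standard form
\begin{equation*}
z_{k+1} = z_k + \alpha_k\bigl(h(z_k) + M_{k+1}\bigr),
\end{equation*}
where $h(z) := Hz + b$ is the affine vector field on the right-hand side of \eqref{eq:continuous-rgtd}, and $M_{k+1}$ is the deviation of the sampled update direction from its conditional mean with respect to the natural filtration $\mathcal{F}_k := \sigma(z_0,\ldots,z_k)$.

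The first substantive step is to verify conditional unbiasedness, i.e.\ $\mathbb{E}[\,\text{update}\mid\mathcal{F}_k\,] = h(z_k)$. Using $s_k\sim d^\beta$, $a_k\sim\beta(\cdot\mid s_k)$, and $s_k'\sim P(\cdot\mid s_k,a_k)$ together with the importance-sampling ratio $\rho_k$, the key identities are $\mathbb{E}[\phi_k\phi_k^\top] = \Phi^\top D^\beta\Phi$, $\mathbb{E}[\rho_k\,\phi_k(\phi_k')^\top] = \Phi^\top D^\beta P^\pi\Phi$, and $\mathbb{E}[\rho_k\,r_{k+1}\phi_k] = \Phi^\top D^\beta R^\pi$. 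Substituting these into the per-sample $\theta$-, $w$-, and $\lambda$-updates of \Cref{alg:rgtd-alg} recovers exactly the three blocks of $h$ in \eqref{eq:continuous-rgtd}, which confirms that $M_{k+1}$ is a martingale-difference sequence.

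Next I would discharge the remaining hypotheses of \Cref{lemma:Borkar}. Lipschitz continuity of $h$ is immediate because $h$ is affine. The noise satisfies $\mathbb{E}[\lVert M_{k+1}\rVert^2\mid\mathcal{F}_k]\le K(1+\lVert z_k\rVert^2)$ for some constant $K$, since the finiteness of $\mathcal{S}$ and $\mathcal{A}$ (with $\beta(a\mid s)>0$) bounds the feature vectors and the ratio $\rho_k$, so every sampled matrix in the update has uniformly bounded entries and the update is affine in $z_k$. The step-size condition is assumed via \eqref{eq:step-size-rule}. The decisive ingredient is that $\dot z = h(z)$ possesses a globally asymptotically stable equilibrium: this is precisely the continuous-time convergence of the R-GTD PDGD, which follows from \citet{qu2018exponential} (\Cref{lemma:1}) because $f(\theta,w) = \tfrac{1}{2}\theta^\top\Phi^\top D^\beta\Phi\,\theta + \tfrac{c}{2}w^\top\Phi^\top D^\beta\Phi\,w$ is strongly convex ($\Phi^\top D^\beta\Phi\succ 0$, $c>0$) and the constraint matrix $A = [\,\mathrm{FIM}\;\;\Phi^\top D^\beta\Phi\,]$ has full row rank irrespective of \Cref{assumption:2}, as verified in \Cref{app-prop:4}. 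The unique equilibrium is the saddle point $(\theta_{\mathrm{RGTD}}, w_{\mathrm{RGTD}}, \lambda_{\mathrm{RGTD}})$ characterized in \Cref{prop:1}.

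The main obstacle I anticipate is the almost-sure boundedness of the iterates, which is the most delicate hypothesis of the ODE method. I would establish it through the Borkar--Meyn stability criterion by analyzing the scaled field $h_\infty(z) := \lim_{r\to\infty} h(rz)/r = Hz$ and showing that the origin is globally asymptotically stable for $\dot z = Hz$. Since the full dynamics $\dot z = h(z)$ are linear and globally asymptotically stable to a unique point by the previous step, the system matrix $H$ is nonsingular and the same strict Lyapunov argument applies verbatim to $\dot z = Hz$ once the constant term $b$ is dropped; hence the scaled system is globally asymptotically stable as well. With boundedness in hand, \Cref{lemma:Borkar} yields $z_k\to z^*$ almost surely, which is exactly $(\theta_k,w_k,\lambda_k)\to(\theta_{\mathrm{RGTD}}, w_{\mathrm{RGTD}}, \lambda_{\mathrm{RGTD}})$.
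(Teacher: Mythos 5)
Your proposal is correct and follows essentially the same route as the paper's proof: cast the iterates as an affine stochastic approximation $z_{k+1}=z_k+\alpha_k(h(z_k)+M_{k+1})$, verify the martingale-difference and growth conditions, use \Cref{lemma:1} together with the full-row-rank verification of \Cref{app-prop:4} to get global asymptotic stability of the limiting PDGD, observe that the scaled field $h_\infty(z)=Hz$ is the equilibrium-shifted version of the same linear dynamics (hence GAS at the origin), and conclude via the Borkar--Meyn theorem (\Cref{lemma:Borkar}). Your treatment of the conditional-mean identities under importance sampling and of the noise variance bound is in fact slightly more explicit than the paper's, which leaves those steps as routine calculations.
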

\subsection{Error bounds between the R-GTD and the true projected solution
}\label{sec:convergence-towards-true}
While we have established convergence to the R-GTD saddle point, it is essential
to quantify its proximity to the true value function $V^\pi$. To this end, we introduce the true projected solution $\theta_*^\pi$, as considered in~\citet{lee2024analysis}, as the vector which satisfies $\Phi \theta_*^\pi = \Pi_{\mathcal{R}(\Phi)} V^\pi,$
where $V^\pi$ admits the representation
$
V^\pi = \sum_{k=0}^\infty \gamma^k (P^\pi)^k R^\pi .
$
This $\theta_*^\pi$ corresponds to the minimizer of the projection error $f(\theta) = \frac{1}{2}\|V^\pi - \Phi\theta\|_{D^\beta}^2$. 
To analyze the error bound, we first introduce the following notations, which will be used repeatedly in the sequel.
\paragraph*{Notation}
Recall that the FIM is defined as
$
\mathrm{FIM} := \Phi^\top D^\beta (\gamma P^\pi - I)\Phi.
$
For notational convenience, we denote this matrix by
$
M := \Phi^\top D^\beta (\gamma P^\pi - I)\Phi,
$
and define 
\[
B := \Phi^\top D^\beta \Phi,\;
b := \Phi^\top D^\beta R^\pi,\;G := M^{^\top} B^{-1} M,
\]
where $B \succ 0$. We further define $K := G^{-1} B G^{-1} M^{^\top} B^{-1} b.
$ 
Using this notation, the MSPBE objective admits the quadratic form $
{\rm MSPBE}(\theta)
= \tfrac12 (M\theta + b)^\top B^{-1} (M\theta + b).
$
Therefore, the corresponding first-order optimality condition can be compactly written as follows:
\begin{equation}
\nabla_\theta {\rm MSPBE}(\theta)
= M^\top B^{-1}(M\theta + b)=0.\label{eq:gtd2-opt}
\end{equation}
The structure of the GTD2 solution implied by this condition is summarized in the following lemma:
\begin{lemma}\label{lem:gtd2-solution}
The GTD2 stationary condition \eqref{eq:gtd2-opt}
admits an affine solution set
\begin{equation}\label{eq:gtd2-set}
\Theta_{\mathrm{GTD2}}
:= \{\theta \in \mathbb{R}^q : M^\top B^{-1}(M\theta + b)=0\}.
\end{equation}\label{eq:gtd2-solution-nonsingular}
When $M$ (FIM) is nonsingular, this set reduces to the singleton
\begin{equation}
\theta_{\mathrm{GTD2}}
= -(M^\top B^{-1}M)^{-1}M^\top B^{-1}b.
\end{equation}
\end{lemma}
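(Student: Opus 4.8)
The plan is to treat the stationarity condition \eqref{eq:gtd2-opt} as a linear system in $\theta$ and read off its solution set using elementary linear algebra. First I would rewrite $M^\top B^{-1}(M\theta + b)=0$ in the equivalent form $G\theta = -M^\top B^{-1} b$, where $G = M^\top B^{-1} M$ as fixed in the notation. Since the left-hand side is affine in $\theta$, the solution set is, by the standard structure theorem for linear systems, either empty or an affine subspace of the form $\theta_0 + \mathcal{N}(G)$ for any particular solution $\theta_0$. Thus the only substantive work for the first claim is to verify that the system is \emph{consistent}, i.e., that $\Theta_{\mathrm{GTD2}}$ is nonempty; once nonemptiness is established, the affine structure is automatic.

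For consistency I would show that the right-hand side lies in the range of $G$, and the key structural fact is the range identity $\mathcal{R}(G) = \mathcal{R}(M^\top)$. The inclusion $\mathcal{R}(G) \subseteq \mathcal{R}(M^\top)$ is immediate from $G = M^\top(B^{-1}M)$. For the reverse inclusion I would exploit $B \succ 0$ (hence $B^{-1}\succ 0$) to establish the null-space identity $\mathcal{N}(G)=\mathcal{N}(M)$: if $Gx=0$ then $x^\top G x = \lVert B^{-1/2}Mx\rVert^2 = 0$, forcing $Mx=0$, while the reverse containment is trivial. By rank--nullity this yields $\operatorname{rank}(G)=\operatorname{rank}(M)=\operatorname{rank}(M^\top)$, which combined with the inclusion gives $\mathcal{R}(G)=\mathcal{R}(M^\top)$. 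Since $-M^\top B^{-1}b = M^\top(-B^{-1}b) \in \mathcal{R}(M^\top)=\mathcal{R}(G)$, the system is consistent, so $\Theta_{\mathrm{GTD2}}$ is a nonempty affine subspace, proving the first claim.

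For the nonsingular case I would simply observe that when $M$ (the FIM) is invertible, $G = M^\top B^{-1} M$ is a product of invertible matrices and is therefore invertible. The linear system $G\theta = -M^\top B^{-1} b$ then has the unique solution $\theta = -G^{-1}M^\top B^{-1}b = -(M^\top B^{-1}M)^{-1}M^\top B^{-1}b$, which matches the claimed closed form and collapses the affine set to the stated singleton.

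The main obstacle, modest as it is, is the consistency argument, and specifically the range identity $\mathcal{R}(G)=\mathcal{R}(M^\top)$, which is precisely where the positive definiteness of $B$ enters in an essential way; the affine structure and the nonsingular reduction are then routine consequences of standard linear algebra.
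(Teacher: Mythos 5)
Your proof is correct, and it is actually more complete than what the paper offers: the paper states this lemma without any proof, treating the affine structure as immediate from the fact that the stationarity condition \eqref{eq:gtd2-opt} is linear in $\theta$. The one substantive point in your write-up --- consistency of $G\theta = -M^\top B^{-1}b$, i.e.\ nonemptiness of $\Theta_{\mathrm{GTD2}}$ --- is nowhere argued in the paper; it is simply asserted later, in the singular-case analysis of \Cref{app-lem:6}, where the paper says that since the GTD2 equation $G\theta = M^\top B^{-1}b$ is solvable one has $M^\top B^{-1}b \in \mathrm{Range}(G)$ (modulo a sign typo there). Your range identity $\mathcal{R}(G) = \mathcal{R}(M^\top)$, obtained from the null-space identity $\mathcal{N}(G) = \mathcal{N}(M)$ via $B \succ 0$ and rank--nullity, is precisely the missing justification for that assertion, and every step of it checks out. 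An equally short alternative, closer in spirit to how the paper introduces the lemma, is to note that $\mathrm{MSPBE}(\theta) = \tfrac12 (M\theta + b)^\top B^{-1}(M\theta + b)$ is a convex quadratic bounded below by zero, hence attains its minimum, and its minimizers are exactly the solutions of \eqref{eq:gtd2-opt}; but your purely algebraic route has the advantage of producing $\mathcal{N}(G)=\mathcal{N}(M)$ and $\mathcal{R}(G)=\mathcal{R}(M^\top)$ as explicit byproducts, which is exactly what the paper's later singular-case arguments lean on implicitly. Your treatment of the nonsingular case ($G$ invertible as a product of invertible matrices, giving the stated singleton) is the standard one and agrees with the closed form \eqref{eqn:gtd2_solution} used throughout the paper.
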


To derive the subsequent error bounds, we rewrite the optimality condition
of the R-GTD problem in \eqref{eq:problem-6} as an equivalent linear system.
This reformulation yields
$\bigl(M^\top B^{-1} M + \tfrac{1}{c}B\bigr)\theta_{\mathrm{RGTD}}
= - M^\top B^{-1} b$.
The derivation of this linear system form is provided in
\Cref{lem:5} of Appendix. Consequently, the R-GTD solution admits the closed-form expression
\begin{equation}\label{eq:rgtd-solution}
\theta_{\mathrm{RGTD}}
:= -\left(M^\top B^{-1}M + \tfrac{1}{c}B\right)^{-1} M^\top B^{-1} b .
\end{equation}
Note that the expression in \eqref{eq:rgtd-solution} coincides with the
closed-form solution given in \eqref{eq:rgtd-closed-form}.

We now characterize the asymptotic relationship between $\theta_{\mathrm{RGTD}}$ and $\theta_{\mathrm{GTD2}}$ as the regularization parameter $c$ increases in the following lemma:
\begin{lemma}\label{lem:6}
Let $M,B,b,G,K$ be defined as above, 
and let $\theta_{\mathrm{RGTD}}$ be given by \eqref{eq:rgtd-solution}. Let $\Pi_{\mathcal{N}(G)}
$ denote the orthogonal projection onto the null space of $G$.
If $M$ is singular, then there exists a constant
$
c_0 := \|B - \Pi_{\mathcal{N}(G)}\|_2
$
such that for all $c > c_0$, every
$\theta_{\mathrm{GTD2}}\in\Theta_{\mathrm{GTD2}}$
defined in \eqref{eq:gtd2-set} admits the expansion
\begin{equation}\label{eq:rgtd-singular-expansion}
\theta_{\mathrm{RGTD}}
=
\bigl(\theta_{\mathrm{GTD2}}
      - \Pi_{\mathcal{N}(G)}(\theta_{\mathrm{GTD2}})
 \bigr)
+ O\!\left(\frac{1}{c}\right).
\end{equation}
If $M$ is nonsingular, then there exists a constant
$
c_0 := \|G^{-1}B\|_2
$
such that for all $c > c_0$, the unique GTD2 solution
$\theta_{\mathrm{GTD2}}$ given in
\eqref{eq:gtd2-solution-nonsingular} admits the expansion
\begin{equation}\label{eq:rgtd-nonsingular-expansion}
\theta_{\mathrm{RGTD}}
= \theta_{\mathrm{GTD2}}
+ \frac{1}{c} K
+ O\!\left(\frac{1}{c^2}\right).
\end{equation}
\end{lemma}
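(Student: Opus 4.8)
The plan is to treat both cases as a single resolvent (Tikhonov-type) perturbation problem in the small parameter $\epsilon := 1/c$. Starting from the closed form in \eqref{eq:rgtd-solution}, I would write $\theta_{\mathrm{RGTD}} = (G + \epsilon B)^{-1} r$ with $r := -M^\top B^{-1} b$. The first fact to record is that $r \in \mathcal{R}(G)$: since $B^{-1}\succ 0$ one has $\mathcal{R}(G) = \mathcal{R}(M^\top B^{-1} M) = \mathcal{R}(M^\top)$, and $r = -M^\top(B^{-1}b)\in\mathcal{R}(M^\top)$. This guarantees that the stationary set $\Theta_{\mathrm{GTD2}}$ of \Cref{lem:gtd2-solution} is nonempty and, as will be needed below, that the $\frac{1}{\epsilon}\Pi_{\mathcal{N}(G)}$ contributions annihilate $r$.

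For the nonsingular case, $M$ invertible forces $G \succ 0$, so I would factor $(G + \epsilon B)^{-1} = (I + \epsilon G^{-1}B)^{-1}G^{-1}$ and invoke the Neumann series $(I + \epsilon G^{-1}B)^{-1} = I - \epsilon G^{-1}B + O(\epsilon^2)$, which converges precisely when $\epsilon\|G^{-1}B\|_2 < 1$, i.e.\ $c > c_0 = \|G^{-1}B\|_2$. Substituting and collecting orders yields the zeroth-order term $-G^{-1}M^\top B^{-1}b = \theta_{\mathrm{GTD2}}$ and the first-order term $\epsilon\,G^{-1}BG^{-1}M^\top B^{-1}b = \tfrac1c K$, which is exactly \eqref{eq:rgtd-nonsingular-expansion}. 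This case is routine.

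The singular case is the main obstacle, because $G$ is now singular (its rank equals that of $M$), so $(G+\epsilon B)^{-1}$ blows up as $\epsilon\to 0$ and a plain Neumann series is unavailable. My plan is to split along the orthogonal decomposition $\mathbb{R}^q = \mathcal{R}(G)\oplus\mathcal{N}(G)$ (valid since $G$ is symmetric positive semidefinite) and regularize only the null directions, writing $G + \epsilon B = (G + \epsilon\,\Pi_{\mathcal{N}(G)}) + \epsilon(B - \Pi_{\mathcal{N}(G)})$, where $G + \epsilon\,\Pi_{\mathcal{N}(G)}$ is invertible with inverse $G^{+} + \tfrac1\epsilon\,\Pi_{\mathcal{N}(G)}$. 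Factoring out this invertible part gives $\theta_{\mathrm{RGTD}} = \bigl(I + (\epsilon G^{+} + \Pi_{\mathcal{N}(G)})(B - \Pi_{\mathcal{N}(G)})\bigr)^{-1}G^{+}r$, where I used $(G + \epsilon\,\Pi_{\mathcal{N}(G)})^{-1}r = G^{+}r$ because $\Pi_{\mathcal{N}(G)}r = 0$. The convergence of the induced series is governed by $\|B - \Pi_{\mathcal{N}(G)}\|_2$, which is exactly the source of the threshold $c_0 = \|B - \Pi_{\mathcal{N}(G)}\|_2$; the remainder is $O(\epsilon) = O(1/c)$, yielding \eqref{eq:rgtd-singular-expansion}.

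The delicate point I would flag is the identification of the $O(1)$ leading term with $\theta_{\mathrm{GTD2}} - \Pi_{\mathcal{N}(G)}(\theta_{\mathrm{GTD2}})$. The cleanest route is a Schur-complement computation in the block basis adapted to $\mathcal{R}(G)\oplus\mathcal{N}(G)$: eliminating the null block shows the limit is the unique element of $\Theta_{\mathrm{GTD2}}$ whose image under $B$ is orthogonal to $\mathcal{N}(G)$, and this selection is independent of the representative $\theta_{\mathrm{GTD2}}$ since all representatives differ by an element of $\mathcal{N}(G)$. Verifying that $\bigl(I + \Pi_{\mathcal{N}(G)}(B - \Pi_{\mathcal{N}(G)})\bigr)^{-1}G^{+}r$ satisfies both $G\theta = r$ and $\Pi_{\mathcal{N}(G)}B\theta = 0$ is the crux and requires the most care, precisely because the coupling of $B$ across $\mathcal{R}(G)$ and $\mathcal{N}(G)$ keeps the null-space component of the limit from vanishing; this coupling is the reason the regularizer picks out a specific stationary point rather than the minimum-norm one.
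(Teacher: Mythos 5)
Your nonsingular case is correct and is exactly the paper's argument (factor out $G$, Neumann series with threshold $c_0=\|G^{-1}B\|_2$, collect the $1/c$ term). The problem is the singular case, and it is a genuine one. Your factorization is the algebraically valid one — indeed it is \emph{more} careful than the paper's own proof, which writes
$(G + c^{-1}B)^{-1} = \bigl[I + c^{-1}(B - \Pi_{\mathcal{N}(G)})\bigr]^{-1}\bigl(G + c^{-1}\Pi_{\mathcal{N}(G)}\bigr)^{-1}$
and expands the bracket as $I + O(1/c)$; the correct bracket must carry the factor $\bigl(G + c^{-1}\Pi_{\mathcal{N}(G)}\bigr)^{-1} = G^\dagger + c\,\Pi_{\mathcal{N}(G)}$, which produces precisely the $O(1)$ coupling term $\Pi_{\mathcal{N}(G)}(B - \Pi_{\mathcal{N}(G)})$ that you retain. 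But keeping that term, the leading term of $(G+\epsilon B)^{-1}r$ is
$\bigl(I + \Pi_{\mathcal{N}(G)}(B - \Pi_{\mathcal{N}(G)})\bigr)^{-1}G^\dagger r$,
which is your Schur-complement selection: the unique $\theta\in\Theta_{\mathrm{GTD2}}$ with $\Pi_{\mathcal{N}(G)}B\theta = 0$. The right-hand side of \eqref{eq:rgtd-singular-expansion}, by contrast, is $\theta_{\mathrm{GTD2}} - \Pi_{\mathcal{N}(G)}(\theta_{\mathrm{GTD2}}) = G^\dagger r$, the unique element with $\Pi_{\mathcal{N}(G)}\theta = 0$. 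These two selections differ whenever $\Pi_{\mathcal{N}(G)}B\,G^\dagger r \neq 0$, so you cannot simultaneously assert that the remainder around $G^\dagger r$ is $O(1/c)$ (the lemma) and that "the null-space component of the limit" does not vanish (your closing sentence); the two claims are contradictory, and your proposal asserts both.

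The discrepancy is real, not an artifact of bookkeeping. Take
\[
G = \begin{pmatrix}1 & 0\\ 0 & 0\end{pmatrix},\qquad
B = \begin{pmatrix}2 & 1\\ 1 & 2\end{pmatrix},\qquad
r = \begin{pmatrix}1\\ 0\end{pmatrix}\in\mathcal{R}(G),
\]
for which a direct computation gives
\[
\Bigl(G + \tfrac{1}{c}B\Bigr)^{-1} r
= \frac{c}{2c+3}\begin{pmatrix}2\\ -1\end{pmatrix}
\;\longrightarrow\;
\begin{pmatrix}1\\ -\tfrac12\end{pmatrix},
\qquad\text{while}\qquad
G^\dagger r = \begin{pmatrix}1\\ 0\end{pmatrix}.
\]
The gap $(0,-\tfrac12)^\top$ lies in $\mathcal{N}(G)$ and is not $O(1/c)$, so the expansion \eqref{eq:rgtd-singular-expansion} fails for this $(G,B,r)$; what the regularization actually selects is the $B$-weighted projection onto $\Theta_{\mathrm{GTD2}}$, exactly as your Schur-complement analysis predicts. (The paper's toy example escapes this because there $\mathcal{N}(G)=\mathrm{span}\{(1,1)^\top\}$ and $(1,1)^\top$ is an eigenvector of $B$, so the two projections coincide.) Consequently, your proposal as written cannot "yield \eqref{eq:rgtd-singular-expansion}": carried out correctly it proves a different statement, and matching the lemma would require the additional hypothesis $\Pi_{\mathcal{N}(G)}B\,G^\dagger r = 0$ (for instance, $B$-invariance of $\mathcal{N}(G)$) — a condition that neither you nor the paper imposes. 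You should either add that hypothesis or replace the leading term in the expansion by $\bigl(I + \Pi_{\mathcal{N}(G)}(B - \Pi_{\mathcal{N}(G)})\bigr)^{-1}G^\dagger r$; as it stands, the singular half of the proposal is internally inconsistent.
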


\begin{wrapfigure}{r}{0.6\textwidth}
\vspace{-10pt}
\raggedleft
\includegraphics[width=\linewidth, trim=180 100 250 180, clip]{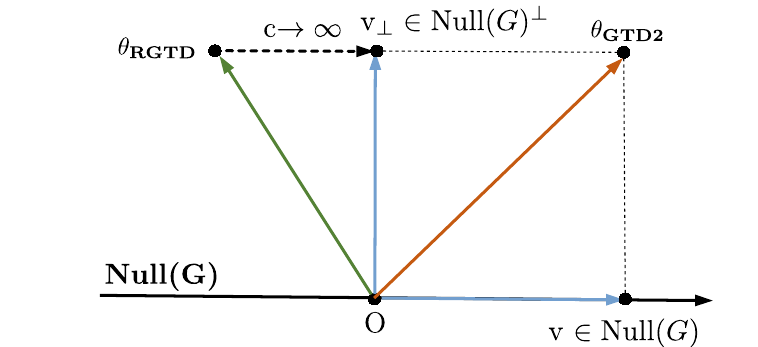}
\captionsetup{width=\linewidth}
\caption{As $c \to \infty$, the R-GTD solution $\theta_{\mathrm{RGTD}}$
converges to the GTD2 solution $\theta_{\mathrm{GTD2}}$.
$\theta_{\mathrm{GTD2}}$ decomposes uniquely into two components:
$v \in \mathrm{Null}(G)$ along the null space of $G$, and 
$v_{\perp} \in \mathrm{Null}(G)^{\perp}$ orthogonal to it.}
\label{fig:nullspace}
\vspace{-10pt}
\end{wrapfigure}

The proof is given in \Cref{app-lem:6}. While the nonsingular case is relatively straightforward as $\theta_{\mathrm{RGTD}}$ converges to a unique point, the singular case involves a solution set $\Theta_{\mathrm{GTD2}}$ that is an affine subspace. 
We therefore focus on the singular case to clarify how the regularization influences the limiting behavior of $\theta_{\mathrm{RGTD}}$ within this solution set. \Cref{fig:nullspace} provides a geometric illustration of the expansion
in~\eqref{eq:rgtd-singular-expansion}.
In particular, as $c \to \infty$, the $O(1/c)$ term vanishes and the right-hand
side of~\eqref{eq:rgtd-singular-expansion} reduces to the orthogonal projection of a GTD2 solution onto $\mathrm{Null}(G)^{\perp}$, as depicted in the figure. 
To build intuition for the singular-case asymptotic behavior in \eqref{eq:rgtd-singular-expansion}, we provide a toy example in \Cref{app:toy}.

Building on these properties, we further quantify the Euclidean distance between $\Phi\theta_{\mathrm{RGTD}}$ and the GTD2 solution set in \Cref{lem:7} of the Appendix. These results lead to our main theorem, which establishes explicit error bounds between the R-GTD solution and the true projected solution.
\begin{theorem}\label{thm:2}

The prediction error satisfies
\begin{align*}
\|\Phi\theta_{\mathrm{RGTD}} - \Phi\theta_*^\pi\|_2
&\le 
\mathrm{dist}(\Phi\theta_*^\pi,\; \Phi\Theta_{\mathrm{GTD2}})
+
\bigl\|
\Pi_{\Phi\Theta_{\mathrm{GTD2}}}(\Phi\theta_{\mathrm{RGTD}})
-
\Pi_{\Phi\Theta_{\mathrm{GTD2}}}(\Phi\theta_*^\pi)
\bigr\|_2
+
O\!\left(\tfrac{1}{c}\right).
\end{align*}
If $M$ is nonsingular, then $\Theta_{\mathrm{GTD2}}=\{\theta_{\mathrm{GTD2}}\}$ 
and the projection operator becomes the identity.
In this case, the bound simplifies to
\begin{align*}
\|\Phi\theta_{\mathrm{RGTD}} - \Phi\theta_*^\pi\|_2
&\le
\|\Phi\theta_{\mathrm{GTD2}} - \Phi\theta_*^\pi\|_2
+ \frac{\|\Phi\|_2\|K\|_2}{c}
+ O\!\left(\tfrac{1}{c^2}\right).
\end{align*}
\end{theorem}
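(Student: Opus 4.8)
The plan is to prove the bound with a single triangle-inequality decomposition that inserts the orthogonal projection onto the affine set $\Phi\Theta_{\mathrm{GTD2}}$, and then to control the three resulting pieces separately. Write $\Pi$ for $\Pi_{\Phi\Theta_{\mathrm{GTD2}}}$. Since $\Phi$ has full column rank and, by \Cref{lem:gtd2-solution}, $\Theta_{\mathrm{GTD2}}$ is an affine subspace, its image $\Phi\Theta_{\mathrm{GTD2}}$ is a closed affine subspace; hence $\Pi$ is well defined and the distance identity $\mathrm{dist}(x,\Phi\Theta_{\mathrm{GTD2}}) = \|x-\Pi(x)\|_2$ holds for every $x$. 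First I would write
\begin{align*}
\Phi\theta_{\mathrm{RGTD}} - \Phi\theta_*^\pi
&= \big(\Phi\theta_{\mathrm{RGTD}} - \Pi(\Phi\theta_{\mathrm{RGTD}})\big)\\
&\quad + \big(\Pi(\Phi\theta_{\mathrm{RGTD}}) - \Pi(\Phi\theta_*^\pi)\big)\\
&\quad + \big(\Pi(\Phi\theta_*^\pi) - \Phi\theta_*^\pi\big)
\end{align*}
and apply the triangle inequality.

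Under the distance identity, the norm of the last term equals $\mathrm{dist}(\Phi\theta_*^\pi,\Phi\Theta_{\mathrm{GTD2}})$, which supplies the first summand of the claimed bound; the norm of the middle term is exactly the projection-difference summand; and the norm of the first term equals $\mathrm{dist}(\Phi\theta_{\mathrm{RGTD}},\Phi\Theta_{\mathrm{GTD2}})$. The crux is therefore to show that this last distance is $O(1/c)$. I would obtain this from the singular-case expansion \eqref{eq:rgtd-singular-expansion} of \Cref{lem:6} (equivalently, directly from \Cref{lem:7}): that expansion exhibits $\theta_{\mathrm{RGTD}}$ as a genuine point of $\Theta_{\mathrm{GTD2}}$ — namely $\theta_{\mathrm{GTD2}} - \Pi_{\mathcal{N}(G)}(\theta_{\mathrm{GTD2}})$, which lies in the solution set because it differs from $\theta_{\mathrm{GTD2}}$ by a null-space element of $G$ — plus an $O(1/c)$ perturbation. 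Left-multiplying by $\Phi$ keeps the leading term in $\Phi\Theta_{\mathrm{GTD2}}$ and inflates the remainder by at most $\|\Phi\|_2$, so $\mathrm{dist}(\Phi\theta_{\mathrm{RGTD}},\Phi\Theta_{\mathrm{GTD2}}) \le \|\Phi\|_2\cdot O(1/c) = O(1/c)$. Substituting this into the triangle inequality yields the general bound.

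For the nonsingular refinement, $\Theta_{\mathrm{GTD2}}$ collapses to the singleton $\{\theta_{\mathrm{GTD2}}\}$, so the decomposition is unnecessary and I would instead invoke the sharper expansion \eqref{eq:rgtd-nonsingular-expansion} directly. Writing
\[
\Phi\theta_{\mathrm{RGTD}} - \Phi\theta_*^\pi
= (\Phi\theta_{\mathrm{GTD2}} - \Phi\theta_*^\pi) + \tfrac{1}{c}\Phi K + O(1/c^2)
\]
and applying the triangle inequality together with $\|\Phi K\|_2 \le \|\Phi\|_2\|K\|_2$ produces the stated $\tfrac{\|\Phi\|_2\|K\|_2}{c} + O(1/c^2)$ bound.

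The main obstacle I anticipate is the bookkeeping of the singular case rather than any deep estimate: verifying that $\Phi\Theta_{\mathrm{GTD2}}$ is genuinely a closed affine subspace so that the projection and distance identities apply, and confirming that the leading term of the expansion \eqref{eq:rgtd-singular-expansion} lands \emph{exactly} in $\Phi\Theta_{\mathrm{GTD2}}$ (not merely near it), so the $O(1/c)$ remainder can be peeled off cleanly and bounded via $\|\Phi\|_2$. Beyond that, the proof is pure triangle-inequality bookkeeping layered on top of the expansions of \Cref{lem:6} and the distance estimate of \Cref{lem:7}.
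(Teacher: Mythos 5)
Your proposal is correct and follows essentially the same route as the paper's proof: the singular-case bound is obtained by inserting the projection onto $\Phi\Theta_{\mathrm{GTD2}}$ and applying the triangle inequality, with the term $\mathrm{dist}(\Phi\theta_{\mathrm{RGTD}},\Phi\Theta_{\mathrm{GTD2}})$ controlled as $O(1/c)$ via the singular-case expansion of \Cref{lem:6} (this is exactly the content of \Cref{lem:7}(b), including your observation that $\theta_{\mathrm{GTD2}}-\Pi_{\mathcal{N}(G)}(\theta_{\mathrm{GTD2}})$ lies in $\Theta_{\mathrm{GTD2}}$), and the nonsingular case follows directly from the expansion \eqref{eq:rgtd-nonsingular-expansion}. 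No gaps; your additional care in verifying that $\Phi\Theta_{\mathrm{GTD2}}$ is a closed affine subspace is implicit in the paper but harmless to make explicit.
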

The proof is given in \Cref{app-thm:2}. \Cref{thm:2} highlights the practical advantage of R-GTD over GTD2 by decomposing the error bound into two distinct components. In the \emph{ill-conditioned case}, where \Cref{assumption:2} does not hold, the first term, $dist(\Phi\theta_*^\pi, \Phi\Theta_{GTD2})$, represents the intrinsic approximation error of GTD2 arising from its affine solution set. The second term reflects the stability of the specific representative chosen by R-GTD within this set. Crucially, because the original MSPBE objective lacks strong convexity in the singular setting, the limiting behavior of specific stochastic GTD2 trajectories  cannot be uniquely characterized, which necessitates a set-based theoretical comparison. In contrast, the regularization in R-GTD ensures a strongly convex objective, guaranteeing that all stochastic trajectories reliably converge to a unique point. \Cref{fig:distance_figure} visualizes this singular case decomposition. In the \emph{nonsingular case}, GTD2 already provides the optimal approximation, but R-GTD converges to the same solution as $c \to \infty$. Thus, even in nonsingular settings, R-GTD does not degrade performance: its error is at most an $O(1/c)$ bias away from GTD2. 


\begin{figure*}[t]
\centering
\includegraphics[width=0.7\textwidth, trim=0 110 0 0, clip]{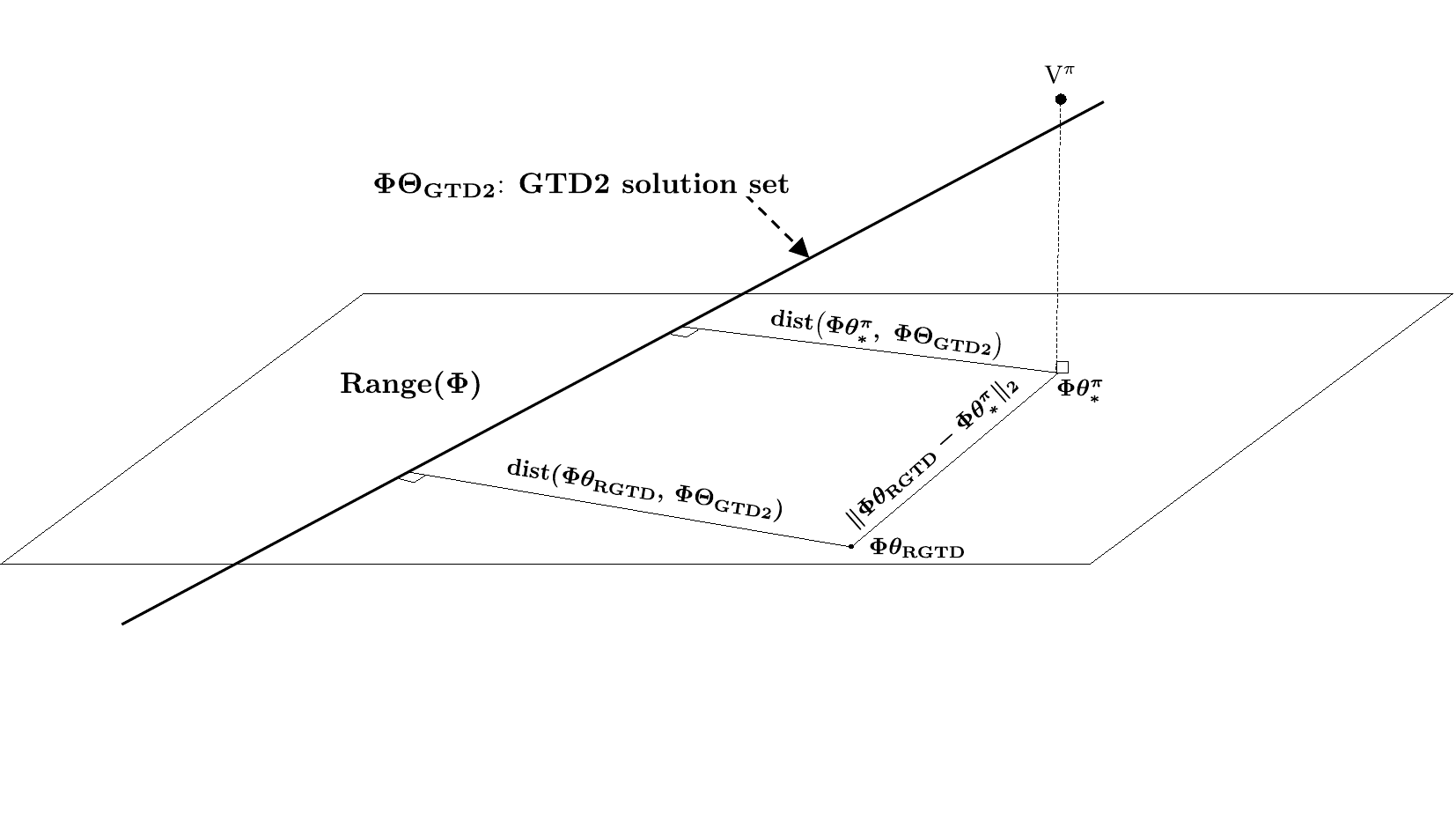}
\caption{Geometric illustration of the singular case.
The error bounds of R-GTD and GTD2 are compared through the projection and distance
terms in \Cref{thm:2}.}
\label{fig:distance_figure}
\end{figure*}

\section{Experiments}

\begin{wrapfigure}{r}{0.4\textwidth}
\vspace{-10pt}
\raggedleft
\includegraphics[width=\linewidth]{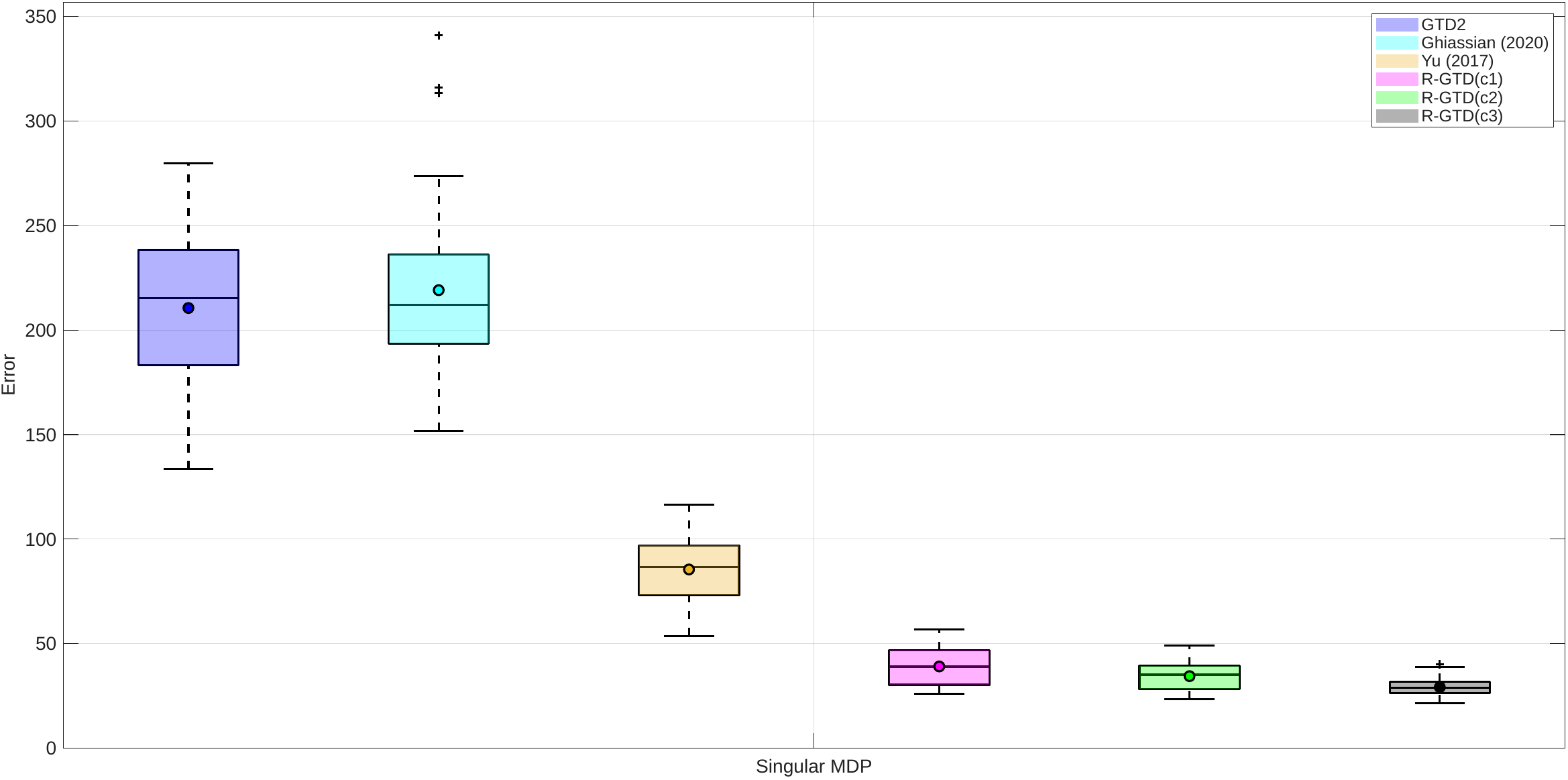}
\captionsetup{width=\linewidth}
\caption{%
    Box-and-whisker representation of $\|\theta_k - \theta_*^\pi\|_2$ 
    over 30 independent runs with step-size 
    $\alpha_k = 1/(k + 30)$.
}
\label{fig:singular-case}
\vspace{-10pt}
\end{wrapfigure}

We consider a representative experiment in a singular setting, to evaluate the stability and robustness of
the proposed R-GTD algorithm.
Specifically, we construct a scenario where the matrix FIM is ill-conditioned, with its smallest singular value on the order of $10^{-13}$.
We initialize $\theta_0$ with a null-space component of FIM. The regularization coefficients are chosen as $c \in \{0.2, 0.4, 1\}$. We compare R-GTD with regularization-based baselines 
\cite{ghiassian2020gradient, yu2017convergence}.
Note that while \citet{zhang2020provably} also employs a similar regularization, it is fundamentally an off-policy control algorithm. Therefore, we restrict our empirical comparison to \citet{ghiassian2020gradient, yu2017convergence}, which directly target the same problem setting. Each experiment is repeated $30$ times, and the distribution of
$\|\theta_k - \theta_*^\pi\|_2$ is summarized using box-and-whisker plots. Detailed experimental settings are provided in \Cref{sec:robustness_initialization}.
As shown in \Cref{fig:singular-case}, R-GTD suppresses the null-space component
and converges stably toward $\theta_*^\pi$. Additional experiment results are provided in \Cref{app:additional-exp}.

\section{Conclusion and limitations}\label{sec:conclusion}
This paper introduced R-GTD, a regularized variant of GTD2 derived from a
convex--concave saddle-point formulation.
The proposed approach relaxes the convergence assumptions of GTD2 while preserving the core theoretical structure of GTD2. Beyond convergence, we further provided a geometric analysis that characterizes the limiting behavior of the R-GTD solution within the GTD2 solution set in singular regimes. As a result, the method achieves improved stability in ill-conditioned settings.
We proved asymptotic convergence to the saddle point and established error bounds
with respect to the true projected solution. 
However, the current analysis is limited to linear function approximation and asymptotic convergence. 
Extending the framework to nonlinear function approximation and establishing finite-time guarantees remain important directions for future work.


\bibliographystyle{plainnat}
\bibliography{example_paper}

\newpage
\appendix

\section{Mathematical preliminaries}\label{app:math}

To facilitate the convergence analysis of the proposed regularized GTD (R-GTD) algorithm, this appendix summarizes several mathematical preliminaries that are used throughout the main text.
We collect the basic notions from nonlinear system theory, the ordinary differential equation (ODE)-based stochastic approximation framework, and key results on primal–dual gradient dynamics (PDGD).
These tools provide the technical foundation for the stability proofs, ODE limiting arguments, and saddle-point convergence results appearing in \Cref{sec:convergence}.
For completeness, we restate the assumptions and lemmas in a self-contained manner so that readers can verify the theoretical arguments without referring to external sources.

\subsection{Basics of nonlinear system theory}
We briefly review basic concepts from nonlinear system theory, 
which will serve as a foundation for the convergence analysis 
and stochastic approximation methods developed later. 
Consider the nonlinear dynamical system
\begin{align}
\frac{d}{dt}x_t = f(x_t), 
\quad x_0 \in {\mathbb R}^n, 
\quad t \ge 0, 
\label{eq:nonlinear-system}
\end{align}
where $x_t \in {\mathbb R}^n$ denotes the state, 
$t \ge 0$ is the time variable, 
$x_0 \in {\mathbb R}^n$ is the initial condition, 
and $f : {\mathbb R}^n \to {\mathbb R}^n$ is a nonlinear mapping. 
For simplicity, we assume that the solution to~\eqref{eq:nonlinear-system} 
exists and is unique, which holds whenever $f$ is globally Lipschitz continuous.

\begin{lemma}{\cite{khalil2002nonlinear}}\label{lemma:existence}
Consider the nonlinear system~\eqref{eq:nonlinear-system}, 
and suppose that $f$ is globally Lipschitz continuous, 
i.e., $\|f(x)-f(y)\|\le l \|x-y\|$ for all $x,y \in {\mathbb R}^n$ 
and some constant $l>0$ with respect to a norm $\|\cdot\|$. 
Then, for any $x_0 \in {\mathbb R}^n$, the system admits a unique solution $x_t$ for all $t \ge 0$.
\end{lemma}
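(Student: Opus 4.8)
The plan is to establish this as the classical Picard--Lindel\"of (Cauchy--Lipschitz) existence--uniqueness result, exploiting the fact that the Lipschitz bound is \emph{global} rather than local, which is precisely what upgrades local existence to existence for all $t \ge 0$. First I would recast the initial-value problem as the integral equation $x_t = x_0 + \int_0^t f(x_s)\,ds$, observing that a continuous function solves this integral equation on an interval if and only if it is a $C^1$ solution of $\dot{x}_t = f(x_t)$ with initial condition $x_0$; indeed, the integrand $s \mapsto f(x_s)$ is continuous, so the integral is differentiable with the correct derivative by the fundamental theorem of calculus.

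For uniqueness, I would take two solutions $x_t, y_t$ sharing the initial condition, subtract their integral forms, and apply the global Lipschitz bound to obtain $\|x_t - y_t\| \le l \int_0^t \|x_s - y_s\|\,ds$. The standard Gr\"onwall inequality then forces $\|x_t - y_t\| = 0$ for all $t$, giving uniqueness on every interval.

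For existence on a fixed horizon $[0,T]$, I would run the Picard iteration $x^{(0)}_t \equiv x_0$ and $x^{(n+1)}_t = x_0 + \int_0^t f(x^{(n)}_s)\,ds$, and show by induction that the successive differences satisfy a factorial estimate $\sup_{t\in[0,T]}\|x^{(n+1)}_t - x^{(n)}_t\| \le C\,(lT)^n/n!$ for a constant $C$ depending only on $T$ and $x_0$. Since $\sum_n (lT)^n/n! = e^{lT} < \infty$, the iterates form a Cauchy sequence in $C([0,T];\mathbb{R}^n)$ under the supremum norm; the uniform limit $x_t$ is continuous, and passing to the limit inside the integral (justified by uniform convergence together with the Lipschitz continuity of $f$) shows that $x_t$ solves the integral equation, hence the ODE. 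Equivalently, one may equip $C([0,T];\mathbb{R}^n)$ with a weighted Bielecki norm $\|x\|_\omega := \sup_t e^{-2l t}\|x_t\|$, under which the Picard operator becomes a contraction and the Banach fixed-point theorem delivers the solution directly.

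Finally, since $T>0$ is arbitrary and the solutions on overlapping intervals agree by the uniqueness argument above, these pieces patch together into a single solution defined for all $t \ge 0$. The main obstacle---and the reason global Lipschitz continuity is the essential hypothesis---is ruling out finite-time blow-up: a global Lipschitz bound yields at most linear growth $\|f(x)\| \le \|f(0)\| + l\|x\|$, so applying Gr\"onwall to $\|x_t\|$ gives an a priori bound growing at most exponentially on any finite interval. This prevents the trajectory from escaping to infinity in finite time, which is exactly what lets the local construction extend to all of $[0,\infty)$; without global Lipschitzness one would only secure a maximal interval of existence.
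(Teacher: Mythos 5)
Your proof is correct. The paper does not prove \Cref{lemma:existence} itself---it is stated as a standard result cited from \citet{khalil2002nonlinear}---and your argument (integral-equation reformulation, Picard iteration or Bielecki-norm contraction, Gr\"onwall for uniqueness, and extension to all of $[0,\infty)$ via the linear-growth bound that rules out finite-time blow-up) is essentially the classical Picard--Lindel\"of proof given in that reference.
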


An equilibrium point is a fundamental notion in nonlinear system theory. 
A point $x_\infty \in {\mathbb R}^n$ is called an equilibrium of~\eqref{eq:nonlinear-system} 
if the state remains constant at $x_\infty$ whenever $x_0 = x_\infty$~\citep{khalil2002nonlinear}. 
Equivalently, equilibrium points correspond to the real roots of $f(x)=0$. 
An equilibrium $x_\infty$ is said to be \emph{globally asymptotically stable} 
if $x_t \to x_\infty$ as $t \to \infty$ for all initial states $x_0 \in {\mathbb R}^n$.

\subsection{ODE-based stochastic approximation}\label{sec:ODE-stochastic-approximation}
Owing to its broad applicability, the convergence of many reinforcement learning (RL) algorithms 
is commonly analyzed using the ODE method~\citep{s2013stochastic,kushner2003stochastic}. 
This approach investigates the stability of an associated limiting ODE to understand the behavior of stochastic recursions with diminishing step sizes, 
which asymptotically follow the trajectory of the ODE. 
Among such techniques, one of the most influential is the ODE method proposed by~\citet{borkar2000ode}. 
In the following, we provide a brief overview of their ODE-based convergence analysis framework 
for a general stochastic recursion of the form
\begin{align}
\theta_{k+1} = \theta_k + \alpha_k \bigl(f(\theta_k) + \varepsilon_{k+1}\bigr),
\label{eq:general-stochastic-recursion}
\end{align}
where $f:{\mathbb R}^n \to {\mathbb R}^n$ is a nonlinear mapping.

The Borkar–Meyn theorem establishes that under \Cref{assumption:1}, 
the stochastic process $(\theta_k)_{k=0}^\infty$ generated by~\eqref{eq:general-stochastic-recursion} 
is almost surely bounded and converges to the equilibrium point $\theta_\infty$. 
This result forms the theoretical basis for proving the convergence of the proposed R-GTD algorithm.
The following technical assumptions are standard in stochastic approximation theory:

\begin{assumption}\label{assumption:1}
$\,$
\begin{enumerate}
\item The mapping $f:{\mathbb R}^n \to {\mathbb R}^n$ is globally Lipschitz continuous, 
and there exists a function $f_\infty:{\mathbb R}^n \to {\mathbb R}^n$ such that 
$\lim_{c\to\infty} \frac{f(cx)}{c} = f_\infty(x)$ for all $x \in {\mathbb R}^n$.

\item The origin is an asymptotically stable equilibrium of the ODE $\dot{\theta}_t = f_\infty(\theta_t)$.

\item The ODE $\dot{\theta}_t = f(\theta_t)$ admits a unique globally asymptotically stable equilibrium 
$\theta_\infty \in {\mathbb R}^n$, i.e., $\theta_t \to \theta_\infty$ as $t \to \infty$.

\item The sequence $\{\varepsilon_k, {\mathcal G}_k, k \ge 1\}$, 
with ${\mathcal G}_k = \sigma(\theta_i, \varepsilon_i, i \le k)$, 
is a Martingale difference sequence. 
Furthermore, there exists a constant $C_0 < \infty$ such that 
for any initial $\theta_0 \in {\mathbb R}^n$,
\[
{\mathbb E}\bigl[\|\varepsilon_{k+1}\|_2^2 \mid {\mathcal G}_k\bigr] 
\le C_0 \bigl(1 + \|\theta_k\|_2^2\bigr), \quad \forall k \ge 0,
\]
where $\|\cdot\|_2$ denotes the Euclidean norm.

\item The step-size sequence $\{\alpha_k\}$ satisfies
\begin{align}
\alpha_k > 0, \quad 
\sum_{k=0}^\infty \alpha_k = \infty, \quad 
\sum_{k=0}^\infty \alpha_k^2 < \infty.
\label{eq:step-size-rule}
\end{align}
\end{enumerate}
\end{assumption}

\begin{lemma}[Borkar and Meyn theorem, {\cite{borkar2000ode}}]\label{lemma:Borkar}
Suppose that \Cref{assumption:1} holds. Then, the following statements are true:
\begin{enumerate}
\item For any initial condition $\theta_0 \in {\mathbb R}^n$, 
the iterates remain bounded, i.e., $\sup_{k \ge 0} \|\theta_k\|_2 < \infty$ with probability one.

\item Moreover, $\theta_k \to \theta_\infty$ as $k \to \infty$ with probability one.
\end{enumerate}
\end{lemma}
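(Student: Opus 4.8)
The plan is to follow the two-stage program of~\citet{borkar2000ode}, exploiting the fact that the hypotheses split naturally into a stability condition for the scaled field $f_\infty$ and a stability condition for the original field $f$. First I would use the former to prove that the iterates $(\theta_k)$ remain almost surely bounded, and then I would use the latter, together with the standard ODE-tracking argument, to obtain convergence to the unique globally asymptotically stable equilibrium $\theta_\infty$. Throughout, the square-summability of the step sizes combined with the conditional second-moment bound on the martingale noise is what forces the stochastic recursion to asymptotically track the deterministic ODE in both stages.

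To establish boundedness, I would introduce the interpolated time scale $t_0=0$, $t_{k+1}=t_k+\alpha_k$, and partition $[0,\infty)$ into consecutive blocks of length approximately $T$. Here $T$ is fixed using the asymptotic stability of the origin for $\dot x = f_\infty(x)$: since $f_\infty$ is positively homogeneous of degree one (being the limit of $f(cx)/c$), asymptotic stability of its origin is automatically global and uniform over spheres, so there is a $T>0$ such that every solution of the scaled ODE starting in the closed unit ball is driven inside the ball of radius $1/2$ by time $T$. On each block I would rescale the iterate by $r_n := \max(\|\theta_{m(n)}\|_2,1)$, where $m(n)$ indexes the start of the block, producing a normalized recursion whose drift equals $f(r_n x)/r_n$. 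Using the scaling limit $\lim_{c\to\infty} f(cx)/c = f_\infty(x)$ and the global Lipschitz bound on $f$, this normalized interpolation tracks the scaled ODE across the block, so the contraction property of $f_\infty$ transfers to the estimate $\|\theta_{m(n+1)}\|_2 \le \tfrac12 r_n$ whenever $r_n$ is large. This recursive halving at the block endpoints rules out escape to infinity and yields $\sup_{k}\|\theta_k\|_2<\infty$ almost surely.

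I expect the main obstacle to be making this rescaled tracking argument rigorous and uniform over the blocks. Three error sources must be shown to vanish after normalization: the Euler discretization error, the scaling-limit error $\|f(r_n x)/r_n - f_\infty(x)\|$, and the accumulated martingale noise $\sum_k \alpha_k \varepsilon_{k+1}/r_n$. For the noise term I would invoke the martingale-difference structure together with the conditional bound ${\mathbb E}[\|\varepsilon_{k+1}\|_2^2\mid{\mathcal G}_k]\le C_0(1+\|\theta_k\|_2^2)$ and $\sum_k\alpha_k^2<\infty$; a localization argument and the martingale convergence theorem then force the per-block noise contribution to zero. A Gronwall estimate would then bound the combined deviation of the normalized interpolation from the $f_\infty$-flow over each length-$T$ block, closing the contraction argument.

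Finally, given $\sup_k\|\theta_k\|_2<\infty$, I would finish by the standard ODE method. Boundedness, the global Lipschitz continuity of $f$, and square-summability of $\{\alpha_k\}$ guarantee that the tail noise sums $\sum_{j\ge k}\alpha_j\varepsilon_{j+1}$ vanish, so the interpolated trajectory of the iterates becomes an asymptotic pseudo-trajectory of the flow of $\dot\theta_t=f(\theta_t)$, whose existence and uniqueness follow from \Cref{lemma:existence}. Since $\theta_\infty$ is the unique globally asymptotically stable equilibrium of this ODE, the limit set of the interpolation is $\{\theta_\infty\}$, and therefore $\theta_k\to\theta_\infty$ as $k\to\infty$ almost surely, which establishes both the boundedness and the convergence claims of the lemma.
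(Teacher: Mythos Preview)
The paper does not give its own proof of this lemma; it is stated as a citation of the Borkar--Meyn theorem~\citep{borkar2000ode} and used as a black box. Your sketch is a faithful outline of the original Borkar--Meyn argument (rescale by $r_n=\max(\|\theta_{m(n)}\|_2,1)$ on length-$T$ blocks, use the homogeneity and asymptotic stability of $f_\infty$ to get a per-block contraction, control the normalized martingale sums via the conditional second-moment bound and $\sum\alpha_k^2<\infty$, then invoke the standard ODE-tracking lemma once boundedness is in hand), so there is nothing to compare against in the paper itself.
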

The preceding assumptions, together with the Borkar–Meyn theorem,
provide the ODE-based foundation needed to analyze the stochastic
approximation aspects of the R-GTD algorithm.
In addition, to ensure that the limiting ODE admits a well-defined equilibrium, we first establish the existence of a saddle point for the associated min--max formulation. Beyond existence, our convergence analysis further requires understanding the stability properties of the continuous-time PDGD induced by this saddle-point structure. The following lemmas summarize key results on the existence of saddle points and the exponential stability of the corresponding PDGD:


\begin{lemma}{\cite{rockafellar2015convex}}
\label{lem:saddle-existence-simple}
Let $L(\theta,\lambda)$ be a function that is convex in $\theta$ and concave in
$\lambda$. Assume that
\begin{itemize}
    \item for every $\lambda$, the function $L(\cdot,\lambda)$ admits a minimizer,
    and
    \item the dual function $g(\lambda) := \min_{\theta} L(\theta,\lambda)$ is
    upper bounded and attains its maximum.
\end{itemize}
Then the saddle value exists and satisfies
\[
\min_{\theta} \max_{\lambda} L(\theta,\lambda)
=
\max_{\lambda} \min_{\theta} L(\theta,\lambda).
\]
\end{lemma}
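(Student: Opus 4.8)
The plan is to exhibit an explicit saddle point $(\theta^*,\lambda^*)$ and verify the two saddle inequalities, since once a saddle point is in hand the desired equality $\min_\theta\max_\lambda L=\max_\lambda\min_\theta L=L(\theta^*,\lambda^*)$ follows from the definition together with weak duality. First I would record weak duality: for every $\theta',\lambda'$ one has $g(\lambda')=\min_\theta L(\theta,\lambda')\le L(\theta',\lambda')\le\max_\lambda L(\theta',\lambda)$, so taking the supremum over $\lambda'$ and the infimum over $\theta'$ gives $\max_\lambda\min_\theta L\le\min_\theta\max_\lambda L$. This inequality needs no hypotheses beyond well-definedness of $g$, which the first assumption supplies. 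I would then construct the candidate saddle point: by the second assumption the concave map $g$ (a pointwise infimum of the concave slices $L(\theta,\cdot)$) attains its maximum at some $\lambda^*$, and by the first assumption at $\lambda^*$ the convex map $L(\cdot,\lambda^*)$ attains its minimum at some $\theta^*$, so $g(\lambda^*)=L(\theta^*,\lambda^*)$. The right inequality $L(\theta^*,\lambda^*)\le L(\theta,\lambda^*)$ is then immediate because $\theta^*$ minimizes $L(\cdot,\lambda^*)$.

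The crux, and the main obstacle, is the left inequality $L(\theta^*,\lambda)\le L(\theta^*,\lambda^*)$ for all $\lambda$, i.e.\ that $\lambda^*$ also maximizes the concave slice $\psi(\lambda):=L(\theta^*,\lambda)$. The difficulty is that $\theta^*$ is optimal only at $\lambda^*$: one knows that $g(\lambda)\le\psi(\lambda)$ for all $\lambda$ with equality at $\lambda^*$, and that $\lambda^*$ maximizes $g$, but this ``touching from below'' does not by itself force $\lambda^*$ to maximize $\psi$. Indeed, the supergradient inclusion runs the wrong way: if $v\in\partial\psi(\lambda^*)$ then $g(\lambda)\le\psi(\lambda)\le\psi(\lambda^*)+v^\top(\lambda-\lambda^*)=g(\lambda^*)+v^\top(\lambda-\lambda^*)$, so $\partial\psi(\lambda^*)\subseteq\partial g(\lambda^*)$, and knowing $0\in\partial g(\lambda^*)$ does not deliver $0\in\partial\psi(\lambda^*)$.

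To close this gap I would invoke the first-order dual optimality of $\lambda^*$ together with Danskin's envelope theorem for the infimal value function. Since $\lambda^*$ maximizes the concave $g$ we have $0\in\partial g(\lambda^*)$, and Danskin identifies $\partial g(\lambda^*)$ with the convex hull of $\{\nabla_\lambda L(\theta,\lambda^*):\theta\in\arg\min_\theta L(\cdot,\lambda^*)\}$; the convex--concave structure, and in our application the strong convexity in the primal (giving a unique $\theta^*$) together with the smoothness of $L$, then yields $\nabla_\lambda L(\theta^*,\lambda^*)=0$. Because $\psi$ is concave with vanishing gradient at $\lambda^*$, it is maximized there, establishing the left inequality. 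Hence $(\theta^*,\lambda^*)$ is a saddle point, and chaining the two inequalities with weak duality gives $\min_\theta\max_\lambda L\le\max_\lambda L(\theta^*,\lambda)=L(\theta^*,\lambda^*)=\min_\theta L(\theta,\lambda^*)\le\max_\lambda\min_\theta L\le\min_\theta\max_\lambda L$, pinning every quantity to the common value $L(\theta^*,\lambda^*)$.

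As an alternative route that avoids the envelope-theorem step, I could instead verify the hypotheses of Sion's minimax theorem on a compact neighborhood of $\lambda^*$, using the boundedness and attainment granted by the second assumption to confine the outer maximization to that neighborhood; the convex--concave continuity conditions are then met and yield the same equality. I would present the constructive saddle-point argument as the primary proof, however, since it is self-contained and matches the smooth, strongly convex regime in which \Cref{lem:saddle-existence-simple} is actually applied to the R-GTD Lagrangian \eqref{eqn:r-gtd-lagrangian}.
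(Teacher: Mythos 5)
First, a point of reference: the paper does \emph{not} prove this lemma at all — it is quoted from Rockafellar's \emph{Convex Analysis} and used as a black box, so there is no in-paper argument to compare against; your attempt must stand on its own. As a proof of the lemma \emph{as stated}, it has a genuine gap, and it is exactly the one you flag yourself. The hypotheses are only that $L$ is convex--concave, that $\min_\theta L(\theta,\lambda)$ is attained for every $\lambda$, and that $g$ attains its maximum; neither differentiability of $L$ nor uniqueness of the inner minimizer is assumed. Your closing step — Danskin's theorem converting $0\in\partial g(\lambda^*)$ into $\nabla_\lambda L(\theta^*,\lambda^*)=0$ — needs both. Without uniqueness, $0\in\partial g(\lambda^*)$ only places $0$ in the convex hull of $\bigl\{\nabla_\lambda L(\theta,\lambda^*):\theta\in\arg\min_\theta L(\theta,\lambda^*)\bigr\}$, which does not produce a single $\theta^*$ in the argmin whose own slice $L(\theta^*,\cdot)$ has vanishing gradient at $\lambda^*$; without smoothness the Danskin formula is unavailable altogether (and for an infimum over a non-compact $\theta$-domain, Danskin additionally requires level-boundedness, which strong convexity supplies but the lemma does not). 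By importing ``strong convexity in the primal'' and smoothness from the R-GTD application, you have proved a special case — admittedly the case in which the paper actually applies the lemma to the Lagrangian \eqref{eqn:r-gtd-lagrangian} — rather than the general statement attributed to Rockafellar, whose proof in the source goes through the machinery of closed saddle functions and recession conditions.

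Your fallback route via Sion's theorem is also flawed as sketched. Applying Sion on $\mathbb{R}^q\times K$ for a compact neighborhood $K\ni\lambda^*$ gives $\inf_\theta\max_{\lambda\in K}L(\theta,\lambda)=\max_{\lambda\in K}g(\lambda)=g(\lambda^*)$, but since $\sup_{\lambda\in\mathbb{R}^n}L(\theta,\lambda)\ge\max_{\lambda\in K}L(\theta,\lambda)$, this only recovers the weak-duality direction for the unrestricted problem. The second hypothesis constrains $g$, not the map $\lambda\mapsto L(\theta,\lambda)$ for fixed $\theta$, so nothing ``confines the outer maximization'' to $K$; indeed for the paper's own Lagrangian, which is affine in $\lambda$, one has $\sup_\lambda L(\theta,w,\lambda)=+\infty$ whenever the constraint is violated, so the outer supremum cannot be localized to any compact set. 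The honest fixes are either (i) state and prove the lemma under the added hypotheses (strong convexity in the primal variable and differentiability), in which case your primary argument — weak duality, construction of $(\theta^*,\lambda^*)$, Danskin, concavity of the slice — is correct and self-contained, or (ii) keep the general statement and, like the paper, defer its proof to Rockafellar.
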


\begin{lemma}{ \cite{qu2018exponential}}\label{lemma:1}
Consider the equality constrained optimization in~\eqref{eq:constrained-optimization}, and suppose that $f$ is twice differentiable, $\mu$-strongly convex, and $l$-smooth, i.e, for all $x,y \in {\mathbb R}^n$, $\mu \left\| {x - y} \right\|_2^2  \le (\nabla f(x) - \nabla f(y))^\top (x-y) \le l\left\| {x - y} \right\|_2^2$. Moreover, suppose that $A$ is full row rank. Consider the corresponding Lagrangian function~\eqref{eq:constrained-optimization-lagrangian}. Then, the corresponding saddle-point $(\theta^*,\lambda^*)$ is unique, and the corresponding continuous-time PDGD,
\begin{align*}
\dot x_t =  - \nabla _\theta  L(\theta_t ,\lambda_t ) =  - \nabla _\theta  f(\theta_t ) - A^\top \lambda_t,\quad
\dot \lambda_t = \nabla _\lambda  L(\theta_t ,\lambda_t ) = A\theta_t  - b,
\end{align*}
exponentially converges to $(\theta^*,\lambda^*)$.
\end{lemma}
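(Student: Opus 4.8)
The plan is to establish the two assertions in turn. First I would prove that the saddle point is unique using the Karush--Kuhn--Tucker (KKT) characterization together with strong convexity. By the stationarity conditions in \eqref{eq:stationary-condition}, any saddle point $(\theta^*,\lambda^*)$ of $L(x,\lambda)=f(x)+\lambda^\top(Ax-b)$ satisfies $\nabla f(\theta^*)+A^\top\lambda^*=0$ and $A\theta^*=b$. Since $f$ is $\mu$-strongly convex and the constraint is affine, the constrained minimization $\min_{Ax=b} f(x)$ is a strictly convex program with a unique primal minimizer $\theta^*$, which fixes the primal component. At this minimizer, $\nabla f(\theta^*)$ is orthogonal to the feasible directions $\mathcal{N}(A)$, hence lies in $\mathcal{R}(A^\top)$, so a multiplier exists; and because $A$ has full row rank, $A^\top$ is injective, so the dual equation $A^\top\lambda^*=-\nabla f(\theta^*)$ has a unique solution. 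Thus $(\theta^*,\lambda^*)$ is unique.

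For exponential convergence, I would pass to error coordinates $\tilde x:=x-\theta^*$, $\tilde\lambda:=\lambda-\lambda^*$. Writing the gradient difference in its mean-value form $\nabla f(x)-\nabla f(\theta^*)=\bar H\,\tilde x$ with $\bar H:=\int_0^1\nabla^2 f(\theta^*+s\tilde x)\,ds$ satisfying $\mu I\preceq\bar H\preceq lI$, and subtracting the stationarity equations, the error dynamics become $\dot{\tilde x}=-\bar H\tilde x-A^\top\tilde\lambda$ and $\dot{\tilde\lambda}=A\tilde x$. The crucial observation is that the naive energy $V_0=\tfrac12\|\tilde x\|_2^2+\tfrac12\|\tilde\lambda\|_2^2$ is inadequate: the cross terms cancel, leaving only $\dot V_0=-\tilde x^\top\bar H\tilde x\le-\mu\|\tilde x\|_2^2$, which provides no decay in the dual direction. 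I would therefore augment the Lyapunov function with a coupling term,
\[
V(\tilde x,\tilde\lambda)=\tfrac12\|\tilde x\|_2^2+\tfrac12\|\tilde\lambda\|_2^2+\eta\,\tilde x^\top A^\top\tilde\lambda,
\]
for a small parameter $\eta>0$.

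I would first verify positive definiteness of $V$: as a quadratic form its matrix is $\left[\begin{smallmatrix} I & \eta A^\top \\ \eta A & I\end{smallmatrix}\right]$, whose Schur complement $I-\eta^2 AA^\top\succ0$ holds whenever $\eta\|A\|_2<1$. Differentiating $V$ along the error dynamics and substituting, the surviving negative term is $-\eta\,\tilde\lambda^\top AA^\top\tilde\lambda\le-\eta\,\sigma_{\min}(AA^\top)\|\tilde\lambda\|_2^2$, which is strictly negative precisely because the full row rank of $A$ gives $AA^\top\succ0$; this injects the missing dual decay. The remaining perturbations $-\eta\,\tilde x^\top\bar H A^\top\tilde\lambda$ and $+\eta\,\tilde x^\top A^\top A\tilde x$ are of order $\eta$ and can be dominated by the primal decay $-\tilde x^\top\bar H\tilde x\le-\mu\|\tilde x\|_2^2$ via Young's inequality together with the bounds $\|\bar H\|_2\le l$ and $\|A\|_2$.

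The main obstacle, and the technical heart of the argument, is the simultaneous calibration of $\eta$: it must be small enough both to keep $V$ positive definite and to prevent the cross and $\tilde x$-quadratic perturbations from overwhelming the primal rate $\mu$, yet its presence is exactly what activates the dual decay. Choosing $\eta$ explicitly in terms of $\mu$, $l$, $\sigma_{\min}(AA^\top)$, and $\|A\|_2$ yields $\dot V\le-c\,(\|\tilde x\|_2^2+\|\tilde\lambda\|_2^2)\le-c'\,V$ for positive constants $c,c'$. Standard Lyapunov theory then gives $V(t)\le V(0)e^{-c't}$, and the equivalence of $V$ with the Euclidean norm (from its positive definiteness) implies $(\theta_t,\lambda_t)\to(\theta^*,\lambda^*)$ exponentially, completing the proof.
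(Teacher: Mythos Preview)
The paper does not actually prove this lemma: it is stated as a cited result from \cite{qu2018exponential} with no accompanying proof in the appendix. Your proposal is a correct and self-contained argument, and it is essentially the approach taken in the original reference---uniqueness via KKT plus strong convexity and injectivity of $A^\top$, and exponential stability via a quadratic Lyapunov function augmented with the cross term $\eta\,\tilde x^\top A^\top\tilde\lambda$ to generate dual decay through $AA^\top\succ0$. Since the paper offers nothing to compare against, there is no divergence to report; your sketch would serve as a faithful reconstruction of the cited proof.
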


Beyond the stability properties of the primal--dual dynamics, our convergence
analysis of the R-GTD algorithm toward the true value function $V^\pi$
also relies on a structural property of a key matrix arising in the analysis.

Recall that
\[
M = \Phi^\top D^\beta (\gamma P^\pi - I)\Phi,\quad
B = \Phi^\top D^\beta \Phi,\quad
b = \Phi^\top D^\beta R^\pi,
\]
where $B \succ 0$, and define $G := M^\top B^{-1} M$ which is positive semidefinite.

If $G$ is a singular matrix, a key step in our analysis is based on the following fact: the regularized matrix defined as
\[
G + c^{-1} H,
\]
with 
\begin{equation}\label{eqn:vv_perp}
H := \sum_{i=1}^k v_i v_i^\top,
\end{equation}
where $\{v_1,\dots,v_k\}$ forms an orthonormal basis of the nullspace of $G$,
remains nonsingular for every $c>0$.  
The next lemma establishes this fact by showing that adding a positive
regularization along the nullspace directions of a singular PSD matrix
yields a strictly positive definite matrix. 
This property will be crucial in the singular-case convergence analysis 
presented in \Cref{sec:convergence-towards-true}. Note that the following lemma does not rely on any new technical insight.
It follows directly from standard spectral decomposition arguments for
symmetric matrices (see, e.g., \citealp{horn2012matrix}), and is stated here for completeness and later use.

\begin{lemma}\label{lemma:nullspace-lifting}\label{app:nullspace-liftinig}
Let $G \in \mathbb{R}^{d\times d}$ be a symmetric positive semidefinite matrix,
and suppose that $G$ is singular with
\[
\mathrm{Null}(G) = \mathrm{span}\{v_1,\dots,v_k\},
\]
where $\{v_1,\dots,v_k\}$ is an orthonormal basis of the nullspace.
Define 
\[
H := \sum_{i=1}^k v_i v_i^\top.
\]
Then, for any $\alpha>0$, the regularized matrix
\[
G_\alpha := G + \alpha H
\]
is positive definite, and hence nonsingular.
\end{lemma}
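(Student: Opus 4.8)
The plan is to diagonalize $G$ via the spectral theorem and then read off the eigenvalues of $G_\alpha$ directly. Since $G$ is symmetric positive semidefinite, there is an orthonormal basis $\{u_1,\dots,u_{d-k},v_1,\dots,v_k\}$ of $\mathbb{R}^d$ consisting of eigenvectors of $G$, where $v_1,\dots,v_k$ span $\mathrm{Null}(G)$ with eigenvalue $0$, and $u_1,\dots,u_{d-k}$ span the orthogonal complement $\mathrm{Null}(G)^\perp = \mathrm{Range}(G)$ with strictly positive eigenvalues $\mu_1,\dots,\mu_{d-k}>0$. The orthogonality between these two groups is automatic because $G$ is symmetric, so that $\mathrm{Range}(G)$ and $\mathrm{Null}(G)$ are orthogonal complements; in particular each $u_j$ is orthogonal to every $v_i$.

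The first key step is to identify the regularizer $H=\sum_{i=1}^{k}v_iv_i^\top$ as the orthogonal projection onto $\mathrm{Null}(G)$. Using the orthonormality of $\{v_1,\dots,v_k\}$, this yields $Hv_i=v_i$ for each $i$ and $Hu_j=0$ for each $j$. The second step is to compute the action of $G_\alpha=G+\alpha H$ on the same basis: for the range directions $G_\alpha u_j = Gu_j+\alpha Hu_j=\mu_j u_j$, and for the null directions $G_\alpha v_i = Gv_i+\alpha Hv_i=\alpha v_i$. Hence the orthonormal eigenbasis of $G$ also diagonalizes $G_\alpha$, whose spectrum is $\{\mu_1,\dots,\mu_{d-k},\alpha,\dots,\alpha\}$. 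Since every $\mu_j>0$ and $\alpha>0$, all eigenvalues of the symmetric matrix $G_\alpha$ are strictly positive, which establishes $G_\alpha\succ 0$ and in particular nonsingularity.

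Equivalently, I could phrase the conclusion through the quadratic form: writing an arbitrary $x=\sum_j a_j u_j+\sum_i b_i v_i$ in this basis gives $x^\top G_\alpha x = \sum_{j}\mu_j a_j^2 + \alpha\sum_{i}b_i^2$, which vanishes only when all $a_j$ and $b_i$ are zero, i.e.\ when $x=0$. I expect no substantive obstacle in this argument; the only points requiring a moment of care are justifying that $H$ is exactly the projector onto $\mathrm{Null}(G)$ and that $\mathrm{Range}(G)\perp\mathrm{Null}(G)$, both of which follow immediately from the symmetry of $G$ together with the orthonormality of the $v_i$. This is precisely why the lemma is described as a consequence of standard spectral decomposition and is included only for completeness.
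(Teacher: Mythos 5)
Your proof is correct, and it lands in the same family as the paper's argument (an elementary spectral positive-definiteness check), but the route differs slightly in mechanism. The paper does not diagonalize anything: it takes an arbitrary nonzero $x$, writes $x^\top G_\alpha x = x^\top G x + \alpha x^\top H x$, and splits into two cases — if $x \notin \mathrm{Null}(G)$ then $x^\top G x > 0$ (using the standard fact that for a PSD matrix the quadratic form vanishes only on the nullspace) while $\alpha x^\top H x \ge 0$; if $x \in \mathrm{Null}(G)\setminus\{0\}$ then $x = \sum_i c_i v_i$ gives $x^\top H x = \|c\|^2 > 0$ while $x^\top G x = 0$. You instead build a common orthonormal eigenbasis $\{u_j\}\cup\{v_i\}$, verify $Hu_j = 0$ and $Hv_i = v_i$, and read off the full spectrum $\{\mu_1,\dots,\mu_{d-k},\alpha,\dots,\alpha\}$ of $G_\alpha$. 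Your approach buys strictly more information — explicit eigenvalues of $G_\alpha$, hence e.g.\ its smallest eigenvalue $\min\{\alpha,\mu_{\min}\}$ and conditioning, which could be useful if one wanted quantitative bounds downstream — at the cost of invoking the full spectral theorem and checking that the given $v_i$ extend to an eigenbasis. The paper's case split is marginally more economical, needing only the fact that $x^\top Gx = 0 \Leftrightarrow Gx = 0$ for PSD $G$. Note also that your closing quadratic-form identity $x^\top G_\alpha x = \sum_j \mu_j a_j^2 + \alpha \sum_i b_i^2$ is exactly the paper's two cases unified into a single coordinate expression, so the two proofs coincide once your diagonalization is in place.
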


\begin{proof}
For any nonzero $x \in \mathbb{R}^d$,
\[
x^\top G_\alpha x
= x^\top G x + \alpha x^\top H x.
\]

If $x \notin \mathrm{Null}(G)$, then $x^\top G x > 0$ because $G\succeq 0$
and the zero eigenvalues occur only along the nullspace directions.
Thus, $x^\top G_\alpha x > 0$. If $x \in \mathrm{Null}(G)\setminus\{0\}$, then 
$x = \sum_{i=1}^k c_i v_i$ for some nonzero vector $c$, and
\[
x^\top H x
= \sum_{i=1}^k (v_i^\top x)^2
= \sum_{i=1}^k c_i^2
= \|c\|^2
> 0.
\]
Thus,
\[
x^\top G_\alpha x = \alpha \|c\|^2 > 0.
\]

Since $x^\top G_\alpha x > 0$ for all $x\neq 0$, the matrix $G_\alpha$ is 
positive definite and hence nonsingular.
\end{proof}

\Cref{lemma:nullspace-lifting} ensures that the regularized matrix
$G + c^{-1}H$ is well-defined and invertible even when $G$ is singular.
To derive explicit expressions and error bounds in the singular case,
however, it is necessary to further characterize the structure of this
regularized matrix. To this end, we require several auxiliary results from linear algebra.
In particular, the following lemmas describe the geometric decomposition
induced by a symmetric matrix, the structure of orthogonal projections,
and the interaction between the Moore--Penrose pseudoinverse and the
nullspace. These results will be combined to prove
\Cref{lem:regularized-inverse}, which plays a central role in the subsequent convergence analysis.
\begin{lemma}{\citep[p.~603]{nocedal2006numerical}}
\label{lem:orthogonal-decomposition-G}
Let $G \in \mathbb{R}^{d\times d}$ be a symmetric matrix.
Then the space admits the orthogonal decomposition
\[
\mathbb{R}^d
=
\mathrm{Range}(G)\;\oplus\;\mathrm{Null}(G),
\]
where $\oplus$ denotes the \emph{orthogonal direct sum}, i.e., the sum of two
mutually orthogonal subspaces.
\end{lemma}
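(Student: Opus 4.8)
The plan is to prove the two defining properties of an orthogonal direct sum separately: (i) that $\mathrm{Range}(G)$ and $\mathrm{Null}(G)$ are mutually orthogonal subspaces, and (ii) that together they span all of $\mathbb{R}^d$. The symmetry of $G$ is the only structural input, and it enters the argument exactly once, in establishing orthogonality.

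First I would establish mutual orthogonality. Take any $y \in \mathrm{Range}(G)$, say $y = Gz$, and any $x \in \mathrm{Null}(G)$, so $Gx = 0$. Then using $G^\top = G$,
\[
x^\top y = x^\top G z = (Gx)^\top z = 0,
\]
so every range vector is orthogonal to every null-space vector, i.e.\ $\mathrm{Range}(G)\perp\mathrm{Null}(G)$. This is the one step where symmetry is essential: dropping it leaves only the general identity $\mathrm{Null}(G) = \mathrm{Range}(G^\top)^\perp$, which relates the null space to the range of the \emph{transpose} rather than of $G$ itself.

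Second, I would argue that the two subspaces fill up $\mathbb{R}^d$. Orthogonality immediately gives $\mathrm{Range}(G) \cap \mathrm{Null}(G) = \{0\}$, since any common vector is orthogonal to itself and is therefore zero; hence the sum is direct. It remains to account for dimensions. By the rank--nullity theorem, $\dim \mathrm{Range}(G) + \dim \mathrm{Null}(G) = d$, and combined with the trivial intersection this forces $\mathrm{Range}(G) \oplus \mathrm{Null}(G) = \mathbb{R}^d$, completing the orthogonal decomposition.

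There is essentially no hard part here, as the result is elementary; the only point requiring care is not to conflate $\mathrm{Range}(G)$ with $\mathrm{Range}(G^\top)$, since for a general nonsymmetric matrix the orthogonal complement of the null space is the row space rather than the column space, and the clean decomposition fails. An equivalent and perhaps more constructive route is to invoke the spectral theorem: write $G = Q\Lambda Q^\top$ with $Q$ orthogonal, group the columns of $Q$ according to whether the corresponding eigenvalue is zero or nonzero, and observe that the nonzero-eigenvalue eigenvectors form an orthonormal basis of $\mathrm{Range}(G)$ while the zero-eigenvalue eigenvectors form an orthonormal basis of $\mathrm{Null}(G)$. The orthonormality of $Q$ then delivers both the orthogonality and the spanning property simultaneously, which is convenient because the companion result \Cref{lemma:nullspace-lifting} already relies on precisely this spectral structure.
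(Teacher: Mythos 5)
Your proof is correct and complete. Note that the paper itself offers no proof of this lemma at all: it is stated as a standard fact cited to a textbook (Nocedal and Wright), so there is no in-paper argument to compare against. Your two-step argument is exactly the standard one: symmetry gives $x^\top G z = (Gx)^\top z = 0$ for $x \in \mathrm{Null}(G)$, hence orthogonality and trivial intersection, and rank--nullity supplies the dimension count that forces the direct sum to be all of $\mathbb{R}^d$. Your closing remark is also well taken on both counts: the caveat about $\mathrm{Range}(G)$ versus $\mathrm{Range}(G^\top)$ is precisely the point where the hypothesis of symmetry is load-bearing, and the alternative spectral-theorem route matches the decomposition style the paper itself uses in its proof of \Cref{lemma:nullspace-lifting}, so either version would splice cleanly into the appendix.
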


\begin{lemma}{\citep[p.~82]{golub2013matrix}}
\label{lem:projection-complement}
Let $S \subset \mathbb{R}^d$ be a subspace and let $\Pi_S$ denote the orthogonal
projection onto $S$. Then the orthogonal projection onto $S^\perp$ satisfies
\[
\Pi_{S^\perp} = I - \Pi_S.
\]
\end{lemma}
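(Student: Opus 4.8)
The plan is to exploit the orthogonal direct-sum decomposition $\mathbb{R}^d = S \oplus S^\perp$, valid for any subspace $S$ (the same structure that \Cref{lem:orthogonal-decomposition-G} records in the special case $S = \mathrm{Range}(G)$). First I would recall that every $x \in \mathbb{R}^d$ admits a \emph{unique} decomposition $x = u + v$ with $u \in S$ and $v \in S^\perp$. Uniqueness follows because $S \cap S^\perp = \{0\}$: if $u + v = u' + v'$ with $u,u' \in S$ and $v,v' \in S^\perp$, then $u - u' = v' - v$ lies in $S \cap S^\perp$ and must therefore vanish.

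Next I would invoke the defining variational property of the orthogonal projection, $\Pi_S x = \argmin_{u \in S} \|x - u\|_2$, and establish the standard first-order optimality condition: the minimizer is characterized by the residual satisfying $x - \Pi_S x \perp S$, i.e., $x - \Pi_S x \in S^\perp$. Combined with $\Pi_S x \in S$, this shows that $x = \Pi_S x + (x - \Pi_S x)$ is precisely the unique $S \oplus S^\perp$ decomposition of $x$, so that $u = \Pi_S x$ and $v = x - \Pi_S x$. Applying the identical argument to the subspace $S^\perp$, and using $(S^\perp)^\perp = S$, identifies $\Pi_{S^\perp} x$ as exactly the $S^\perp$-component $v$ of the same decomposition.

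Putting these together gives $\Pi_{S^\perp} x = v = x - \Pi_S x = (I - \Pi_S)x$ for every $x \in \mathbb{R}^d$, and since this holds pointwise for all $x$, the operator identity $\Pi_{S^\perp} = I - \Pi_S$ follows immediately. There is no genuine obstacle in this argument; the only point that requires care is the justification that the residual $x - \Pi_S x$ is orthogonal to $S$, which is just the stationarity condition for the least-squares minimization over $S$. Everything else is bookkeeping with the uniqueness of the orthogonal decomposition.
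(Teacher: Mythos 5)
Your proof is correct. Note, however, that the paper itself offers no proof of this lemma at all: it is stated as a known fact with a citation to \citet{golub2013matrix}, so there is no in-paper argument to compare against. Your self-contained derivation --- existence and uniqueness of the decomposition $x = u + v$ with $u \in S$, $v \in S^\perp$, the orthogonality of the least-squares residual $x - \Pi_S x$ to $S$, and the identification of $\Pi_{S^\perp}x$ with the $S^\perp$-component via $(S^\perp)^\perp = S$ --- is exactly the standard textbook argument the citation points to, and it is complete: existence of the decomposition is furnished constructively by $x = \Pi_S x + (x - \Pi_S x)$, uniqueness by $S \cap S^\perp = \{0\}$, so the pointwise identity $\Pi_{S^\perp}x = (I - \Pi_S)x$ and hence the operator identity follow with no gaps.
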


\begin{lemma}{\citep[p.~649]{boyd2004convex}}
\label{lem:pseudoinverse-projection}
Let $G \in \mathbb{R}^{m\times n}$ have singular value decomposition
$G = U \Sigma V^\top$, and let $G^\dagger$ denote its Moore--Penrose
pseudoinverse.
Then the matrices
\[
GG^\dagger = UU^\top,
\qquad
G^\dagger G = VV^\top
\]
are the orthogonal projection matrices onto $\mathrm{Range}(G)$ and
$\mathrm{Range}(G^\top)$, respectively.
\end{lemma}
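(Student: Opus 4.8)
The plan is to work with the compact (thin) SVD $G = U\Sigma V^\top$, where $U \in \mathbb{R}^{m\times r}$ and $V \in \mathbb{R}^{n \times r}$ have orthonormal columns (so $U^\top U = V^\top V = I_r$) and $\Sigma \in \mathbb{R}^{r\times r}$ is diagonal with strictly positive entries, $r = \mathrm{rank}(G)$. In this representation the Moore--Penrose pseudoinverse has the explicit form $G^\dagger = V\Sigma^{-1}U^\top$, which I would take as known or, if needed, verify directly against the four Penrose conditions. Given this, the two products in the statement follow by a one-line computation exploiting orthonormality: $GG^\dagger = U\Sigma(V^\top V)\Sigma^{-1}U^\top = UU^\top$ and $G^\dagger G = V\Sigma^{-1}(U^\top U)\Sigma V^\top = VV^\top$, since the inner Gram matrices collapse to $I_r$.

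It then remains to identify $UU^\top$ and $VV^\top$ as the claimed orthogonal projections. First I would invoke the standard characterization that a matrix $P$ is the orthogonal projector onto a subspace $S$ if and only if $P$ is symmetric, idempotent, and satisfies $\mathrm{Range}(P) = S$. For $UU^\top$, symmetry is immediate, and idempotency follows from $(UU^\top)(UU^\top) = U(U^\top U)U^\top = UU^\top$; the identical argument applies to $VV^\top$.

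Second I would pin down the ranges. From $G = U\Sigma V^\top$, every vector $Gx$ lies in $\mathrm{Range}(U)$, giving $\mathrm{Range}(G)\subseteq\mathrm{Range}(U)$; conversely, for any $Uy$ the choice $x = V\Sigma^{-1}y$ yields $Gx = Uy$ (using $V^\top V = I_r$ and invertibility of $\Sigma$), so $\mathrm{Range}(U) = \mathrm{Range}(G)$. Applying the same reasoning to $G^\top = V\Sigma U^\top$ gives $\mathrm{Range}(V) = \mathrm{Range}(G^\top)$. Since $\mathrm{Range}(UU^\top) = \mathrm{Range}(U)$ and $\mathrm{Range}(VV^\top) = \mathrm{Range}(V)$, the two products are precisely the orthogonal projectors onto $\mathrm{Range}(G)$ and $\mathrm{Range}(G^\top)$, which completes the argument.

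I do not expect any genuine obstacle here, as the result is classical linear algebra. The only point requiring care is the choice of the compact rather than the full SVD: in the full SVD one has $U \in \mathbb{R}^{m\times m}$ with $UU^\top = I_m$, which is a projector only when $G$ has full row rank. Hence the statement is meaningful precisely under the thin-SVD convention, in which $U$ and $V$ carry exactly $r$ orthonormal columns.
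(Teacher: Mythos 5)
Your proof is correct. Note that the paper itself offers no proof of this lemma: it is imported verbatim as a classical fact from \citet{boyd2004convex}, so there is no in-paper argument to compare against. Your verification is the standard one — write the thin SVD, use $G^\dagger = V\Sigma^{-1}U^\top$, collapse the Gram matrices to get $GG^\dagger = UU^\top$ and $G^\dagger G = VV^\top$, then identify these as orthogonal projectors via the symmetric-plus-idempotent-plus-range characterization — and every step checks out. Your closing remark is also well taken and worth keeping: the identities $GG^\dagger = UU^\top$ and $G^\dagger G = VV^\top$ hold literally only under the compact-SVD convention (with the full SVD one would instead get $U_r U_r^\top$ built from the first $r$ columns), and this is indeed the convention used in the cited reference and the one consistent with how the paper later applies the lemma, e.g.\ in \Cref{lem:regularized-inverse}, where $GG^\dagger$ must be the projector onto $\mathrm{Range}(G)$ rather than the identity.
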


\begin{lemma}\label{lem:PG_equal_zero}

Let $G \in \mathbb{R}^{n \times n}$ be a symmetric matrix and $\Pi_{\mathcal{N}(G)}$  be the orthogonal projection onto $\operatorname{Null}(G)$. Then, the following identities hold:

$$\Pi_{\mathcal{N}(G)} G^\dagger = 0$$

\end{lemma}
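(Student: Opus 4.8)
The plan is to exploit the single structural fact that the columns of the pseudoinverse $G^\dagger$ all lie in $\operatorname{Range}(G)$, which for a symmetric $G$ is exactly the orthogonal complement of $\operatorname{Null}(G)$; the projection $\Pi_{\mathcal{N}(G)}$ then annihilates each such column. I would assemble this from the linear-algebra lemmas already collected in \Cref{app:math}, in three short steps.

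First, I would rewrite the nullspace projector in terms of $G$ and $G^\dagger$. By \Cref{lem:orthogonal-decomposition-G}, symmetry of $G$ gives the orthogonal direct sum $\mathbb{R}^n = \operatorname{Range}(G) \oplus \operatorname{Null}(G)$, so that $\operatorname{Null}(G)^\perp = \operatorname{Range}(G)$. Combining this with \Cref{lem:projection-complement} yields $\Pi_{\mathcal{N}(G)} = I - \Pi_{\operatorname{Range}(G)}$, and by \Cref{lem:pseudoinverse-projection} the orthogonal projector onto $\operatorname{Range}(G)$ is exactly $GG^\dagger$. Hence $\Pi_{\mathcal{N}(G)} = I - GG^\dagger$.

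Second, I would locate the range of $G^\dagger$. From the defining Moore--Penrose identity $G^\dagger G G^\dagger = G^\dagger$ together with \Cref{lem:pseudoinverse-projection}, which identifies $G^\dagger G$ as the orthogonal projection onto $\operatorname{Range}(G^\top)$, every column of $G^\dagger$ is fixed by $G^\dagger G$ and therefore lies in $\operatorname{Range}(G^\top)$. Since $G$ is symmetric, $\operatorname{Range}(G^\top) = \operatorname{Range}(G)$, so the columns of $G^\dagger$ lie in $\operatorname{Range}(G)$; because $GG^\dagger$ acts as the identity on $\operatorname{Range}(G)$, this is equivalent to the relation $GG^\dagger G^\dagger = G^\dagger$. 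Combining the two steps then gives
\[
\Pi_{\mathcal{N}(G)} G^\dagger = (I - GG^\dagger)G^\dagger = G^\dagger - GG^\dagger G^\dagger = G^\dagger - G^\dagger = 0,
\]
which is the claim.

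I do not anticipate a real obstacle, since the statement is essentially a restatement of the fundamental relation $\operatorname{Null}(G)\perp\operatorname{Range}(G^\top)$; the only point requiring care is the appeal to symmetry, which is precisely what lets me identify $\operatorname{Range}(G^\dagger)=\operatorname{Range}(G)$ with $\operatorname{Null}(G)^\perp$ and hence invoke \Cref{lem:orthogonal-decomposition-G}. For transparency I would also note the equally direct spectral argument: diagonalizing $G = Q\Lambda Q^\top$ gives $G^\dagger = Q\Lambda^\dagger Q^\top$ and $\Pi_{\mathcal{N}(G)} = Q_0 Q_0^\top$, where $Q_0$ collects the zero-eigenvalue eigenvectors; then $Q_0^\top Q$ selects exactly the rows of $\Lambda^\dagger$ that vanish, so the product is zero immediately.
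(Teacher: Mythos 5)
Your proof is correct and follows essentially the same route as the paper's: both rest on the key fact that $\operatorname{Range}(G^\dagger) = \operatorname{Range}(G^\top) = \operatorname{Range}(G)$ for symmetric $G$, so every column of $G^\dagger$ lies in the orthogonal complement of $\operatorname{Null}(G)$ and is therefore annihilated by $\Pi_{\mathcal{N}(G)}$. The only cosmetic difference is that the paper cites this range identity directly and argues vector-wise, whereas you derive it from the Moore--Penrose identity $G^\dagger G G^\dagger = G^\dagger$ and package the conclusion as the operator computation $(I - GG^\dagger)G^\dagger = 0$.
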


\begin{proof}

By the properties of the Moore-Penrose inverse, the range of $G^\dagger$ is equal to the range of $G^\top$. Since $G$ is symmetric, we have $\operatorname{Range}(G^\dagger) = \operatorname{Range}(G)$ \citep[p.~424]{meyer2023matrix}. By \Cref{lem:orthogonal-decomposition-G}, the range and null space of a symmetric matrix are orthogonal, i.e., $\operatorname{Range}(G) \perp \operatorname{Null}(G)$. Since $\Pi_{\mathcal{N}(G)}$ is the orthogonal projection onto $\operatorname{Null}(G)$, it maps any vector in $\operatorname{Range}(G)$ to zero. Consequently, for any vector $x$, $G^\dagger x \in \operatorname{Range}(G)$, which implies $\Pi_{\mathcal{N}(G)}(G^\dagger x) = 0$. Thus, $\Pi_{\mathcal{N}(G)} G^\dagger = 0$. 

\end{proof}

\begin{lemma}\label{lem:regularized-inverse}
For any $c>0$, the regularized matrix $G + c^{-1} \Pi_{\mathcal{N}(G)}$ is invertible and
its inverse admits the explicit form
\[
(G + c^{-1} \Pi_{\mathcal{N}(G)})^{-1} = G^\dagger + c \Pi_{\mathcal{N}(G)} .
\]
\end{lemma}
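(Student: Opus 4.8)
The plan is to verify the claimed identity by direct multiplication, relying on the projection lemmas already collected. Invertibility itself is immediate from \Cref{lemma:nullspace-lifting}: taking $\{v_1,\dots,v_k\}$ to be an orthonormal basis of $\mathrm{Null}(G)$, the matrix $H = \sum_{i=1}^k v_i v_i^\top$ is precisely the orthogonal projection $\Pi_{\mathcal{N}(G)}$, so $G + c^{-1}\Pi_{\mathcal{N}(G)} = G + c^{-1} H$ is positive definite, hence invertible, for every $c>0$. It therefore suffices to show that the proposed matrix $G^\dagger + c\,\Pi_{\mathcal{N}(G)}$ is its inverse.

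First I would expand the product
\[
(G + c^{-1}\Pi_{\mathcal{N}(G)})(G^\dagger + c\,\Pi_{\mathcal{N}(G)})
= GG^\dagger + c\,G\Pi_{\mathcal{N}(G)} + c^{-1}\Pi_{\mathcal{N}(G)}G^\dagger + \Pi_{\mathcal{N}(G)}^2,
\]
and then dispatch the four terms individually. The two cross terms vanish: $G\Pi_{\mathcal{N}(G)} = 0$ because $\Pi_{\mathcal{N}(G)}$ maps every vector into $\mathrm{Null}(G)$, on which $G$ acts as zero, while $\Pi_{\mathcal{N}(G)}G^\dagger = 0$ is exactly \Cref{lem:PG_equal_zero}. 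The last term simplifies via idempotence of a projection, $\Pi_{\mathcal{N}(G)}^2 = \Pi_{\mathcal{N}(G)}$. For the first term, since $G$ is symmetric, \Cref{lem:pseudoinverse-projection} gives $GG^\dagger = \Pi_{\mathrm{Range}(G)}$, the orthogonal projection onto $\mathrm{Range}(G)$.

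It then remains to combine the surviving terms. Using the orthogonal decomposition $\mathbb{R}^d = \mathrm{Range}(G)\oplus\mathrm{Null}(G)$ from \Cref{lem:orthogonal-decomposition-G}, the symmetry of $G$ yields $\mathrm{Range}(G)=\mathrm{Null}(G)^\perp$, so by \Cref{lem:projection-complement} we have $\Pi_{\mathrm{Range}(G)} = I - \Pi_{\mathcal{N}(G)}$. Hence the product collapses to
\[
\Pi_{\mathrm{Range}(G)} + \Pi_{\mathcal{N}(G)} = (I - \Pi_{\mathcal{N}(G)}) + \Pi_{\mathcal{N}(G)} = I.
\]
Because both factors are symmetric, this right inverse is automatically a two-sided inverse, which completes the argument. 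I do not anticipate a genuine obstacle here; the only points requiring care are confirming that both cross terms vanish and that $GG^\dagger$ is the range projection—facts that hinge on the symmetry of $G$ (which identifies $\mathrm{Range}(G^\dagger)$ with $\mathrm{Range}(G)$) and on the orthogonality of the range--nullspace splitting.
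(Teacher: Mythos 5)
Your proposal is correct and follows essentially the same route as the paper's proof: the same four-term expansion of $(G + c^{-1}\Pi_{\mathcal{N}(G)})(G^\dagger + c\,\Pi_{\mathcal{N}(G)})$, with the cross terms killed by $G\Pi_{\mathcal{N}(G)}=0$ and \Cref{lem:PG_equal_zero}, and $GG^\dagger = I - \Pi_{\mathcal{N}(G)}$ obtained from \Cref{lem:pseudoinverse-projection,lem:orthogonal-decomposition-G,lem:projection-complement}. Your two small additions—citing \Cref{lemma:nullspace-lifting} for invertibility up front and noting that a right inverse of a square matrix is two-sided—are sound refinements of the same argument.
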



\begin{proof}
By \Cref{lem:pseudoinverse-projection}, $GG^\dagger$ is the orthogonal
projection onto $\operatorname{Range}(G)$. Since $G$ is symmetric,
\Cref{lem:orthogonal-decomposition-G} implies that
$\mathbb{R}^d = \operatorname{Range}(G) \oplus \operatorname{Null}(G)$.
Therefore, by \Cref{lem:projection-complement},
\[
GG^\dagger = \Pi_{\cal{R}(G)} = I - \Pi_{\cal{N}(G)}.
\]

Moreover, since $\Pi_{\cal{N}(G)}$ projects onto the null space of $G$,
we have $G \Pi_{\cal{N}(G)} = 0$.
By \Cref{lem:PG_equal_zero}, it also holds that
$\Pi_{\cal{N}(G)} G^\dagger = 0$.
Consequently,
\begin{align*}
(G + c^{-1} \Pi_{\cal{N}(G)})(G^\dagger + c \Pi_{\cal{N}(G)})
&= GG^\dagger + c\,G\Pi_{\cal{N}(G)} 
 + c^{-1} \Pi_{\cal{N}(G)} G^\dagger + \Pi_{\cal{N}(G)} \\
&= (I - \Pi_{\cal{N}(G)}) + 0 + 0 + \Pi_{\cal{N}(G)} \\
&= I,
\end{align*}
which proves the claim.
\end{proof}
Observe that the matrix $H$ in \eqref{eqn:vv_perp} is precisely the
orthogonal projection onto $\operatorname{Null}(G)$.
Consequently, the regularization term $c^{-1} H$ acts only on the
nullspace of $G$, leaving its range component unchanged.

\section{Proof of \Cref{prop:1}}
\label{app:prop-1}
\begin{tcolorbox}[colback=white,colframe=black,boxrule=0.8pt]
\noindent\textbf{Proposition~4.1 (restated).}
Let $(\theta_{\mathrm{RGTD}}, \lambda_{\mathrm{RGTD}}, w_{\mathrm{RGTD}})$
denote the optimal solution to the min-max problem in \Cref{problem:rgtd-lagrangian}.
Then the optimal solution admits the following closed-form expressions:
\begin{align*}
\theta_{\mathrm{RGTD}}
&=
\underbrace{
-\bigl(\Phi^\top D^{\beta}(\gamma P^\pi - I)\Phi\bigr)^{-1}
\Phi^\top D^{\beta} R^\pi
}_{\theta_{GTD2}}
\underbrace{
-\bigl(\Phi^\top D^{\beta}(\gamma P^\pi - I)\Phi\bigr)^{-1}
\Phi^\top D^{\beta}\Phi \, w_{\mathrm{RGTD}}
}_{\text{error}},
\\
\lambda_{\mathrm{RGTD}}
&=
-\bigl(\Phi^\top D^{\beta}(\gamma P^\pi - I)\Phi\bigr)^{-\top}
\Phi^\top D^{\beta}\Phi \, \theta_{\mathrm{RGTD}},
\\
w_{\mathrm{RGTD}}
&=
-\frac{1}{c}\,\lambda_{\mathrm{RGTD}}.
\end{align*}
\end{tcolorbox}

The optimal solution $(\theta_{\mathrm{RGTD}}, w_{\mathrm{RGTD}}, \lambda_{\mathrm{RGTD}})$
of \Cref{problem:rgtd-lagrangian} is characterized by the first-order
stationarity conditions with respect to the primal variables $(\theta,w)$
and the dual variable $\lambda$. Taking gradients of the Lagrangian
\[
L(\theta , w, \lambda )
=
\frac{c}{2}w^\top \Phi^\top D^{\beta} \Phi w
+
\frac{1}{2}\theta^\top \Phi^\top D^{\beta} \Phi \theta
+
\lambda^\top \Phi^\top D^{\beta}
(R^\pi + \gamma P^\pi \Phi \theta - \Phi \theta + \Phi w)
\]
and setting them to zero yields
\begin{align}
\nabla_\theta L &= \Phi^\top D^{\beta} \Phi \theta
+ (\gamma P^\pi \Phi - \Phi)^\top D^{\beta} \Phi \lambda = 0,
\label{eq:proof-theta}\\
\nabla_w L &= c\, \Phi^\top D^{\beta} \Phi w
+ \Phi^\top D^{\beta} \Phi \lambda = 0,
\label{eq:proof-w}\\
\nabla_\lambda L &=
\Phi^\top D^{\beta}
(R^\pi + \gamma P^\pi \Phi \theta - \Phi \theta + \Phi w) = 0.
\label{eq:proof-lambda}
\end{align}

From~\eqref{eq:proof-w}, we directly obtain
\begin{equation}\label{eq:proof-w-sol}
w = -\frac{1}{c}\,\lambda .
\end{equation}

Substituting~\eqref{eq:proof-w-sol} into~\eqref{eq:proof-theta} gives
\[
\Phi^\top D^{\beta} \Phi \theta
=
(\Phi - \gamma P^\pi \Phi)^\top D^{\beta} \Phi \lambda .
\]
Since $\Phi^\top D^{\beta}(\gamma P^\pi - I)\Phi$ is nonsingular under
\Cref{assumption:2}, this equation can be solved for $\theta$,
yielding
\[
\theta
=
-\bigl(\Phi^\top D^{\beta}(\gamma P^\pi - I)\Phi\bigr)^{-1}
\Phi^\top D^{\beta} \Phi \, \lambda .
\]

Finally, substituting this expression together with~\eqref{eq:proof-w-sol}
into~\eqref{eq:proof-lambda} and rearranging terms gives
\[
\theta
=
-\bigl(\Phi^\top D^{\beta}(\gamma P^\pi - I)\Phi\bigr)^{-1}
\Phi^\top D^{\beta} R^\pi
-
\bigl(\Phi^\top D^{\beta}(\gamma P^\pi - I)\Phi\bigr)^{-1}
\Phi^\top D^{\beta} \Phi \, w ,
\]
which yields the expression for $\theta_{\mathrm{RGTD}}$ in
\eqref{eq:theta-rgtd}. The expressions for $\lambda_{\mathrm{RGTD}}$
and $w_{\mathrm{RGTD}}$ then follow directly from
\eqref{eq:proof-theta} and~\eqref{eq:proof-w-sol}, completing the proof.

\section{Proof of~\Cref{problem:6}}\label{app:3}

\begin{tcolorbox}[colback=white,colframe=black,boxrule=0.8pt]
\noindent\textbf{Problem~6 (restated).}
Solve for $\theta \in \mathbb{R}^q$ the optimization
\begin{align*}
\min_{\theta \in \mathbb{R}^q}\;
\frac{c}{2}\lVert \Pi_{\mathcal{R}(\Phi)}(R^\pi + \gamma P^\pi \Phi\theta - \Phi\theta)\rVert_{D^{\beta}}^2
+ \frac{1}{2}\lVert \Phi\theta\rVert_{D^{\beta}}^2.
\end{align*}
This problem is equivalent to \Cref{problem:4} in the sense that both
formulations admit the same unique optimal solution for $\theta$.
\end{tcolorbox}

We begin by restating the optimization problem in \Cref{problem:4} for clarity.
We aim to find $(\theta, w) \in \mathbb{R}^q$ that solve
\begin{align}
\min_{\theta,w \in \mathbb{R}^q}
\quad & \frac{c}{2} w^\top \Phi^\top D^{\beta} \Phi w
      + \frac{1}{2} \theta^\top \Phi^\top D^{\beta} \Phi \theta \nonumber\\
{\rm subject\ to} 
\quad & 0 = \Phi^\top D^{\beta} \left( R^\pi 
      + \gamma P^\pi \Phi \theta - \Phi \theta + \Phi w \right). 
      \label{eq:problem-5-restated}
\end{align}

We now analyze the constraint~\eqref{eq:problem-5-restated}. 
The constraint can be written as
\begin{align*}
0 = \Phi^\top D^{\beta} \left( R^\pi 
+ \gamma P^\pi \Phi \theta
- \Phi \theta 
+ \Phi w \right).
\end{align*}
Rearranging the terms yields
\begin{align} \label{eqn:phi_w}
\Phi^\top D^{\beta} \Phi w
= -\Phi^\top D^{\beta} 
\left( R^\pi 
+ \gamma P^\pi \Phi\theta 
- \Phi\theta \right).
\end{align}

Since $\Phi$ has full column rank, the matrix $\Phi^\top D^{\beta} \Phi$ is positive definite.
Therefore, it is invertible, and we can solve for $w$ as
\begin{align*}
w 
= -(\Phi^\top D^{\beta} \Phi)^{-1} 
   \Phi^\top D^{\beta} (R^\pi + \gamma P^\pi \Phi \theta - \Phi \theta).
\end{align*}

This implies:
\begin{align*}
\Phi w = -\Pi_{\mathcal{R}(\Phi)}(R^\pi+\gamma P^\pi \Phi\theta-\Phi\theta)
\end{align*}
where $\Pi_{\mathcal{R}(\Phi)}:=\Phi(\Phi^\top D ^\beta \Phi)^{-1}\Phi ^\top D^\beta$ is the projection matrix onto Range($\Phi$). 
Substituting this expression for 
$\Phi w$ back into the objective function of \Cref{problem:4} yields:
\begin{align*}
\frac{c}{2} \lVert \Pi_{\mathcal{R}(\Phi)}(R^\pi + \gamma P^\pi \Phi\theta - \Phi\theta) \rVert_{D^\beta}^2 + \frac{1}{2} \lVert \Phi \theta \rVert_{D^\beta}^2,
\end{align*}
which matches exactly the objective function of \Cref{problem:6}.
Therefore, minimizing \Cref{problem:4} with respect to $(\theta, w)$ under the given constraint is equivalent to directly minimizing \Cref{problem:6} with respect to $\theta$.
\section{Verification of the full row rank condition}\label{app-prop:4}
\begin{proposition}\label{prop:4}
Consider the trajectory $(\theta_t, w, \lambda_t)$ of the PDGD in~\eqref{eq:continuous-rgtd}, and let $(\theta_{\mathrm{RGTD}}, w_{\mathrm{RGTD}},  \lambda_{\mathrm{RGTD}})$ be the corresponding saddle point. Then, $(\theta_t, w_t, \lambda_t)\to (\theta_{\mathrm{RGTD}}, w_{\mathrm{RGTD}},  \lambda_{\mathrm{RGTD}})$ exponentially as $t\to \infty$.
\end{proposition}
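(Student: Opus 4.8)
The plan is to derive \Cref{prop:4} as a direct instance of \Cref{lemma:1}, the exponential stability result of \citet{qu2018exponential}. First I would recast \Cref{problem:4} in the abstract constrained form~\eqref{eq:constrained-optimization} by stacking the primal variables into $x := (\theta^\top, w^\top)^\top \in \mathbb{R}^{2q}$, setting
\[
f(x) := \tfrac12 \theta^\top B \theta + \tfrac{c}{2} w^\top B w, \qquad B := \Phi^\top D^\beta \Phi,
\]
and writing the equality constraint as $Ax = -b$ with $A := [\,M \;\; B\,]$, $M := \Phi^\top D^\beta(\gamma P^\pi - I)\Phi$, and $b := \Phi^\top D^\beta R^\pi$. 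A short computation on the gradients of the Lagrangian in~\eqref{eqn:r-gtd-lagrangian} then confirms that the PDGD~\eqref{eq:continuous-rgtd} is exactly the primal--dual flow $\dot x = -\nabla f(x) - A^\top\lambda$, $\dot\lambda = Ax + b$ associated with this $(f, A, b)$, so that its equilibrium coincides with the saddle point $(\theta_{\mathrm{RGTD}}, w_{\mathrm{RGTD}}, \lambda_{\mathrm{RGTD}})$.

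Next I would verify the hypotheses of \Cref{lemma:1}. The objective $f$ is a quadratic form with constant block-diagonal Hessian $\nabla^2 f = \operatorname{diag}(B,\, cB)$. Since $\Phi$ has full column rank and $D^\beta \succ 0$, we have $B \succ 0$; hence $\nabla^2 f \succ 0$ and $f$ is twice differentiable, $\mu$-strongly convex, and $l$-smooth with $\mu = \min(1,c)\,\lambda_{\min}(B) > 0$ and $l = \max(1,c)\,\lambda_{\max}(B) < \infty$. This settles the convexity and smoothness requirements immediately.

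The crucial---and the only nontrivial---step is verifying that $A = [\,M \;\; B\,]$ has full row rank. Here the regularization pays off: $B$ is a $q\times q$ positive definite matrix of rank $q$, and $A$ has exactly $q$ rows, so the column space of $A$ already contains $\operatorname{Range}(B) = \mathbb{R}^q$. Thus $A$ is surjective, i.e., full row rank, \emph{regardless of the rank of the FIM $M$}. This is precisely the point at which R-GTD dispenses with \Cref{assumption:2}: whereas the GTD2 constraint matrix $M$ alone can be rank-deficient, appending the full-rank block $B$ restores full row rank unconditionally.

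With all hypotheses of \Cref{lemma:1} in force, I would conclude that the saddle point is unique and that the trajectory of~\eqref{eq:continuous-rgtd} converges to it exponentially, which is the assertion of \Cref{prop:4}. The main obstacle is conceptual rather than computational: recognizing and justifying the full-row-rank property of the augmented constraint matrix, since this is the structural mechanism that guarantees a well-posed saddle point even in the singular regime. The remaining pieces---matching~\eqref{eq:continuous-rgtd} to the abstract PDGD and computing the strong-convexity and smoothness constants from the eigenvalues of $B$---are routine.
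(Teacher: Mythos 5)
Your proposal is correct and takes essentially the same route as the paper's own proof: both reduce \Cref{prop:4} to \Cref{lemma:1} of \citet{qu2018exponential} by checking strong convexity and smoothness of the quadratic objective and full row rank of the augmented constraint matrix $A = [\,M \;\; \Phi^\top D^\beta \Phi\,]$, the latter holding unconditionally because $\Phi^\top D^\beta \Phi \succ 0$. If anything, your write-up is more explicit than the paper's terse verification (which asserts the rank condition in one line and leans on the main-text discussion), since you spell out the strong-convexity constants and the surjectivity argument for $A$.
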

\begin{proof}
Note that~\eqref{eq:problem-5} has a strongly convex, smooth, and twice differentiable objective function. Moreover, constraint term can be presented as $0 = \Phi^\top D^{\beta}R^\pi
  + 
  \begin{bmatrix}
    \gamma\,\Phi^\top D^{\beta} P^\pi\Phi - \Phi^\top D^{\beta} \Phi
    & \Phi^\top D^{\beta}\Phi
  \end{bmatrix}
  \begin{bmatrix}
    \theta \\[6pt]
    w
  \end{bmatrix}.$ 
which is full row rank by assumption. These conditions ensure that all assumptions of \Cref{lemma:1} hold. Hence, by \Cref{lemma:1}, the PDGD of~\Cref{problem:4}, given in \eqref{eq:continuous-rgtd}, is globally asymptotically stable, and converges to its equilibrium point $(\theta_{RGTD}, w_{RGTD},\lambda_{RGTD}).$
\end{proof}
\section{Proof of~\Cref{thm:convergence2}}\label{app:2}
\begin{tcolorbox}[colback=white,colframe=black,boxrule=0.8pt]
\noindent\textbf{\Cref{thm:convergence2} (restated).}
Let us consider~\Cref{alg:rgtd-alg}, and assume that the step-size satisfy~\eqref{eq:step-size-rule}.
Then, $(\theta_k, w_k, \lambda_k)\to ( \theta_{\mathrm{RGTD}}, w_{\mathrm{RGTD}}, \lambda_{\mathrm{RGTD}})$ as $k\to \infty$ with probability one.
\end{tcolorbox}

The algorithm \eqref{eq:discrete-rgtd} can be rewritten by
\begin{align*}
\left[ {\begin{array}{*{20}c}
   {\theta _{k + 1} }  \\
   {w _{k+1} } \\
   {\lambda _{k + 1} }  \\
\end{array}} \right] = \left[ {\begin{array}{*{20}c}
   {\theta _k }  \\
   {w _k} \\
   {\lambda _k }  \\
\end{array}} \right] + \alpha _k \left( {f\left( {\left[ {\begin{array}{*{20}c}
   {\theta _k }  \\
   {w _k} \\
   {\lambda _k }  \\
\end{array}} \right]} \right) + \varepsilon _{k + 1} } \right),
\end{align*}
where
\begin{align*}
f\left( 
\begin{bmatrix}
\theta \\ w \\ \lambda
\end{bmatrix}
\right) 
:={}&
A
\begin{bmatrix}
\theta \\ w \\ \lambda
\end{bmatrix}
+ b,
\end{align*}



{\small
\setlength{\arraycolsep}{1pt}

\begin{align}
A &=
\left[
\begin{array}{ccc}
  -\Phi ^\top D^\beta R^\pi & 0 &
  -\Phi ^\top (\gamma P^\pi - I) ^\top D^\beta \Phi \\[0.6em]
  0 & -c\,\Phi ^\top D^\beta \Phi & -\Phi ^\top D^\beta \Phi \\[0.6em]
  \Phi ^\top D^\beta (\gamma P^\pi - I)\Phi & \Phi ^\top D^\beta \Phi & 0
\end{array}
\right],
\\[0.5em]
b &=
\begin{bmatrix}
  0 \\
  0 \\
  \Phi^\top D^\beta R^\pi
\end{bmatrix}.
\end{align}

\setlength{\arraycolsep}{5pt}
}

and
\begin{align*}
\varepsilon _{k + 1}  =
\left[
\begin{array}{c}
  -\Phi ^\top e_{s_k}e_{s_k}^\top\Phi \theta_{k}-((\gamma e_{s_k}e_{s_k'}\Phi)^\top e_{s_k}e_{s_k}^\top\Phi-\Phi ^\top e_{s_k}e_{s_k}^\top\Phi )\lambda_k)
   \\
   -c \Phi ^\top e_{s_k}e_{s_k}^\top\Phi w_k - \Phi ^\top e_{s_k}e_{s_k}^\top\Phi\lambda_k
   \\
   \Phi ^\top e_{s_k } e_{s_k }^\top (e_{s_k } r(s_k ,a_k ,s_k ')
\end{array}
\right] - f\!\left(
\begin{bmatrix}
   \theta _k \\ 
   w_k         \\ 
   \lambda _k
\end{bmatrix}
\right).
\end{align*}
The proof is completed by examining all the statements in~\Cref{assumption:1}. To prove the first statement of~\Cref{assumption:1}, we have
\begin{align*}
&\mathop {\lim }\limits_{c \to \infty } f\left( {c\left[ {\begin{array}{*{20}c}
   \theta   \\
   w        \\
   \lambda   \\
\end{array}} \right]} \right)/c
=f_\infty  \left( {\left[ {\begin{array}{*{20}c}
   \theta   \\
   w    \\
   \lambda   \\
\end{array}} \right]} \right)
 =A\left[ {\begin{array}{*{20}c}
   \theta   \\
   w        \\
   \lambda   \\
\end{array}} \right].
\end{align*}
Moreover, since $f$ is affine in its argument, it is globally Lipschitz continuous.
The second statement of~\Cref{assumption:1}: The PDGD of~\Cref{problem:4} can be written as
\begin{align*}
&\frac{d}{{dt}}\left[ {\begin{array}{*{20}c}
   {\theta_t - \theta _\infty }  \\
   {w_t - w _\infty} \\
   {\lambda_t - \lambda _\infty }  \\
\end{array}} \right]
= \bar{A}\left[ {\begin{array}{*{20}c}
   {\theta_t - \theta _\infty }  \\
   {w _t - w _\infty} \\
   {\lambda_t - \lambda _\infty }  \\
\end{array}} \right],
\end{align*}
where $\bar{A}$ is as follows

\begin{align*}
\setlength{\arraycolsep}{-2pt} 
\bar{A} =
\left[
\begin{array}{@{\hskip 0pt}ccc@{\hskip 0pt}} 
  -\Phi ^\top D^\beta R^\pi & 0 &
  -(\gamma P^\pi - I) ^\top D^\beta \Phi \\[0.5em]
  0 & -c\,\Phi ^\top D^\beta \Phi & -\Phi ^\top D^\beta \Phi \\[0.5em]
  \Phi ^\top D^\beta (\gamma P^\pi - I)\Phi & \Phi ^\top D^\beta \Phi & 0
\end{array}
\right],
\setlength{\arraycolsep}{5pt} 
\end{align*}

by \Cref{lemma:1} and \Cref{prop:4} the origin is the globally asymptotically stable equilibrium point. Now, one can observe that it is identical to the ODE
\[
\frac{d}{{dt}}\left[ {\begin{array}{*{20}c}
   {\theta  - \theta _\infty }  \\
   {w - w _\infty} \\
   {\lambda  - \theta _\infty }  \\
\end{array}} \right] = f_\infty  \left( {\left[ {\begin{array}{*{20}c}
   {\theta_t - \theta _\infty }  \\
   {w_t - w_\infty}             \\
   {\lambda_t - \theta _\infty }  \\
\end{array}} \right]} \right).
\]
Therefore, its origin is the globally asymptotically stable equilibrium point.
The third statement of~\Cref{assumption:1}: The ODE, $\frac{d}{{dt}}\left[ {\begin{array}{*{20}c}
   \theta_t \\
   w_t      \\
   \lambda_t \\
\end{array}} \right] = f\left( {\left[ {\begin{array}{*{20}c}
   \theta_t \\
   w_t      \\
   \lambda_t \\
\end{array}} \right]} \right)$, is identical to the PDGD of~\Cref{problem:4}. Therefore, it admits a unique globally asymptotically stable equilibrium point by~\Cref{lemma:1}.
Next, we prove the remaining parts. Recall that the R-GTD update. Define the history $\mathcal{G}_k$ as 
\begin{align*}
\mathcal{G}_k := (\varepsilon_k, \varepsilon_{k-1}, \ldots, \varepsilon_1, 
\theta_k, \theta_{k-1}, \ldots, \theta_0,w_k, w_{k-1}, \ldots, w_0, 
\lambda_k, \lambda_{k-1}, \ldots, \lambda_0)
\end{align*}
and the process $(M_k)_{k=0}^\infty$ with $M_k:=\sum_{i=1}^k {\varepsilon_i}$. Then, we can prove that $(M_k)_{k=0}^\infty$ is Martingale. To do so, we first prove ${\mathbb E}[\varepsilon_{k+1}|{\mathcal G}_k]=0$ by

\begin{align*}
&{\mathbb E}[\varepsilon _{k + 1} |{\mathcal G}_k ] \\
=&\, {\mathbb E}\!\left[
\left.
\left[
\begin{array}{*{20}{c}}
 -\Phi ^\top e_{s_k}e_{s_k}^\top\Phi \theta_{k}-((\gamma e_{s_k}e_{s_k'}\Phi)^\top e_{s_k}e_{s_k}^\top\Phi-\Phi ^\top e_{s_k}e_{s_k}^\top\Phi )\lambda_k)
   \\
   -c \Phi ^\top e_{s_k}e_{s_k}^\top\Phi w_k - \Phi ^\top e_{s_k}e_{s_k}^\top\Phi\lambda_k
   \\
   \Phi ^\top e_{s_k } e_{s_k }^\top (e_{s_k } r(s_k ,a_k ,s_k ')
 \Bigr)
\end{array}
\right]
\right| {\mathcal G}_k
\right]
- {\mathbb E}\!\left[
\left. f\!\left(\!\begin{bmatrix} \theta_k \\ w_k \\ \lambda_k \end{bmatrix}\!\right)
\right|{\mathcal G}_k
\right] \\
=&\,
{\mathbb E}\!\left[
\left. f\!\left(\!\begin{bmatrix} \theta_k \\ w_k \\ \lambda_k \end{bmatrix}\!\right)
\right|{\mathcal G}_k
\right]
- {\mathbb E}\!\left[
\left. f\!\left(\!\begin{bmatrix} \theta_k \\ w_k \\ \lambda_k \end{bmatrix}\!\right)
\right|{\mathcal G}_k
\right] = 0,
\end{align*}
where the second equality is due to the i.i.d. assumption of samples. Using this identity, we have
\begin{align*}
{\mathbb E}[M_{k+1}|{\mathcal G}_k]=& {\mathbb E}\left[ \left. \sum_{i=1}^{k+1}{\varepsilon_i} \right|{\mathcal G}_k\right]={\mathbb E}[\varepsilon_{k+1}|{\mathcal G}_k]+{\mathbb E}\left[ \left. \sum_{i=1}^k {\varepsilon_i} \right|{\mathcal G}_k \right]\\
=&{\mathbb E}\left[\left.\sum_{i=1}^k{\varepsilon_i} \right|{\mathcal G}_k \right]=\sum_{i=1}^k {\varepsilon_i}=M_k.
\end{align*}
Therefore, $(M_k)_{k=0}^\infty$ is a Martingale sequence, and $\varepsilon_{k+1} = M_{k+1}-M_k$ is a Martingale difference. Moreover, it can be easily proved that the second statement of the fourth condition of~\Cref{assumption:1} is satisfied by algebraic calculations. Therefore, the fourth condition is met.

\section{Deriving the normal equation for the R-GTD optimality condition}\label{app-lem:5}
We begin by recalling the notational conventions used throughout the main text:
\[
M = \Phi^\top D^\beta (\gamma P^\pi - I)\Phi,\quad
B = \Phi^\top D^\beta \Phi,\quad
b = \Phi^\top D^\beta R^\pi,
\]
where $B \succ 0$, and define 
\( G := M^\top B^{-1}M \), 
\( K := G^{-1} B G^{-1} M^\top B^{-1} b \).

This appendix provides the identity that allows the optimality condition of the R-GTD 
problem to be rewritten in a linear normal-equation form.  
Before deriving closed-form expressions of the R-GTD solution and establishing
error bounds, it is necessary to show that the optimal parameter $\theta$ satisfies
\[
\bigl(M^\top B^{-1} M + \tfrac{1}{c}B\bigr)\theta
= - M^\top B^{-1} b.
\]
The lemma below formally establishes this relation, which serves as
the starting point for the analysis developed in the main text.



\begin{lemma}\label{lem:5}
Let $M$, $B$, and $b$ be defined as in the notation above. Then the optimal solution $\theta$ of \Cref{problem:6} satisfies the linear system
\[
\bigl(M ^\top B^{-1} M + \tfrac{1}{c} B \bigr)\theta = - M ^\top B^{-1} b.
\]
\end{lemma}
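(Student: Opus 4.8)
The plan is to reduce the objective of \Cref{problem:6} to an explicit quadratic form in $\theta$ using the matrices $M$, $B$, $b$, and then impose the first-order stationarity condition. Since the objective is strongly convex (the $\tfrac{1}{2}\theta^\top B\theta$ term with $B \succ 0$ makes it so), its unique minimizer is characterized entirely by $\nabla_\theta f = 0$, so the stationary equation will fully determine the solution.

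First I would rewrite the projection term. Because $\Phi\theta \in \mathcal{R}(\Phi)$, the projection fixes it, so $\Pi_{\mathcal{R}(\Phi)}(R^\pi + \gamma P^\pi\Phi\theta - \Phi\theta) = \Pi_{\mathcal{R}(\Phi)}(R^\pi + \gamma P^\pi\Phi\theta) - \Phi\theta$, identifying the first term of the objective with $c\cdot{\rm MSPBE}(\theta)$. Substituting the closed form $\Pi_{\mathcal{R}(\Phi)} = \Phi B^{-1}\Phi^\top D^\beta$ and setting $u := \Phi^\top D^\beta(R^\pi + (\gamma P^\pi - I)\Phi\theta) = M\theta + b$, the projected squared norm telescopes: one factor of $\Phi^\top D^\beta\Phi = B$ cancels against a $B^{-1}$, leaving $\|\Pi_{\mathcal{R}(\Phi)}(\cdots)\|_{D^\beta}^2 = u^\top B^{-1} u = (M\theta + b)^\top B^{-1}(M\theta + b)$. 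This recovers the quadratic MSPBE form already recorded in the notation paragraph.

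Next the regularization term simplifies immediately to $\|\Phi\theta\|_{D^\beta}^2 = \theta^\top \Phi^\top D^\beta \Phi\theta = \theta^\top B\theta$, so the objective becomes $f(\theta) = \frac{c}{2}(M\theta + b)^\top B^{-1}(M\theta + b) + \frac{1}{2}\theta^\top B\theta$. Differentiating this quadratic yields $\nabla_\theta f = cM^\top B^{-1}(M\theta + b) + B\theta$; setting it to zero and dividing through by $c$ gives the claimed normal equation $(M^\top B^{-1}M + \tfrac{1}{c}B)\theta = -M^\top B^{-1}b$.

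The only nontrivial step is the algebraic collapse of the projection term, which hinges on the idempotency identity $\Phi^\top D^\beta\Phi = B$ together with the symmetry of $D^\beta$ and $B$; everything else is routine differentiation of a quadratic. I do not anticipate any genuine obstacle, since the full column rank of $\Phi$ guarantees $B \succ 0$ and hence that all inverses appearing in the argument are well defined.
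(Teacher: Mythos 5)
Your proposal is correct and follows essentially the same route as the paper's proof: both reduce the objective of \Cref{problem:6} to the quadratic form $\tfrac{c}{2}(M\theta+b)^\top B^{-1}(M\theta+b) + \tfrac12\theta^\top B\theta$ via the closed-form projection $\Pi_{\mathcal{R}(\Phi)} = \Phi B^{-1}\Phi^\top D^\beta$, then invoke strong convexity (from $B \succ 0$) and set the gradient to zero, dividing by $c$ to obtain the normal equation. The only difference is cosmetic: you spell out the idempotency/cancellation algebra that the paper states implicitly.
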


\begin{proof}
From \Cref{problem:6}, using the projection $\Pi_{\mathcal{R}(\Phi)} = \Phi B^{-1}\Phi ^\top D^\beta$ 
and the definitions 
$M = \Phi ^\top D^\beta (\gamma P^\pi - I)\Phi$,
$B = \Phi ^\top D^\beta \Phi$,
and $b = \Phi ^\top D^\beta R^\pi$, 
we can rewrite the objective as
\[
J(\theta)
= \frac{c}{2}\theta ^\top M ^\top B^{-1} M \theta
+ c\, \theta ^\top M ^\top B^{-1} b
+ \frac{1}{2}\theta ^\top B \theta
+ \text{const.}
\]
Since $B \succ 0$, the problem is strongly convex.
Taking the gradient with respect to $\theta$ yields
\[
\nabla_\theta J(\theta)
= c\, M ^\top B^{-1} M \theta + c\, M ^\top B^{-1} b + B\theta.
\]
Setting $\nabla_\theta J(\theta)=0$ gives
\[
(c\, M ^\top B^{-1} M + B)\theta = - c\, M ^\top B^{-1} b,
\]
and dividing both sides by $c$ leads to
\[
(M ^\top B^{-1} M + \tfrac{1}{c} B)\theta = - M ^\top B^{-1} b.
\]
\end{proof}

\section{Proof of~\Cref{lem:6}}\label{app-lem:6}
\begin{tcolorbox}[colback=white,colframe=black,boxrule=0.8pt]
\noindent\textbf{\Cref{lem:6} (restated).}
Let $M,B,b,G,K$ be defined as above, 
and let $\theta_{\mathrm{RGTD}}$ be given by \eqref{eq:rgtd-solution}. Let $\Pi_{\mathcal{N}(G)}
$ denote the orthogonal projection onto the null space of $G$.
If $M$ is singular, then there exists a constant
$
c_0 := \|B - \Pi_{\mathcal{N}(G)}\|_2
$
such that for all $c > c_0$, every
$\theta_{\mathrm{GTD2}}\in\Theta_{\mathrm{GTD2}}$
defined in \eqref{eq:gtd2-set} admits the expansion
\begin{equation*}
\theta_{\mathrm{RGTD}}
=
\bigl(\theta_{\mathrm{GTD2}}
      - \Pi_{\mathcal{N}(G)}(\theta_{\mathrm{GTD2}})
 \bigr)
+ O\!\left(\frac{1}{c}\right).
\end{equation*}
If $M$ is nonsingular, then there exists a constant
$
c_0 := \|G^{-1}B\|_2
$
such that for all $c > c_0$, the unique GTD2 solution
$\theta_{\mathrm{GTD2}}$ given in
\eqref{eq:gtd2-solution-nonsingular} admits the expansion
\begin{equation*}
\theta_{\mathrm{RGTD}}
= \theta_{\mathrm{GTD2}}
+ \frac{1}{c} K
+ O\!\left(\frac{1}{c^2}\right).
\end{equation*}
\end{tcolorbox}

We prove the two cases separately.

\noindent\textbf{(a) Nonsingular case.}  
Let $G := M ^\top B^{-1}M$.  
If $G$ is nonsingular, the R-GTD solution can be written as
\begin{equation*}
\theta_{RGTD} = -(G + \tfrac{1}{c}B)^{-1} M ^\top B^{-1} b.
\end{equation*}

To expand the inverse of \(G + \tfrac{1}{c}B\), we first rewrite the matrix
as a perturbation of \(G\):
\[
G + \tfrac{1}{c}B 
= G\Bigl(I + \tfrac{1}{c} G^{-1}B\Bigr).
\]
Since \(G \succ 0\) in the nonsingular case, the matrix \(G^{-1}B\) is well-defined.
Moreover, for sufficiently large \(c\), the operator norm satisfies
\[
\left\|\tfrac{1}{c}G^{-1}B\right\|_2 < 1 .
\]

Define $c_0 := \|G^{-1}B\|_2 $, then, for all $c > c_0$, we have $\|c^{-1}G^{-1}B\|_2 < 1$, and the Neumann series expansion is valid.

Thus, the inverse of \(I + X\) with 
\[
X := \tfrac{1}{c}G^{-1}B
\]
can be expanded using the classical Neumann series
(see \citet{horn2012matrix}):
\[
(I + X)^{-1}
= I - X + X^2 - X^3 + \cdots, \qquad \|X\|_2 < 1.
\]
Applying this expansion yields
\[
(G + \tfrac{1}{c}B)^{-1}
= \Bigl(I + \tfrac{1}{c}G^{-1}B\Bigr)^{-1} G^{-1}
= \left(I - \tfrac{1}{c}G^{-1}B 
+ \tfrac{1}{c^{2}}\, G^{-1} B G^{-1} B 
+\cdots\right) G^{-1}.
\]
Multiplying out gives the first-order perturbation expansion
\[
(G + \tfrac{1}{c}B)^{-1}
= G^{-1} - \tfrac{1}{c} G^{-1} B G^{-1}
+ \tfrac{1}{c^{2}}\, G^{-1} B G^{-1} B G^{-1}+\cdots
\]
Therefore,
\[
\theta_{\mathrm{RGTD}} 
= -\Bigl(G^{-1} - \tfrac{1}{c} G^{-1} B G^{-1} + O(1/c^2)\Bigr) M ^\top B^{-1} b.
\]
Now, let us define 
\[
\theta_{\mathrm{GTD2}} := -G^{-1} M^\top B^{-1} b ,
\]
which coincides with the unique GTD2 solution in the nonsingular case
(cf. \eqref{eq:gtd2-solution-nonsingular} with \(G = M^\top B^{-1}M\)) and also define 
$
K := G^{-1} B G^{-1} M ^\top B^{-1} b
$.
Then, we obtain
\[
\theta_{\mathrm{RGTD}} = \theta_{\mathrm{GTD2}} + \frac{1}{c} K + O\!\left(\tfrac{1}{c^2}\right).
\]

\noindent\textbf{(b) Singular case.} 
Suppose that \(M\) is singular, and hence 
\(G = M^\top B^{-1} M\) is positive semidefinite but not invertible.

Although \(G^{-1}\) does not exist, the regularized matrix
$
G + \tfrac{1}{c} B
$
is invertible for every \(c>0\) because \(B \succ 0\) implies  
\(G + \tfrac{1}{c} B \succeq \tfrac{1}{c}B \succ 0\).  
Therefore, the R-GTD solution
\[
\theta_{\mathrm{RGTD}}
= (G + c^{-1} B)^{-1} M^{\top} B^{-1} b
\]
is well-defined for all \(c>0\).

To relate this solution to the GTD2 solutions, we first decompose any 
\(\theta_{\mathrm{GTD2}} \in \Theta_{GTD2}\) into its nullspace and orthogonal components:
\[
\theta_{\mathrm{GTD2}} = v + v_\perp, 
\qquad 
v \in \operatorname{Null}(G), 
\qquad 
v_\perp \in \operatorname{Null}(G)^\perp .
\]
Because \(Gv = 0\), the GTD2 equation \(G\theta_{GTD2} = M^{T}B^{-1}b\) reduces to
\[
G v_\perp = M^{\top} B^{-1} b.
\]
To express \(v_\perp\) explicitly, we introduce a small regularization 
in the nullspace direction:
\[
\left[ G + c^{-1} \sum_{i=1}^n v_i v_i^{\top} \right] v_\perp
= M^{\top} B^{-1} b,
\]
where ${c^{ - 1}}\sum\limits_{i = 1}^n {{v_i}} v_i^\top{v_ \bot } = 0$. 
By~\Cref{app:nullspace-liftinig} of Appendix, $G + c^{-1} \sum_{i=1}^n v_i v_i^{T}$ is invertible, and thus, we can derive
\[
v_\perp 
= \left( G + c^{-1} \sum_{i=1}^n v_i v_i^{T} \right)^{-1} 
M^{T} B^{-1} b.
\]

Note that although the auxiliary parameter $c$ is introduced to regularize the
operator and ensure nonsingularity, the orthogonal component $v_\perp$ of the
GTD2 solution is in fact independent of $c$.
This can be seen by explicitly characterizing the inverse of the regularized
operator. 

Recall that $\{v_1,\dots,v_k\}$ is a basis of
$\operatorname{Null}(G)$.
Then the matrix
\[
\sum_{i=1}^k v_i v_i^\top
\]
coincides with the orthogonal projection onto $\operatorname{Null}(G)$.
Indeed, for any $x \in \mathbb{R}^d$,
\[
\sum_{i=1}^k v_i v_i^\top x
= \sum_{i=1}^k v_i \langle v_i, x \rangle
= \Pi_{\cal{N}(G)} x,
\]
which implies
\[
\sum_{i=1}^k v_i v_i^\top = \Pi_{\cal{N}(G)}.
\]

It follows from \Cref{lem:regularized-inverse} that the inverse of the regularized operator admits
the explicit form
\[
(G + c^{-1} \Pi_{\cal{N}(G)})^{-1} = G^\dagger + c \Pi_{\cal{N}(G)} , \qquad c > 0.
\]

Since the GTD2 equation $G\theta = M^\top B^{-1} b$ is solvable, we have
$M^\top B^{-1} b \in \mathrm{Range}(G)$, and hence
\[
\Pi_{\cal{N}(G)}\, M^\top B^{-1} b = 0 .
\]
Consequently,
\[
\bigl(G + c^{-1} \Pi_{\cal{N}(G)}\bigr)^{-1} M^\top B^{-1} b
=
\bigl(G^\dagger + c \Pi_{\cal{N}(G)}\bigr) M^\top B^{-1} b
=
G^\dagger M^\top B^{-1} b,
\]
which shows that the orthogonal component $v_\perp$ is independent of $c$.

Thus,
\[
\theta_{\mathrm{GTD2}} 
= v + v_\perp
= \left( G + c^{-1} \sum_{i=1}^n v_i v_i^{\top} \right)^{-1}
      M^{T} B^{-1} b + v.
\]


Then,
\[
\theta_{\mathrm{GTD2}} 
-\Pi_{\cal{N}(G)}(\theta_{\mathrm{GTD2}})
= 
\left( G + c^{-1} \sum_{i=1}^n v_i v_i^{\top} \right)^{-1}
M^{T} B^{-1} b.
\]

Next, we decompose the R-GTD solution as
\[
\theta_{\mathrm{RGTD}} = u + u_\perp, 
\qquad 
u \in \operatorname{Null}(G), 
\qquad 
u_\perp \in \operatorname{Null}(G)^\perp .
\]
From the definition of \(\theta_{\mathrm{RGTD}}\),
\[
u_\perp 
= (G + c^{-1} B)^{-1} M^{\top} B^{-1} b - u.
\]

To compare this expression with the GTD2 decomposition, we rewrite
\[
G + c^{-1}B
=
G 
+ c^{-1} \sum_{i=1}^n v_i v_i^{\top} 
+ c^{-1}\!\left( B - \sum_{i=1}^n v_i v_i^{\top} \right).
\]
Factoring out the first term gives
\[
(G + c^{-1}B)^{-1}
=
\left[ I + c^{-1}\!\left( B - \sum_{i=1}^n v_i v_i^{\top} \right) \right]^{-1}
\left( G + c^{-1} \sum_{i=1}^n v_i v_i^{\top} \right)^{-1}.
\]

Let $c_0 := \|B - \Pi_{\mathcal{N}(G)}\|_2$. For all $c > c_0$, the bracketed term is a small perturbation of the identity since $\|\frac{1}{c}(B - \sum v_i v_i^\top)\|_2 < 1$. Thus, its inverse admits the first-order expansion:
\[
\left[ I + c^{-1}\!\left( B - \sum_{i=1}^n v_i v_i^{\top} \right) \right]^{-1}
=
I + O(1/c).
\]
Substituting this into the expression for \(u_\perp\) yields
\[
u_\perp 
=
\left( G + c^{-1} \sum_{i=1}^n v_i v_i^{\top} \right)^{-1}
M^{\top} B^{-1} b
+ O\!\left(\tfrac{1}{c}\right) - u.
\]
Therefore,
\[
\theta_{\mathrm{RGTD}}
= u + u_\perp
=
\left( G + c^{-1} \sum_{i=1}^n v_i v_i^{\top} \right)^{-1}
M^{T} B^{-1} b
+ O\!\left(\tfrac{1}{c}\right).
\]

Using the earlier identity for the GTD2 orthogonal component,
\[
\theta_{\mathrm{GTD2}} - \Pi_{\cal{N}(G)}(\theta_{\mathrm{GTD2}})
=
\left( G + c^{-1} \sum_{i=1}^n v_i v_i^{\top} \right)^{-1}
M^{T} B^{-1} b,
\]
we conclude that
\[
\theta_{\mathrm{RGTD}}
=
\left(
\theta_{\mathrm{GTD2}} - 
\Pi_{\cal{N}(G)}(\theta_{\mathrm{GTD2}})
\right)
+ O\!\left(\tfrac{1}{c}\right).
\]

Finally, letting \(c \to \infty\) yields the limit
\[
\lim_{c \to \infty} \theta_{\mathrm{RGTD}}
=
\theta_{\mathrm{GTD2}} 
- \Pi_{\cal{N}(G)}(\theta_{\mathrm{GTD2}}),
\]
which shows that the R-GTD solution removes the nullspace component 
of the GTD2 solution as the regularization parameter grows.

\section{Auxiliary lemma for error bounds}\label{app-lem:7}


\begin{lemma}\label{lem:7}
Let $\operatorname{dist}(x, S) := \inf_{y \in S} \|x - y\|_2$
denote the Euclidean distance from a point $x$ to a convex set $S$. Then

\noindent\textbf{(a) Nonsingular case.} 
If $M$ is nonsingular, the GTD2 solution $\theta_{\mathrm{GTD2}}$ is exist. Define
$
\Gamma :=
\|\Phi\|_2
\|G^{-1} B G^{-1} B G^{-1}\|_2
\|M^\top B^{-1} b\|_2.
$
\begin{align*}
\|\Phi\theta_{\mathrm{RGTD}} - \Phi\theta_{\mathrm{GTD2}}\|_2
\le \frac{\|\Phi\|_2\,\|K\|_2}{c} +  O\!\left(\tfrac{\Gamma}{c^2}\right).
\end{align*}

\noindent\textbf{(b) Singular case.}
If $M$ is singular, the GTD2 solution set is
\begin{equation}
\label{app:distance-nonsingular}
\Theta_{\mathrm{GTD2}} = \theta_{\mathrm{GTD2}} + \mathrm{Null}(G),
\end{equation}
where $\theta_{\mathrm{GTD2}}$ denotes an arbitrary but fixed element of $\Theta_{\mathrm{GTD2}}$. Then
\begin{equation}
\label{app:distance-singular}
\mathrm{dist}\bigl(\Phi\theta_{\mathrm{RGTD}},\, \Phi\Theta_{\mathrm{GTD2}}\bigr)
= O\!\left(\tfrac{1}{c}\right).
\end{equation}
\end{lemma}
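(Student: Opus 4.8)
The plan is to obtain both bounds as direct consequences of the asymptotic expansions already established in \Cref{lem:6}, which characterize $\theta_{\mathrm{RGTD}}$ in terms of a reference GTD2 solution up to an $O(1/c)$ (singular) or $O(1/c^2)$ (nonsingular) remainder. The only additional work is to pass from the parameter space to the prediction space $\Phi\theta$, which I would handle by multiplying through by $\Phi$ and invoking submultiplicativity of the operator norm, $\|\Phi x\|_2 \le \|\Phi\|_2\|x\|_2$.

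For part (a), I would start from the nonsingular expansion $\theta_{\mathrm{RGTD}} = \theta_{\mathrm{GTD2}} + \tfrac{1}{c}K + O(1/c^2)$ in \eqref{eq:rgtd-nonsingular-expansion}, subtract $\theta_{\mathrm{GTD2}}$, and apply $\Phi$ to obtain $\Phi\theta_{\mathrm{RGTD}} - \Phi\theta_{\mathrm{GTD2}} = \tfrac{1}{c}\Phi K + \Phi\cdot O(1/c^2)$. The triangle inequality then produces a leading term $\tfrac{1}{c}\|\Phi K\|_2 \le \tfrac{\|\Phi\|_2\|K\|_2}{c}$. To make the remainder constant explicit as $\Gamma$, I would retain the next-order term of the Neumann series for $(G + \tfrac{1}{c}B)^{-1}$ used in the proof of \Cref{lem:6}, namely the coefficient $G^{-1}BG^{-1}BG^{-1}$ of $1/c^2$; multiplying by $M^\top B^{-1}b$ and then by $\Phi$ and bounding operator norms yields precisely $\Gamma/c^2 = \tfrac{1}{c^2}\|\Phi\|_2\|G^{-1}BG^{-1}BG^{-1}\|_2\|M^\top B^{-1}b\|_2$, which absorbs all higher-order contributions for $c>c_0$.

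For part (b), the key observation is that the limiting point in the singular expansion \eqref{eq:rgtd-singular-expansion}, namely $v_\perp := \theta_{\mathrm{GTD2}} - \Pi_{\mathcal{N}(G)}(\theta_{\mathrm{GTD2}})$, already lies in the solution set: since $-\Pi_{\mathcal{N}(G)}(\theta_{\mathrm{GTD2}}) \in \mathrm{Null}(G)$, we have $v_\perp \in \theta_{\mathrm{GTD2}} + \mathrm{Null}(G) = \Theta_{\mathrm{GTD2}}$, and hence $\Phi v_\perp \in \Phi\Theta_{\mathrm{GTD2}}$. Because $\mathrm{dist}(\cdot,\Phi\Theta_{\mathrm{GTD2}})$ is an infimum over the set, I would bound it by the distance to this single witness point, $\mathrm{dist}(\Phi\theta_{\mathrm{RGTD}}, \Phi\Theta_{\mathrm{GTD2}}) \le \|\Phi\theta_{\mathrm{RGTD}} - \Phi v_\perp\|_2$. \Cref{lem:6} gives $\theta_{\mathrm{RGTD}} - v_\perp = O(1/c)$, so the right-hand side is at most $\|\Phi\|_2\,O(1/c) = O(1/c)$, as claimed.

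The genuinely delicate step is the second-order bookkeeping in part (a): the remainder in \Cref{lem:6} is stated only as $O(1/c^2)$, so to justify the explicit constant $\Gamma$ I must re-enter the Neumann expansion of $(I + \tfrac{1}{c}G^{-1}B)^{-1}$ and verify that the $1/c^2$ coefficient is exactly $-G^{-1}BG^{-1}BG^{-1}M^\top B^{-1}b$ while all terms of order $1/c^3$ and higher stay uniformly controlled for $c>c_0$. Part (b) is comparatively routine, the only point requiring care being the verification that $v_\perp$ is a legitimate element of $\Theta_{\mathrm{GTD2}}$ so that it serves as a valid witness for the infimum.
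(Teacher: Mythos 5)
Your proposal is correct and takes essentially the same route as the paper's proof: part (a) follows by applying $\Phi$ to the nonsingular expansion of \Cref{lem:6}, using submultiplicativity, and identifying the $1/c^2$ Neumann coefficient $G^{-1}BG^{-1}BG^{-1}M^\top B^{-1}b$ to justify $\Gamma$; part (b) uses the identical witness point $\theta_{\mathrm{GTD2}} - \Pi_{\mathcal{N}(G)}(\theta_{\mathrm{GTD2}}) \in \Theta_{\mathrm{GTD2}}$ to bound the infimum, exactly as the paper does.
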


\begin{proof}
Recall that
\[
M = \Phi^\top D^\beta (\gamma P^\pi - I)\Phi,\quad
B = \Phi^\top D^\beta \Phi \succ 0,\quad
b = \Phi^\top D^\beta R^\pi,
\]
and define \( G := M^\top B^{-1} M \).
We consider the two cases separately.

\noindent\textbf{(a) Nonsingular case.}  
Let $G = M ^\top B^{-1}M$, which is nonsingular.  
From \Cref{lem:6}, the R-GTD solution admits the expansion
\begin{align*}
\theta_{\mathrm{RGTD}} = \theta_{\mathrm{GTD2}} + \frac{1}{c} K + O\!\left(\tfrac{1}{c^2}\right), \quad K = G^{-1} B G^{-1} M ^\top B^{-1} b.
\end{align*}
Multiplying both sides by $\Phi$, we get 
\[
\Phi\theta_{\mathrm{RGTD}} - \Phi\theta_{\mathrm{GTD2}}
= \frac{1}{c}\Phi K + O\!\left(\tfrac{1}{c^2}\right).
\]
Taking the norm, we have
\begin{align*}
\|\Phi\theta_{\mathrm{RGTD}} - \Phi\theta_{\mathrm{GTD2}}\|_2
&\le \frac{1}{c}\|\Phi K\|_2 + \|\Phi \cdot O(1/c^2)\|_2 \\
&\le \frac{\|\Phi\|_2\,\|K\|_2}{c} + O\!\left(\tfrac{\Gamma}{c^2}\right),
\end{align*}
where $\Gamma :=
\|\Phi\|_2
\|G^{-1} B G^{-1} B G^{-1}\|_2
\|M^\top B^{-1} b\|_2.$
To see this, recall the higher-order expansion of the inverse from
Lemma~\ref{lem:6}:
\[
(G + \tfrac{1}{c}B)^{-1}
= G^{-1}
- \tfrac{1}{c}G^{-1} B G^{-1}
+ \tfrac{1}{c^2}G^{-1} B G^{-1} B G^{-1}
+ \cdots .
\]
The term corresponding to $1/c^2$ in the expansion is
$
\frac{1}{c^2}
\bigl(G^{-1} B G^{-1} B G^{-1}\bigr) M^\top B^{-1} b.
$
By the definition of $\Gamma$, the norm of this second-order term
multiplied by $\Phi$ is bounded by $\Gamma/c^2$, which justifies the
expansion.

\noindent\textbf{(b) Singular case.}
When $M$ is singular, the GTD2 solution set is the affine space
\[
\Theta_{\mathrm{GTD2}}
=
\theta_{\mathrm{GTD2}} + \mathrm{Null}(G),
\qquad 
\theta_{\mathrm{GTD2}} \in \Theta_{\mathrm{GTD2}}.
\]
From the singular case analysis in \Cref{app-lem:6}, the R-GTD
solution satisfies
\[
\theta_{\mathrm{RGTD}}
=
\theta_{\mathrm{GTD2}} - \Pi_{\cal{N}(G)}(\theta_{\mathrm{GTD2}})
+ O\!\left(\tfrac{1}{c}\right),
\]
where the vector
\(
\theta_{\mathrm{GTD2}}^{\perp}
=
\theta_{\mathrm{GTD2}} - \Pi_{\mathcal{N}(G)}(\theta_{\mathrm{GTD2}})
\)
is the unique component of $\theta_{\mathrm{GTD2}}$ lying in
$\mathrm{Null}(G)^\perp$.
Since
\(
\theta_{\mathrm{GTD2}}^{\perp} \in \Theta_{\mathrm{GTD2}},
\)
by the definition of the distance to a set, we have
\[
\mathrm{dist}(\Phi\theta_{\mathrm{RGTD}},\Phi\Theta_{\mathrm{GTD2}})
\le
\|\Phi(\theta_{\mathrm{RGTD}}-\theta_{\mathrm{GTD2}}^{\perp})\|_2.
\]
Using the expansion above yields
\[
\theta_{\mathrm{RGTD}}-\theta_{\mathrm{GTD2}}^{\perp}
= O\!\left(\tfrac{1}{c}\right),
\]
and therefore
\[
\mathrm{dist}(\Phi\theta_{\mathrm{RGTD}},\Phi\Theta_{\mathrm{GTD2}})
\le
\|\Phi\|_2\, O\!\left(\tfrac{1}{c}\right)
=
O\!\left(\tfrac{1}{c}\right).
\]
\end{proof}

\section{Proof of~\Cref{thm:2}}\label{app-thm:2}
\begin{tcolorbox}[colback=white,colframe=black,boxrule=0.8pt]

\noindent\textbf{\Cref{thm:2} (restated).}
The prediction error satisfies
\begin{align*}
\|\Phi\theta_{\mathrm{RGTD}} - \Phi\theta_*^\pi\|_2
&\le 
\mathrm{dist}(\Phi\theta_*^\pi,\; \Phi\Theta_{\mathrm{GTD2}})
+
\bigl\|
\Pi_{\Phi\Theta_{\mathrm{GTD2}}}(\Phi\theta_{\mathrm{RGTD}})
-
\Pi_{\Phi\Theta_{\mathrm{GTD2}}}(\Phi\theta_*^\pi)
\bigr\|_2
\\&\quad+
O\!\left(\tfrac{1}{c}\right).
\end{align*}

If $M$ is nonsingular, then $\Theta_{\mathrm{GTD2}}=\{\theta_{\mathrm{GTD2}}\}$ 
and the projection operator becomes the identity.
In this case, the bound simplifies to
\begin{align*}
\|\Phi\theta_{\mathrm{RGTD}} - \Phi\theta_*^\pi\|_2
&\le
\|\Phi\theta_{\mathrm{GTD2}} - \Phi\theta_*^\pi\|_2
+ \frac{\|\Phi\|_2\|K\|_2}{c}
+ O\!\left(\tfrac{1}{c^2}\right).
\end{align*}
\end{tcolorbox}

We prove the two cases separately.

\noindent\textbf{(a) Nonsingular case.}  
From \Cref{lem:6}, the R-GTD solution admits the first-order expansion
\begin{align*}
\Phi\theta_{\mathrm{RGTD}}
&= \Phi\theta_{\mathrm{GTD2}} + \frac{1}{c}\Phi K + O\!\left(\tfrac{1}{c^2}\right),
\end{align*}
where
\[
K := G^{-1} B G^{-1} M ^\top B^{-1} b, \quad 
G := M ^\top B^{-1}M.
\]
Hence, subtracting $\Phi\theta_*^\pi$ and taking the 2-norm yields the asymptotic relation
\begin{align*}
\lVert\Phi\theta_{\mathrm{RGTD}} - \Phi\theta_*^\pi\rVert_2 
&= \lVert\Phi\theta_{\mathrm{GTD2}} - \Phi\theta_*^\pi 
+  \tfrac{1}{c}\Phi K 
+ O\!\left(\tfrac{1}{c^2}\right)\rVert_2 \le \lVert \Phi\theta_{\mathrm{GTD2}} - \Phi\theta_*^\pi \rVert_2 
+ \frac{\|\Phi\|_2\,\|K\|_2}{c} 
\\&\quad+ O\!\left(\tfrac{1}{c^2}\right).
\end{align*}
This proves the non singular case.

\noindent\textbf{(b) Singular case.}
When \(M\) is singular, the GTD2 solution set is the affine space
\(
\Theta_{\mathrm{GTD2}}
= \theta_{\mathrm{GTD2}} + \mathrm{Null}(G)
\), and the distance is presented as \eqref{app:distance-singular}. To relate \(\theta_{\mathrm{RGTD}}\) to the projected fixed point,
insert and subtract the projection onto \(\Phi\Theta_{\mathrm{GTD2}}\):
\begin{align*}
\|\Phi\theta_{\mathrm{RGTD}} - \Phi\theta_*^\pi\|_2
&\le 
\mathrm{dist}(\Phi\theta_{\mathrm{RGTD}}, \Phi\Theta_{\mathrm{GTD2}})
+
\mathrm{dist}(\Phi\theta_*^\pi, \Phi\Theta_{\mathrm{GTD2}})
\\&\quad+
\|\Pi_{\Phi\Theta_{\mathrm{GTD2}}}(\Phi\theta_{\mathrm{RGTD}})
    - \Pi_{\Phi\Theta_{\mathrm{GTD2}}}(\Phi\theta_*^\pi)\|_2.
\end{align*}
Using the estimate
\(
\mathrm{dist}(\Phi\theta_{\mathrm{RGTD}}, \Phi\Theta_{\mathrm{GTD2}})
= O(1/c)
\)
from above yields
\[
\|\Phi\theta_{\mathrm{RGTD}} - \Phi\theta_*^\pi\|_2
\le
\mathrm{dist}(\Phi\theta_*^\pi,\Phi\Theta_{\mathrm{GTD2}})
+ \|\Pi_{\Phi\Theta_{\mathrm{GTD2}}}(\Phi\theta_{\mathrm{RGTD}})
    - \Pi_{\Phi\Theta_{\mathrm{GTD2}}}(\Phi\theta_*^\pi)\|_2 + O\!\left(\tfrac{1}{c}\right),
\]
which establishes the singular case.

\newpage
\section{Additional experiments}\label{app:additional-exp}
This appendix provides additional experimental results that complement the
main findings by evaluating the behavior of R-GTD against several baseline algorithms across a broader
range of settings, including non-singular random MDPs, singular case,
canonical off-policy counterexamples, and stochastic tabular environments. In addition, we include a systematic robustness analysis via a condition number sweep, which examines the performance of the algorithms across a wide spectrum of ill-conditioned regimes.

All experiments, including those presented in the main paper, were conducted on a single workstation equipped with an Intel Core i7-11800H CPU, an NVIDIA GeForce RTX 3060 GPU, and 16GB RAM. Due to the lightweight nature of the environments, each experimental run required only a few minutes.

\subsection{Non-singular MDP}
We first examine convergence in the non-singular setting where \Cref{assumption:2} holds. The randomly generated MDPs with $100$ states, $10$ actions, and $\gamma = 0.9$ were used. Rewards $r(s,a,s')$ were drawn from a uniform distribution over $[-1,1]$ and then sparsified by setting elements with $|r(s,a,s')|\leq 0.2$ to zero. The feature matrix $\Phi$ was full column rank with 10 features, and regularization coefficients $c \in {0.2, 0.4, 1}$. For each MDP, the experiment was repeated $30$ times, and the distribution of $\|\theta_k - \theta_*^\pi\|_2$ across runs was summarized using box-and-whisker plots. \Cref{fig:nonsingular-case} shows that R-GTD exhibits convergence comparable to GTD2 in this setting. Moreover, larger $c$ accelerates convergence, consistent with theoretical predictions.



\begin{figure}[htbp]
    \centering
    \includegraphics[width=0.6\linewidth]{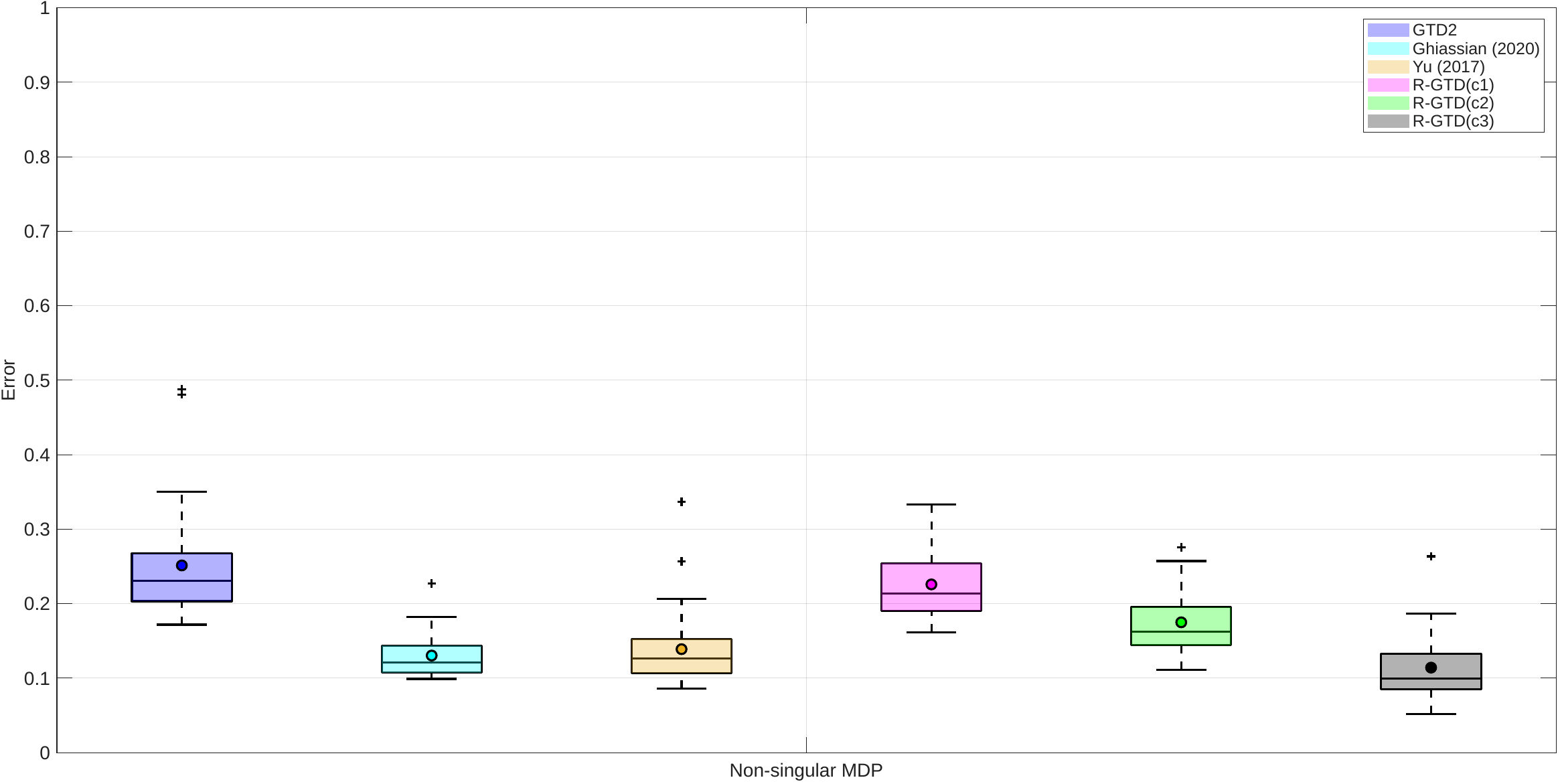}
    \caption{
        Box-and-whisker representation of $\|\theta_k - \theta_*^\pi\|_2$
        over 30 independent runs.
    }
    \label{fig:nonsingular-case}
\end{figure}

\begin{table}[h]
\centering
\caption{Final error of algorithms on non-singular MDP Environment}
\begin{tabular}{l|c}
\hline
Algorithm & Final Error ($mean \pm std$) \\ \hline
GTD2 & $0.2514 \pm 0.0768$ \\
Ghiassian (2020) & $0.1304 \pm 0.0299$ \\
Yu (2017) & $0.1390 \pm 0.0529$ \\
R-GTD(c1) & $0.2259 \pm 0.0445$ \\
R-GTD(c2) & $0.1750 \pm 0.0420$ \\
R-GTD(c3) & $\mathbf{0.1142 \pm 0.0441}$ \\
\hline
\end{tabular}
\end{table}

\newpage
\subsection{Robustness to initialization in the singular case}
\label{sec:robustness_initialization}
To further investigate the robustness of the proposed method in the singular regime, 
we evaluate its sensitivity to the magnitude of the initial null-space component. 
In particular, we consider different scaling factors for the initialization, 
which directly control the extent of the null-space error.

We conduct experiments with scaling factors of $100$, $200$, and $1000$. 
The case with scaling factor $1000$ is already presented in the main text, 
and here we provide additional quantitative results in terms of mean and variance 
to complement the visual comparison.

The experiments are conducted on a randomly generated MDP with $100$ states, 
$10$ actions, and discount factor $\gamma = 0.99$. Rewards $r(s,a,s')$ are 
drawn uniformly from $[-1,1]$ and sparsified by setting entries with 
$|r(s,a,s')|\le 0.2$ to zero. The feature matrix $\Phi$ has full column rank 
with $10$ features.

As shown in \Cref{tab:initialization_robustness}, R-GTD consistently achieves 
stable convergence across all initialization scales, while GTD2 exhibits 
significant variance. These results demonstrate that the stability of R-GTD 
is not due to favorable initialization, but rather an inherent property of 
the regularized formulation, which effectively suppresses the null-space 
component regardless of its magnitude.
\begin{figure}[htbp]
    \centering
    \begin{subfigure}[t]{0.48\linewidth}
        \centering
        \includegraphics[width=\linewidth]{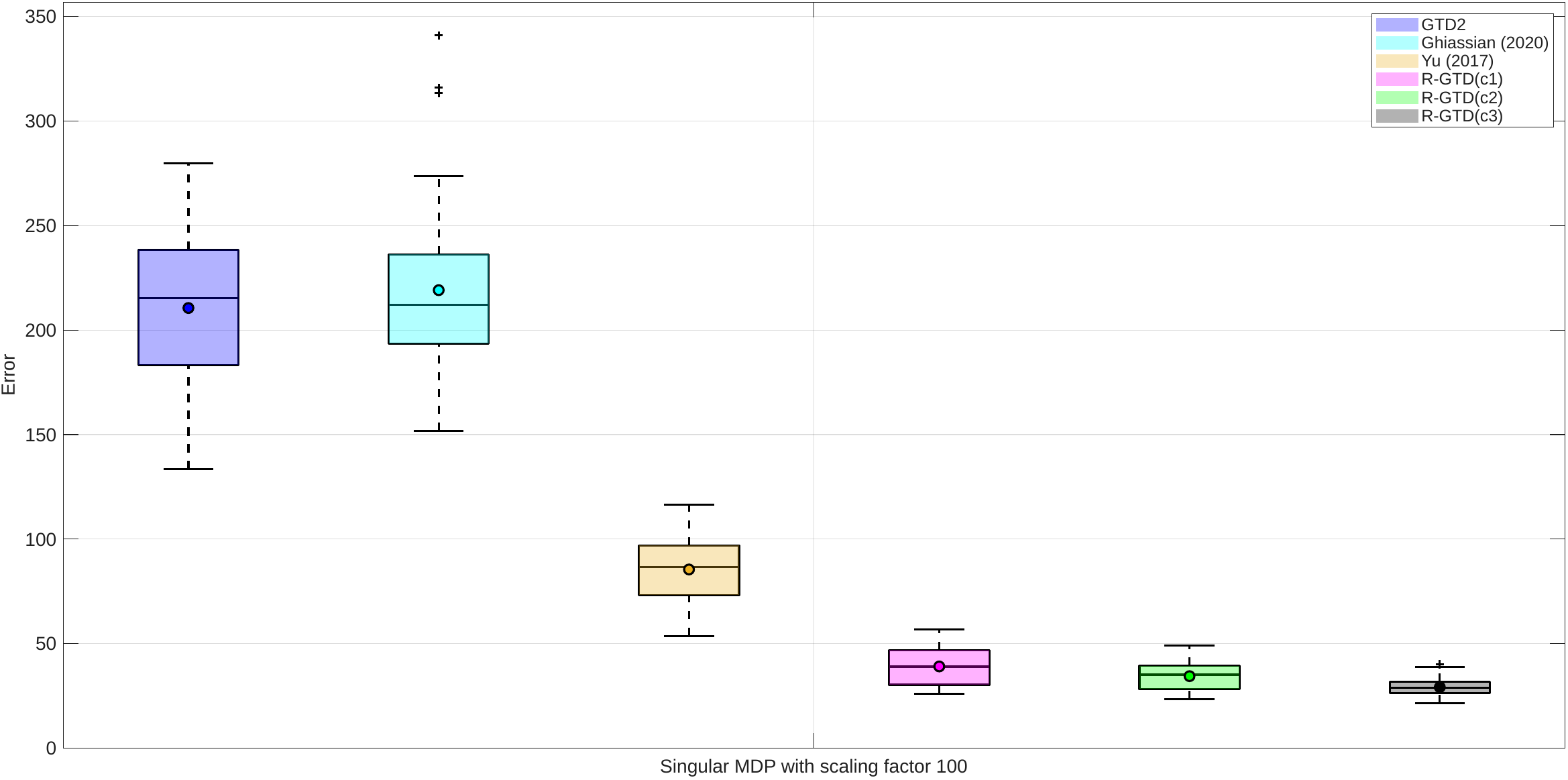}
        \caption*{scaling factor 100}
    \end{subfigure}
    \hfill
    \begin{subfigure}[t]{0.48\linewidth}
        \centering
        \includegraphics[width=\linewidth]{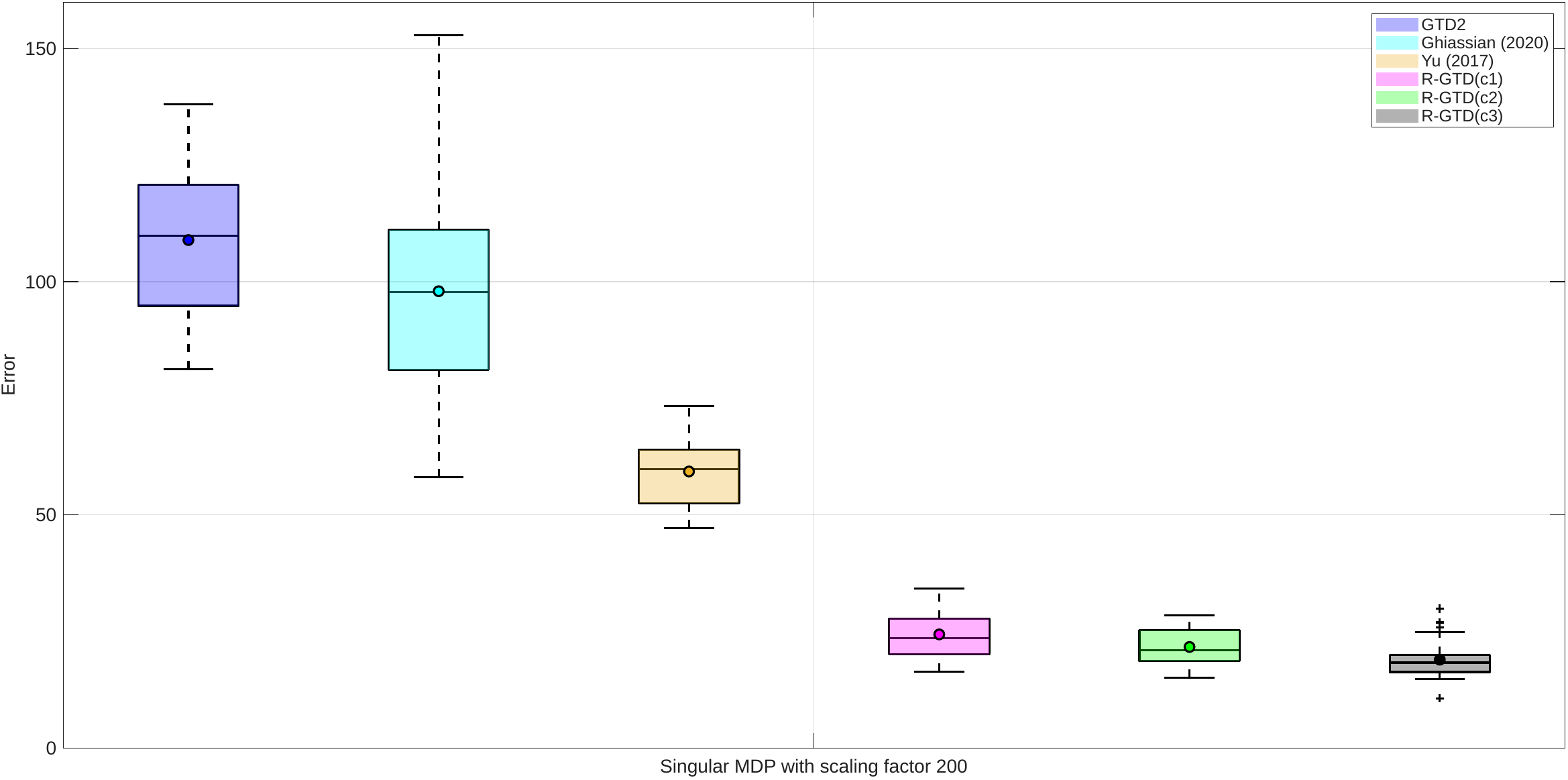}
        \caption*{scaling factor 200}
    \end{subfigure}
    
    \caption{
        Box-and-whisker representation of $\|\theta_k - \theta_*^\pi\|_2$
        over 30 independent runs.
    }
    \label{fig:singular_scaling_factor}
\end{figure}

\begin{table}[h]
\centering
\caption{Final error of algorithms under different magnitudes of null-space component initialization.}
\label{tab:initialization_robustness}
\begin{tabular}{l|c|c|c}
\hline
\multirow{2}{*}{Algorithm} & \multicolumn{3}{c}{Final Error ($\text{mean} \pm \text{std}$)} \\ \cline{2-4}
 & Scale = 100 & Scale = 200 & Scale = 1000 \\ \hline
GTD2 & $131.3479 \pm 12.9018$ & $108.9250 \pm 15.8008$ & $210.5517 \pm 33.0405$ \\
Ghiassian (2020) & $126.8904 \pm 17.3706$ & $97.9613 \pm 20.9245$ & $219.0772 \pm 45.9638$ \\
Yu (2017) & $49.4193 \pm 5.9066$ & $59.3214 \pm 7.3198$ & $85.5006 \pm 15.1741$ \\
R-GTD(c1) & $19.4368 \pm 5.0751$ & $24.3974 \pm 4.9124$ & $39.1171 \pm 9.3873$ \\
R-GTD(c2) & $17.5592 \pm 4.0874$ & $21.7006 \pm 4.0498$ & $34.4717 \pm 6.6838$ \\
R-GTD(c3) & $\mathbf{15.7314 \pm 3.2652}$ & $\mathbf{18.9793 \pm 3.9212}$ & $\mathbf{29.1966 \pm 4.1442}$ \\
\hline
\end{tabular}
\end{table}

\newpage
\subsection{Baird's counterexample}
We next evaluate off-policy stability using Baird’s counterexample
in~\citet{baird1995residual}, whose structure is illustrated in
\Cref{fig:baird} (Left). Quantitative results in \Cref{table:baird} reveal that 
R-GTD consistently achieves lower final error than GTD2 and Ghiassian (2020),
and performs competitively with Yu (2017). 

These results indicate that R-GTD not only ensures stable convergence 
in off-policy settings but also provides reliable and accurate 
solutions that are competitive with strong baseline algorithms.

\begin{figure}[htbp]
    \centering
    
    \begin{subfigure}[t]{0.48\linewidth}
        \centering
        \includegraphics[width=\linewidth]{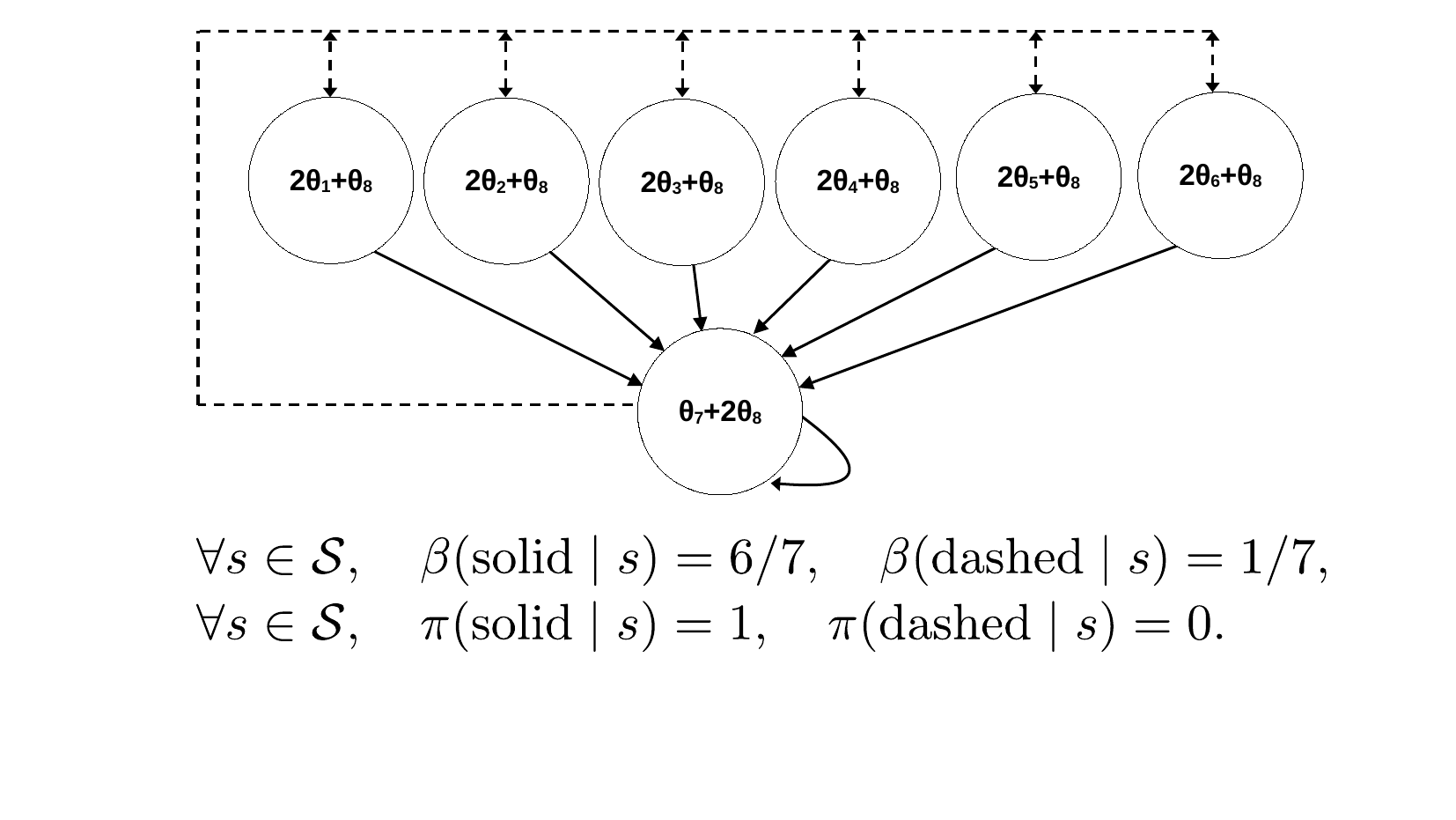}
        \caption{Baird's counterexample diagram}
    \end{subfigure}
    \hfill
    \begin{subfigure}[t]{0.48\linewidth}
        \centering
        \includegraphics[width=\linewidth]{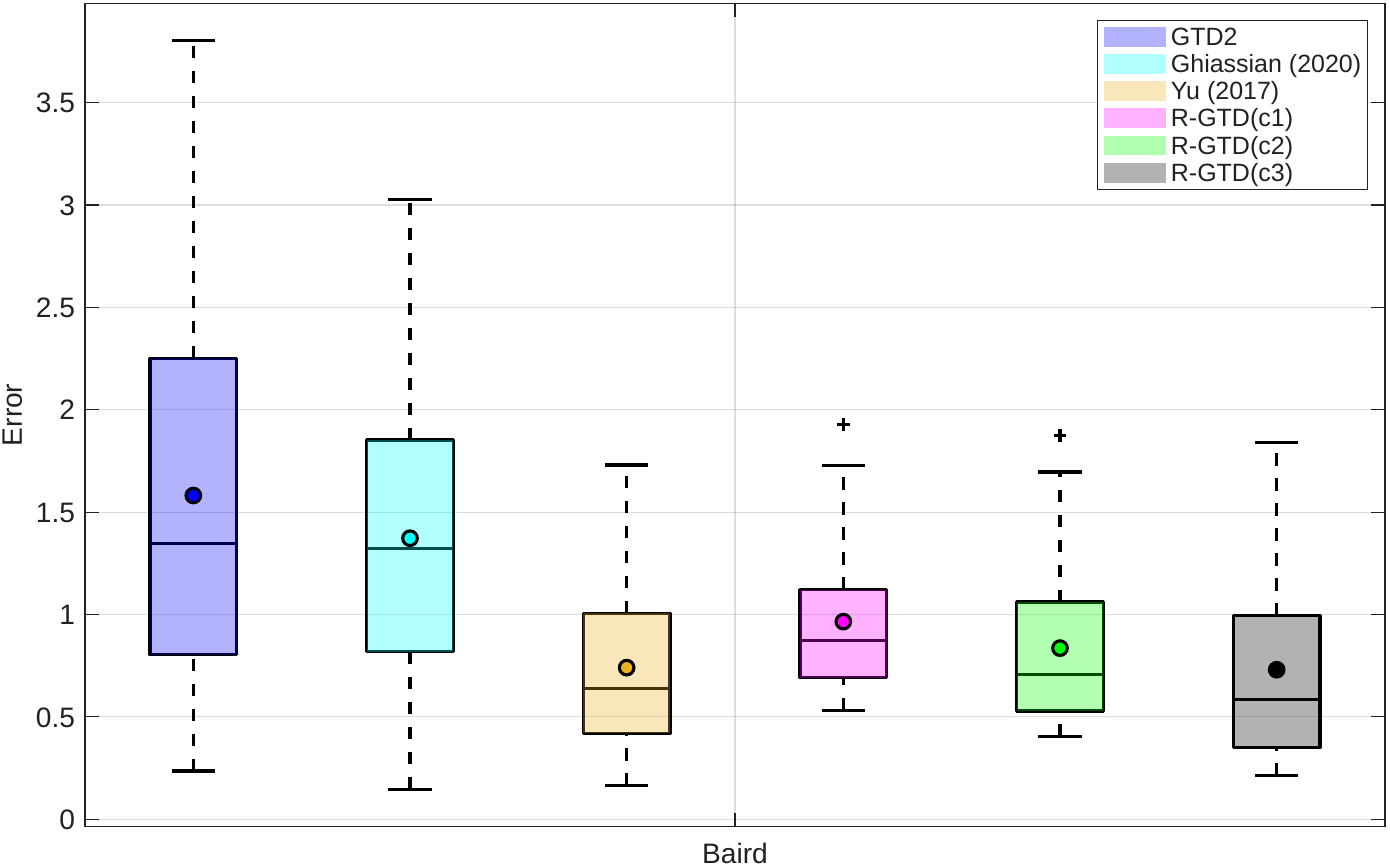}
        \caption{Performance comparison}
    \end{subfigure}
    
    \caption{
        Box-and-whisker representation of $\|\theta_k - \theta_*^\pi\|_2$
        over 30 independent runs.
    }
    \label{fig:baird}
\end{figure}

\begin{table}[h]
\centering
\caption{Final error of algorithms on Baird's counterexample.}
\label{table:baird}
\begin{tabular}{l|c}
\hline
Algorithm & Final Error ($mean \pm std$) \\ \hline
GTD2 & $1.5810 \pm 0.9377$ \\
Ghiassian (2020) & $1.3725 \pm 0.7256$ \\
Yu (2017) & $0.7403 \pm 0.4114$ \\
R-GTD(c1) & $0.9652 \pm 0.3710$ \\
R-GTD(c2) & $0.8357 \pm 0.4057$ \\
R-GTD(c3) & $\mathbf{0.7302 \pm 0.4531}$ \\
\hline
\end{tabular}
\end{table}

\newpage
\subsection{Frozen Lake}
We next evaluate the performance in a stochastic tabular environment, Frozen Lake~\citep{brockman2016openai}. 
The target policy $\pi$ is pre-trained via on-policy Q-learning~\citep{watkins1992q}, 
and the behavior policy is defined as an $\epsilon$-greedy variant of $\pi$ with $\epsilon=0.2$. 
Experiments were conducted using five different random seeds to ensure robustness.

Quantitative results in \Cref{table:frozen_lake} show that R-GTD consistently achieves lower final error 
compared to GTD2 and Ghiassian (2020), and performs competitively with Yu (2017). 
In particular, while Yu (2017) achieves the lowest mean final error, R-GTD with a larger 
regularization coefficient also attains favorable performance, demonstrating improved 
accuracy over several baselines across runs. These results indicate that R-GTD provides 
reliable and accurate value estimation even in stochastic environments with sparse rewards 
and transition uncertainty.
\begin{figure}[h]
\centering
\includegraphics[width=9cm,height=6cm]
{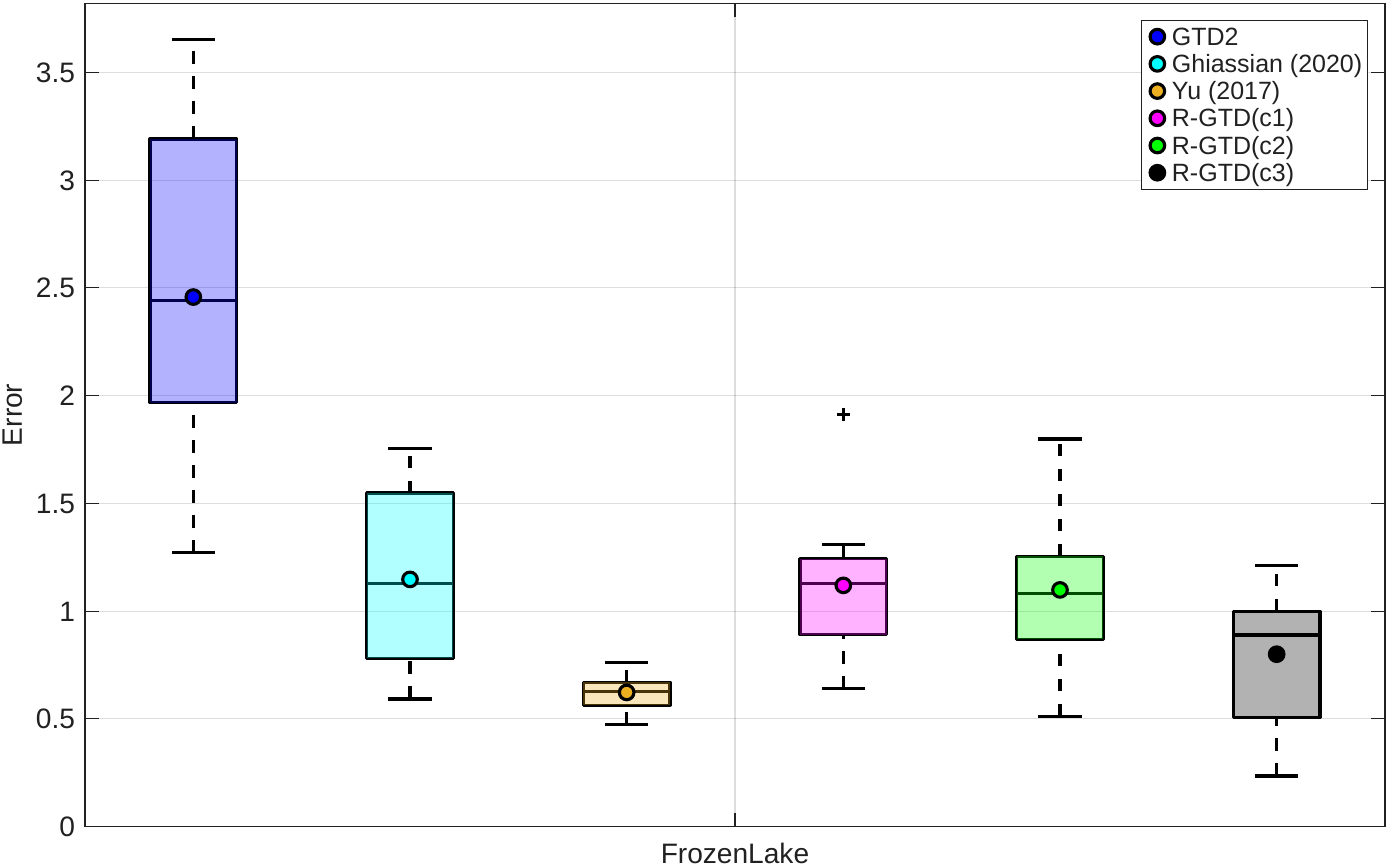}
\caption{Convergence of $\|\theta_k - \theta_*^\pi\|_2$ on the Frozen Lake environment with step-size $\alpha_k = 1/(k + 30)$ and regularization coefficients $c_1=0.2$, $c_2=0.4$, and $c_3=1$. 
}
\label{fig:frozenlake}
\end{figure}

\begin{table}[h]
\centering
\caption{Final error of algorithms on Frozen Lake environment.}
\label{table:frozen_lake}
\begin{tabular}{l|c}
\hline
Algorithm & Final Error ($mean \pm std$) \\ \hline
GTD2 & $2.4570 \pm 0.7865$ \\
Ghiassian (2020) & $1.1469 \pm 0.4201$ \\
Yu (2017) & $\mathbf{0.6224 \pm 0.0857}$ \\
R-GTD(c1) & $1.1191 \pm 0.3527$ \\
R-GTD(c2) & $1.0981 \pm 0.3739$ \\
R-GTD(c3) & $0.7997 \pm 0.3071$ \\
\hline
\end{tabular}
\end{table}

\newpage
\subsection{Robustness across varying condition numbers}
\label{sec:condition_number_sweep}
We also conducted a comprehensive condition number sweep analysis. This analysis systematically evaluates the robustness of R-GTD and various baselines under a wide spectrum of ill-conditioned regimes.

We systematically varied the condition number, allowing the feature interaction matrix (FIM) to smoothly transition from benign to highly ill-conditioned regimes. To rigorously evaluate the robustness of the algorithms, we adopted an initialization strategy that injects error specifically along the minimum singular vector. Each configuration was evaluated over 5 independent runs, with each run consisting of 100,000 iterations.

The empirical results are summarized in Tables below. The findings demonstrate two main conclusions.
First, R-GTD is consistently competitive with all evaluated baselines across the full spectrum of condition numbers. In particular, R-GTD achieves the lowest error and maintains stable variance.
Second, the performance advantage of R-GTD remains consistent as the condition number increases. This confirms that the stabilizing benefits of the proposed regularization are highly generalizable.


\begin{table*}[h]
\centering
\caption{Scale = 100, Final error of algorithms across varying condition numbers ($mean \pm std$)}
\label{tab:cond_robustness}
\resizebox{\textwidth}{!}{
\begin{tabular}{c|cccccc}
\hline
$\kappa(M)$ & GTD2 & Ghiassian (2020) & Yu (2017) & R-GTD(c1) & R-GTD(c2) & R-GTD(c3) \\ \hline
1.8e+04 & $93.38 \pm 21.95$ & $70.35 \pm 20.67$ & $56.27 \pm 10.66$ & $31.00 \pm 2.93$ & $29.23 \pm 2.34$ & $\mathbf{27.93 \pm 3.14}$ \\
8.5e+03 & $84.19 \pm 22.01$ & $86.81 \pm 16.40$ & $51.89 \pm 10.37$ & $32.53 \pm 8.32$ & $31.01 \pm 7.97$ & $\mathbf{29.79 \pm 7.53}$ \\
4.0e+03 & $142.38 \pm 21.23$ & $135.23 \pm 35.33$ & $53.21 \pm 8.71$ & $45.70 \pm 10.40$ & $44.18 \pm 9.01$ & $\mathbf{42.71 \pm 6.38}$ \\
1.8e+03 & $113.38 \pm 13.81$ & $108.65 \pm 10.68$ & $\mathbf{39.49 \pm 6.45}$ & $42.95 \pm 4.64$ & $41.26 \pm 3.34$ & $39.76 \pm 2.64$ \\
8.5e+02 & $148.44 \pm 25.93$ & $159.25 \pm 73.62$ & $57.13 \pm 12.20$ & $40.45 \pm 2.90$ & $39.34 \pm 3.06$ & $\mathbf{38.78 \pm 3.64}$ \\
4.0e+02 & $124.41 \pm 24.59$ & $125.65 \pm 20.37$ & $49.16 \pm 11.40$ & $38.65 \pm 1.82$ & $36.69 \pm 1.68$ & $\mathbf{34.96 \pm 2.99}$ \\
1.8e+02 & $95.75 \pm 27.73$ & $83.01 \pm 31.06$ & $51.76 \pm 13.99$ & $24.17 \pm 8.32$ & $22.36 \pm 8.17$ & $\mathbf{20.76 \pm 8.03}$ \\
8.5e+01 & $120.94 \pm 34.18$ & $116.86 \pm 25.05$ & $52.42 \pm 14.22$ & $27.81 \pm 5.42$ & $25.82 \pm 3.51$ & $\mathbf{24.27 \pm 2.44}$ \\
3.9e+01 & $101.46 \pm 15.93$ & $94.23 \pm 8.55$ & $44.90 \pm 7.08$ & $19.31 \pm 5.75$ & $17.85 \pm 4.06$ & $\mathbf{17.45 \pm 2.25}$ \\
1.8e+01 & $114.31 \pm 17.13$ & $73.41 \pm 18.11$ & $53.05 \pm 8.45$ & $21.31 \pm 5.74$ & $18.39 \pm 4.70$ & $\mathbf{15.11 \pm 4.00}$ \\
\hline
\end{tabular}
}
\end{table*}

\begin{table*}[h]
\centering
\caption{Scale = 200, Final error of algorithms across varying condition numbers ($mean \pm std$)}
\label{tab:cond_robustness}
\resizebox{\textwidth}{!}{
\begin{tabular}{c|cccccc}
\hline
$\kappa(M)$ & GTD2 & Ghiassian (2020) & Yu (2017) & R-GTD(c1) & R-GTD(c2) & R-GTD(c3) \\ \hline
1.8e+04 & $182.08 \pm 49.61$ & $145.66 \pm 61.35$ & $97.03 \pm 23.94$ & $39.45 \pm 6.25$ & $35.70 \pm 5.53$ & $\mathbf{34.14 \pm 5.00}$ \\
8.5e+03 & $205.04 \pm 43.69$ & $154.73 \pm 50.47$ & $107.58 \pm 19.84$ & $45.66 \pm 15.33$ & $38.68 \pm 12.25$ & $\mathbf{30.76 \pm 9.84}$ \\
4.0e+03 & $234.23 \pm 24.42$ & $212.84 \pm 29.62$ & $95.72 \pm 10.38$ & $61.13 \pm 8.28$ & $57.22 \pm 6.96$ & $\mathbf{53.84 \pm 5.39}$ \\
1.8e+03 & $248.99 \pm 35.18$ & $248.38 \pm 39.32$ & $102.79 \pm 16.63$ & $64.51 \pm 4.95$ & $59.31 \pm 4.41$ & $\mathbf{54.36 \pm 5.17}$ \\
8.5e+02 & $229.78 \pm 43.61$ & $241.77 \pm 62.74$ & $94.45 \pm 20.35$ & $62.02 \pm 11.65$ & $57.29 \pm 10.67$ & $\mathbf{51.22 \pm 8.44}$ \\
4.0e+02 & $253.45 \pm 52.43$ & $233.49 \pm 39.57$ & $109.10 \pm 22.21$ & $60.10 \pm 4.40$ & $56.71 \pm 5.33$ & $\mathbf{54.96 \pm 6.18}$ \\
1.8e+02 & $184.45 \pm 30.04$ & $182.95 \pm 51.00$ & $91.85 \pm 12.64$ & $39.27 \pm 14.66$ & $32.50 \pm 11.71$ & $\mathbf{26.32 \pm 9.11}$ \\
8.5e+01 & $213.12 \pm 38.16$ & $219.50 \pm 52.73$ & $93.35 \pm 17.91$ & $46.12 \pm 16.51$ & $41.94 \pm 14.06$ & $\mathbf{37.27 \pm 12.44}$ \\
3.9e+01 & $173.03 \pm 17.73$ & $145.80 \pm 35.54$ & $75.85 \pm 8.57$ & $43.63 \pm 6.78$ & $37.61 \pm 5.60$ & $\mathbf{31.01 \pm 5.05}$ \\
1.8e+01 & $198.77 \pm 24.50$ & $142.29 \pm 26.72$ & $92.11 \pm 11.49$ & $37.58 \pm 3.37$ & $32.93 \pm 4.70$ & $\mathbf{28.14 \pm 10.17}$ \\
\hline
\end{tabular}
}
\end{table*}

{\color{blue}
\begin{table*}[h]
\centering
\caption{Scale = 1000, Final error of algorithms across varying condition numbers ($mean \pm std$)}
\label{tab:cond_robustness}
\resizebox{\textwidth}{!}{
\begin{tabular}{c|cccccc}
\hline
$\kappa(M)$ & GTD2 & Ghiassian (2020) & Yu (2017) & R-GTD(c1) & R-GTD(c2) & R-GTD(c3) \\ \hline
1.8e+04 & $1181.67 \pm 282.87$ & $1074.19 \pm 578.73$ & $558.50 \pm 129.08$ & $203.37 \pm 66.93$ & $178.66 \pm 56.74$ & $\mathbf{154.03 \pm 50.56}$ \\
8.5e+03 & $1114.69 \pm 216.56$ & $933.68 \pm 191.36$ & $531.04 \pm 101.50$ & $199.60 \pm 78.71$ & $171.64 \pm 70.38$ & $\mathbf{149.06 \pm 68.87}$ \\
4.0e+03 & $1093.49 \pm 111.99$ & $959.53 \pm 197.04$ & $504.59 \pm 64.55$ & $237.01 \pm 76.21$ & $215.00 \pm 69.00$ & $\mathbf{199.50 \pm 68.96}$ \\
1.8e+03 & $1137.22 \pm 304.68$ & $1031.44 \pm 424.40$ & $507.83 \pm 141.80$ & $238.80 \pm 87.30$ & $208.44 \pm 75.97$ & $\mathbf{172.91 \pm 61.96}$ \\
8.5e+02 & $1133.01 \pm 266.62$ & $865.24 \pm 253.03$ & $521.64 \pm 116.49$ & $222.76 \pm 69.06$ & $206.70 \pm 53.29$ & $\mathbf{199.30 \pm 56.95}$ \\
4.0e+02 & $1159.39 \pm 226.08$ & $1151.81 \pm 284.92$ & $529.01 \pm 95.58$ & $263.20 \pm 89.91$ & $238.32 \pm 79.64$ & $\mathbf{218.26 \pm 65.28}$ \\
1.8e+02 & $1229.50 \pm 98.08$ & $1117.51 \pm 217.79$ & $574.89 \pm 39.56$ & $274.72 \pm 57.09$ & $235.77 \pm 35.80$ & $\mathbf{187.19 \pm 38.63}$ \\
8.5e+01 & $1093.71 \pm 232.09$ & $987.45 \pm 248.50$ & $507.03 \pm 114.99$ & $218.03 \pm 70.60$ & $199.07 \pm 55.70$ & $\mathbf{181.59 \pm 57.01}$ \\
3.9e+01 & $1170.70 \pm 262.15$ & $1011.09 \pm 230.20$ & $544.23 \pm 121.52$ & $205.55 \pm 50.54$ & $183.31 \pm 45.25$ & $\mathbf{172.34 \pm 42.13}$ \\
1.8e+01 & $1150.72 \pm 232.68$ & $1045.75 \pm 300.45$ & $539.49 \pm 110.71$ & $227.89 \pm 66.93$ & $207.26 \pm 63.80$ & $\mathbf{197.27 \pm 64.15}$ \\
\hline
\end{tabular}
}
\end{table*}
}

\newpage
\section{Explicit comparison with prior regularized methods}
\label{sec:comparison_tdrc}

To clarify the novelty of R-GTD, we compare it with prior regularized GTD methods.
The key distinction is not merely whether regularization is used, but \emph{where} it is applied and \emph{what geometry} it preserves in the singular feature-interaction regime.

\paragraph{1. TDRC: Regularization on the auxiliary variable.}
TDRC adds an \(L_2\) penalty to the secondary variable \(h\). In our notation, this gives
\[
    h_\beta = (B+\beta I)^{-1}(M\theta+b).
\]
The induced fixed-point condition for \(\theta\) becomes
\[
    (-M+\beta I)^\top(B+\beta I)^{-1}(M\theta+b)=0,
\]
with system matrix
\[
    (-M+\beta I)^\top(B+\beta I)^{-1}M.
\]
When \(M\) is singular, this induced system matrix can remain singular.
Thus, TDRC may improve stability through regularization of the auxiliary variable, but it does not directly guarantee uniqueness of the primary parameter \(\theta\).

\paragraph{2. Euclidean ridge methods: Regularization in parameter space.}
\citet{yu2017convergence} studies regularized GTD objectives of the form
\[
    J_p(\theta)=J(\theta)+p(\theta),
\]
where \(p(\theta)\) is a smooth convex regularizer. A common special case is the Euclidean ridge penalty
\[
    p(\theta)=\frac{\eta}{2}\|\theta\|_2^2.
\]
Similarly, \citet{zhang2020provably} use ridge regularization in their GQ2 critic.
For this Euclidean ridge choice, the induced regularized normal equation has the form
\[
    \left(M^\top B^{-1}M+\eta I\right)\theta
    =
    -M^\top B^{-1}b,
\]
This makes the surrogate problem well posed, but the regularization acts on the coordinate vector \(\theta\).
Hence, the selected solution is tied to the Euclidean parameter norm and can depend on the chosen feature parameterization.

This is different from analyzing the original singular GTD2 system.
In particular, these ridge-based formulations establish convergence or well-posedness of a modified objective, but they do not explicitly characterize how the regularized solution relates to the affine solution set of the unregularized GTD2 normal equation when \(M\) is singular. For instance, \citet{zhang2020provably} invokes a global nonsingularity-type condition, such as \(\inf_\theta |\det(A(\theta))|>0\), in its bias analysis, while
\citet{yu2017convergence} imposes an origin condition for unconstrained convergence.
These assumptions contrast with our direct geometric analysis of the strictly singular GTD2 normal equation.

\paragraph{3. R-GTD: Regularization in value-function space.}
R-GTD is derived by introducing a slack variable directly into the PBE constraint.
The resulting unconstrained form is
\[
    \min_{\theta\in\mathbb R^q}
    \frac{c}{2}
    \left\|
    \Pi_{\mathcal R(\Phi)}
    (R^\pi+\gamma P^\pi\Phi\theta-\Phi\theta)
    \right\|_{D^\beta}^2
    +
    \frac12\|\Phi\theta\|_{D^\beta}^2.
\]
Since
\[
    \|\Phi\theta\|_{D^\beta}^2=\theta^\top B\theta,
\]
the normal equation becomes
\[
    \left(M^\top B^{-1}M+\frac1cB\right)\theta
    =
    -M^\top B^{-1}b.
\]
Because \(B\succ 0\), the matrix
\[
    M^\top B^{-1}M+\frac1cB
\]
is strictly positive definite for every finite \(c>0\), even when \(M\) is singular.
Thus, R-GTD gives a unique solution by regularizing the represented value function \(\Phi\theta\), rather than the coordinate vector \(\theta\).

\paragraph{4. Solution-selection geometry in the singular regime.}
The difference between
\[
    M^\top B^{-1}M+\eta I
    \qquad \text{and} \qquad
    M^\top B^{-1}M+c^{-1}B
\]
is not merely a different choice of  regularization matrix.
It changes the geometry of the selected solution.

When \(M\) is singular, the unregularized GTD2 normal equation admits the affine solution set
\[
    \Theta_{\rm GTD2}
    =
    \{\theta : M^\top B^{-1}(M\theta+b)=0\}.
\]
Euclidean ridge methods select a solution according to the Euclidean norm of the parameter vector.
This yields a well-posed ridge surrogate, but it does not by itself explain which component of the original GTD2 solution set is identifiable from the projected Bellman equation.

By contrast, R-GTD regularizes the value function in the \(D^\beta\)-weighted feature space.
This allows us to explicitly characterize the singular limiting solution.
In particular, as \(c\to\infty\), our analysis shows that
\[
    \theta_{\rm RGTD}
    =
    \theta_{\rm GTD2}
    -
    \Pi_{\mathcal N(G)}(\theta_{\rm GTD2})
    +
    O(1/c),
    \qquad
    \theta_{\rm GTD2}\in\Theta_{\rm GTD2}.
\]
Thus, R-GTD removes the null-space component while preserving the component determined by the projected Bellman equation.
This solution-selection characterization is the main theoretical distinction from standard ridge-based regularized GTD methods, and it enables explicit error bounds in strictly singular settings.

\newpage
\section{Sensitivity analysis and practical guidelines for selecting $c$}
\label{sec:sensitivity_guideline}

As discussed in Section 4, the regularized matrix in R-GTD remains positive definite for any $c > 0$, ensuring a unique solution even when the feature interaction matrix (FIM) is singular. However, the choice of the regularization parameter $c$ introduces a fundamental bias-stability trade-off:

\begin{itemize}
    \item \textbf{Small $c$:} Strong regularization yields a highly well-conditioned and stable system, but it can introduce excessive bias, potentially degrading the empirical accuracy.
    \item \textbf{Intermediate $c$:} Provides the optimal trade-off between regularization-induced stability and bias reduction.
    \item \textbf{Large $c$:} Reduces the bias and allows the solution to approach that of GTD2, but it weakens the regularization effect, making the algorithm more susceptible to instability in ill-conditioned settings.
\end{itemize}

To provide a practical guideline for selecting $c$, we conducted a comprehensive sensitivity analysis across four different environments: a Nonsingular MDP, a Singular MDP, Baird's counterexample, and Frozen Lake. As shown in Table \ref{tab:sensitivity_c_combined}, intermediate values—specifically in the range of $c \in [1, 4]$—consistently achieve the best balance, yielding the lowest final errors across all tested scenarios. Based on these empirical results, we recommend starting with $c=1$ or $c=2$ as a robust default for practical applications, as it effectively stabilizes the learning process while keeping the residual bias minimal.

\begin{table}[h]
\centering
\caption{Sensitivity analysis of R-GTD across different values of the regularization parameter $c$. The table reports the Final Error ($\text{mean} \pm \text{std}$) across four diverse environments. Intermediate values ($c \in [1, 4]$) consistently strike the best balance between stability and bias.}
\label{tab:sensitivity_c_combined}
\vspace{2mm}
\resizebox{\textwidth}{!}{
\begin{tabular}{l|cccc}
\hline
\textbf{Algorithm} & \textbf{Nonsingular MDP} & \textbf{Singular MDP} & \textbf{Baird's Counterexample} & \textbf{Frozen Lake} \\ \hline
R-GTD (c=0.2) & $0.2254 \pm 0.0256$ & $43.8394 \pm 10.7775$ & $1.3156 \pm 0.5559$ & $1.1024 \pm 0.3450$ \\
R-GTD (c=0.4) & $0.1981 \pm 0.0217$ & $40.5704 \pm 9.0733$ & $1.2156 \pm 0.5742$ & $1.1257 \pm 0.3619$ \\
R-GTD (c=1)   & $0.1501 \pm 0.0193$ & $\textbf{37.1896} \pm \textbf{7.9214}$ & $1.1207 \pm 0.6186$ & $\textbf{0.8381} \pm \textbf{0.2262}$ \\
R-GTD (c=2)   & $0.1197 \pm 0.0237$ & $37.8707 \pm 8.9138$ & $\textbf{1.0982} \pm \textbf{0.6349}$ & $1.0654 \pm 0.4473$ \\
R-GTD (c=4)   & $\textbf{0.1073} \pm \textbf{0.0288}$ & $40.6424 \pm 10.4395$ & $1.1037 \pm 0.6285$ & $0.9186 \pm 0.3209$ \\
R-GTD (c=8)   & $0.1082 \pm 0.0316$ & $43.3760 \pm 11.7065$ & $1.1262 \pm 0.6130$ & $1.0455 \pm 0.4399$ \\
R-GTD (c=16)  & $0.1125 \pm 0.0329$ & $45.2561 \pm 12.5310$ & $1.1561 \pm 0.5979$ & $1.1277 \pm 0.2326$ \\ \hline
\end{tabular}
}
\end{table}

\newpage
\section{Singular case toy example}\label{app:toy}
\paragraph{Illustrative example (singular case).}
To illustrate the singular case, we consider a simple example 
where FIM is singular.
Specifically, we consider a simple Markov chain with three states and a single action.
We set the discount factor $\gamma=0.9$ and choose the state distribution matrix
$
D=\operatorname{diag}\!\Big(1/3,1/3,1/3\Big)$ where $\operatorname{diag}(\cdot)$ denotes a diagonal matrix with the given elements on its main diagonal.
The transition matrix, reward vector, and feature matrix are given by
\[
P=\begin{bmatrix}
\frac16 & \frac16 & \frac23\\
\frac16 & \frac16 & \frac23\\
\frac16 & \frac16 & \frac23
\end{bmatrix},
\quad
R=\begin{bmatrix}
1 \\
5 \\
5 
\end{bmatrix},
\quad
\Phi=\begin{bmatrix}
1 & 0\\
0 & 1\\
1 & 1
\end{bmatrix}.
\]
A direct computation shows that the matrix FIM is singular. 
As a consequence, the PBE admits an affine solution set
$\Theta_{\mathrm{GTD2}}$ defined in \eqref{eq:gtd2-set}. 
In particular, the solution set can be written as $
\Theta_{\mathrm{GTD2}} = \theta_{\mathrm{GTD2}} + \mathrm{Null}(G),
$ where $\theta_{\mathrm{GTD2}}$ is an arbitrary reference solution and
$\mathrm{Null}(G)$ denotes the null space of $G$. \Cref{fig:toy_examples} visualizes the closed-form trajectory of the
$\theta_{\mathrm{RGTD}}$, projected onto the value-function space
$\Phi\theta \in \mathbb{R}^2$, as the regularization parameter $c$ increases.
As $c$ grows, the trajectory of $\theta_{\mathrm{RGTD}}$ converges to the unique
GTD2 solution $\Pi_{\mathcal{N}(G)}(\theta_{\mathrm{GTD2}})$ lying in $\mathrm{Null}(G)^\perp$, as characterized in
\Cref{lem:6}.
This behavior matches the singular-case expansion in~\eqref{eq:rgtd-singular-expansion}.

\begin{figure}[h!]
  \centering
  \includegraphics[height=4.2cm]{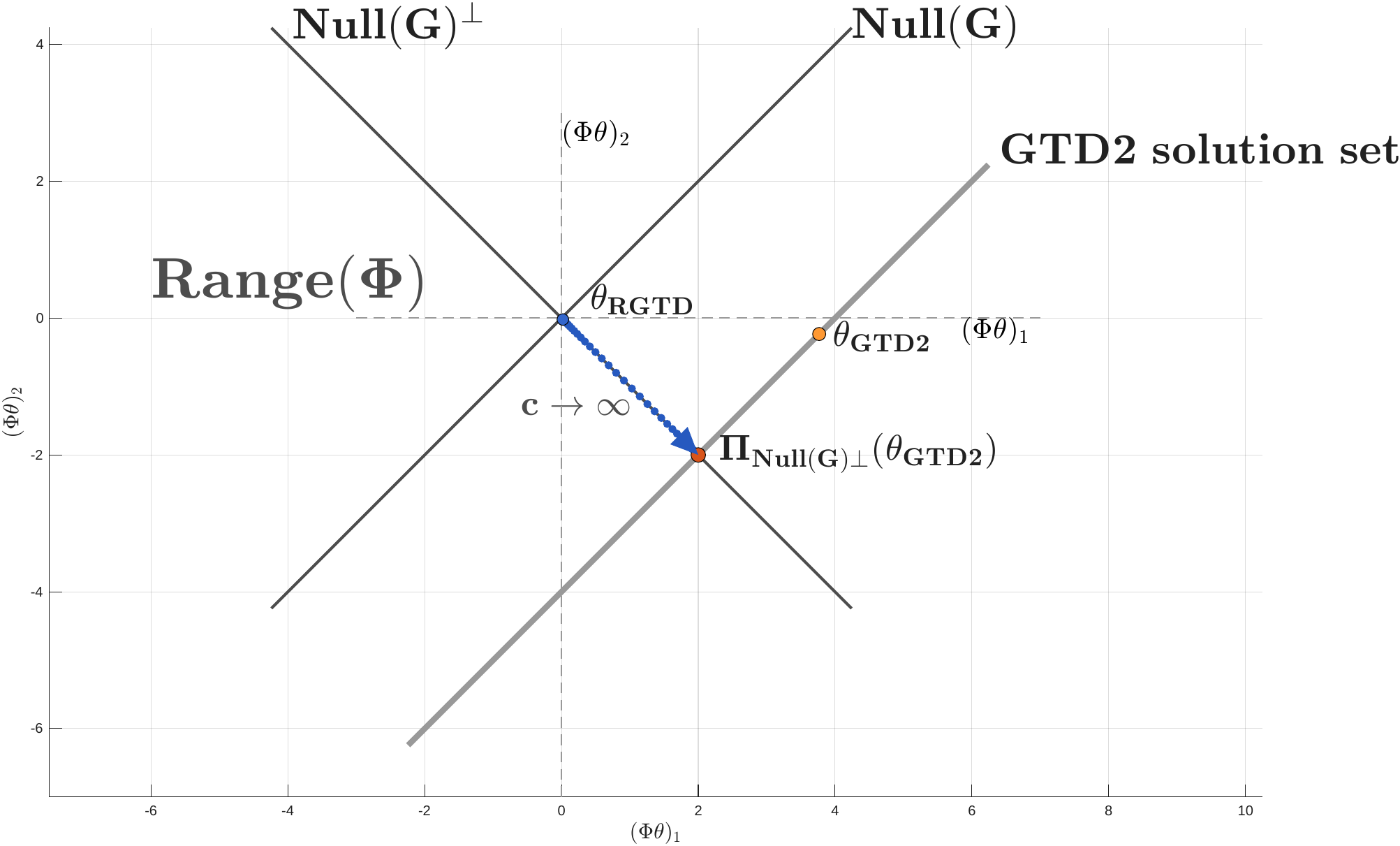}
  \caption{Solution trajectory illustrating the convergence of $\theta_{\mathrm{RGTD}}$ to $\theta_{\mathrm{GTD2}}$ as $c$ increases.}
  \label{fig:toy_examples}
\end{figure}

\newpage

\end{document}